\newcommand{\blockcomment}[1]{}
\renewcommand{\l}{\left}
\renewcommand{\r}{\right}
\DeclareMathOperator*{\argmin}{argmin}
\DeclareMathOperator*{\argmax}{argmax}
\newcommand{\nn}{\nonumber}
\newcommand{\T}{\top}
\newcommand{\E}{\mathbb{E}}
\newcommand{\R}{\mathbb{R}}
\renewcommand{\P}{\mathbb{P}}
\newcommand{\I}{\mathds{1}}
\newcommand{\cB}{\mathcal{B}}
\newcommand{\cL}{\mathcal{L}}
\newcommand{\cR}{\mathcal{R}}
\renewcommand{\a}{\alpha}
\renewcommand{\b}{\beta}
\renewcommand{\d}{\delta}
\newcommand{\e}{\varepsilon}
\newcommand{\g}{\gamma}
\newcommand{\n}{\eta}
\newcommand{\s}{\sigma}
\newcommand{\pI}{p_{\mathrm{I}}}
\newcommand{\pII}{p_{\mathrm{II}}}
\newcommand{\hfjn}{\hat{G}^{(j)}_n}
\newcommand{\uq}{\underline{q}}
\newcommand{\oq}{\overline{q}}
\newcommand{\pt}{\frac{p}{2}}
\theoremstyle{plain}
\newtheorem{thm}{Theorem}[section]
\newtheorem{prop}[thm]{Proposition}
\newtheorem{lem}[thm]{Lemma}
\theoremstyle{definition}
\newtheorem{defn}[thm]{Definition}
\theoremstyle{remark}
\title{Subgroup Discovery with the Cox Model}
\author{Zachary Izzo and Iain Melvin\thanks{NEC Labs America. Correspondence to \texttt{zach@nec-labs.com}.}}
\date{}
\begin{document}

\maketitle

\begin{abstract}
We study the problem of subgroup discovery for survival analysis, where the goal is to find an interpretable subset of the data on which a Cox model is highly accurate.
Our work is the first to study this particular subgroup problem, for which we make several contributions.

Subgroup discovery methods generally require a ``quality function'' in order to sift through and select the most advantageous subgroups. We first examine why existing natural choices for quality functions are insufficient to solve the subgroup discovery problem for the Cox model.
To address the shortcomings of existing metrics, we introduce two technical innovations: the \emph{expected prediction entropy (EPE)}, a novel metric for evaluating survival models which predict a hazard function; and the \emph{conditional rank statistics (CRS)}, a statistical object which quantifies the deviation of an individual point to the distribution of survival times in an existing subgroup. We study the EPE and CRS theoretically and show that they can solve many of the problems with existing metrics.

Having established the fundamentals of the problem, we then turn to methodology for solving it. To this end, we introduce a total of eight algorithms for the Cox subgroup discovery problem. The main algorithm, which is based on the DDGroup framework of \cite{izzo23subgroup}, is able to take advantage of both the EPE and the CRS, allowing us to give theoretical correctness results for this algorithm in a well-specified setting.
We evaluate all of the proposed methods empirically on both synthetic and real data. The experiments confirm our theory, showing that our contributions allow for the recovery of a ground-truth subgroup in well-specified cases, as well as leading to better model fit compared to naively fitting the Cox model to the whole dataset in practical settings. Lastly, to showcase the utility of the subgroups themselves beyond improving model predictions, we conduct a case study on jet engine simulation data from NASA. The discovered subgroups uncover known nonlinearities/homogeneity in the data, and which suggest design choices which have been mirrored in practice. \href{https://github.com/zleizzo/cox-subgroup}{\faGithub}
\end{abstract}

\section{Introduction}
The Cox model \citep{cox1972regression, cox1975partial} is widely used in fields ranging from biostatistics to manufacturing \citep{hosmer2008textbook, kalbfleisch2011textbook}, both for prediction \citep{huo2024ttepretraining} and for qualitative inference about the data \citep{liu2022systematic, liu2024characterizing}.
While the Cox model is appealing for its ease of interpretation, it makes restrictive modeling assumptions which are known not to hold in some practical scenarios \citep{hernan2010hazards}. Other methods for survival analysis, including many deep learning-based approaches, have been proposed by the ML community \citep{wang2019mlsurv}. These models are more expressive and therefore do not require making restrictive assumptions about the data, but this comes at the cost of interpretability. Especially in high-stakes settings, or when we want to use the model not only for prediction but for qualitative inference about the data, this may be unacceptable \citep{rudin2019stop}. Thus, it is valuable to develop methods which improve the modeling capacity of the Cox model without sacrificing interpretability.

To fill this gap, we use a subgroup discovery-based approach \citep{lipkovich2023modern}. The idea is that while the Cox model may not be a good fit for the entire dataset, there may be local subpopulations of the data (referred to as \emph{subgroups}) where the Cox model is appropriate. In order to preserve the overall interpretability of the method, it is desirable that the subgroup descriptions also be interpretable. We follow prior work \citep{izzo23subgroup}
and define the subgroups by thresholding different feature values. The resulting subgroups correspond to axis-aligned boxes in feature space. The interpretable subgroup definitions mean that the subgroups themselves may be useful in defining meaningful subpopulations with homogeneous, predictable outcomes, which may be useful for follow-up studies. In short, rather than sacrificing accuracy or interpretability, we sacrifice our ability to model the entire population simultaneously, and our goal can be summarized as:

\begin{center}
    \boxed{\textrm{Find interpretable subsets of the data in which a Cox model is an excellent fit.}}
\end{center}

In this paper, we address several fundamental questions for the problem of subgroup discovery with the Cox model.
We first look at the barriers to solving the problem using existing knowledge. Specifically, subgroup discovery methods generally rely on a ``quality function'' to evaluate potential subgroups, allowing these algorithms to sift through and select a favorable subgroup. In the context of the Cox model, two natural choices for quality functions are Harrell's C-index \citep{harrell1982cindex} and the Cox partial likelihood \citep{cox1975partial}. By means of counterexamples and empirical evaluations, we show that both of these metrics have undesirable properties which make them unsuitable for solving the Cox subgroup discovery problem.

To address the shortcomings of existing metrics, we make two novel technical contributions: the \emph{expected prediction entropy (EPE)}, an alternative metric for the performance of a Cox model (or any survival model which estimates a hazard function); and the \emph{conditional rank statistics (CRS)}, which measures the plausibility that a test point follows the same Cox model as some reference group of points. We study the theoretical properties of both of these quantities and show that they address some of the shortcomings of the existing metrics considered earlier.

Having established fundamental results on the Cox subgroup discovery problem, we next develop methodology for solving it. Indeed, the EPE and CRS, in addition to being motivated by the shortcomings of the C-index and partial likelihood, are also motivated by the general subgroup discovery framework of \cite{izzo23subgroup}. The idea is to first select a small ``core group'' of points with a good fit to the Cox model; this stage is accomplished with the EPE. Next, we examine the remaining points in the dataset and ``reject'' those which could not feasibly follow the same model as the core group; in the case of survival analysis, this can be accomplished with the CRS. Finally, we define the subgroup by expanding the core group as much as possible without including any rejected points. By taking advantage of both the EPE and CRS, we prove that this ``survival analog'' of DDGroup can provably recover a ground truth subgroup in a well-specified setting.

To the best of our knowledge, we are the first to consider this formulation of the subgroup discovery problem with the Cox model. As a result, there are no existing baseline methods, so in addition to our primary algorithm, we also introduce seven other methods for comparison. We then evaluate all eight algorithms (DDGroup + seven baselines) against the non-subgroup discovery approach of simply fitting a Cox model to the entire dataset on both real and synthetic data. These experiments confirm our theory and show the practical utility of the proposed methods. In addition to evaluating prediction performance, we also conduct a case study on simulated jet engine failure data from NASA \citep{saxena2008damage} to show the value of the subgroups themselves. We are able to recover subgroups corresponding to known nonlinearities in the data and which suggest follow-up studies and design choices which have indeed been mirrored in practice, showing the value of these subgroup methods for hypothesis generation.

\paragraph{Summary of Contributions}
\begin{enumerate}
    \item We analyze the shortcomings of existing metrics, such as the C-index and partial likelihood, for solving the problem of subgroup discovery with the Cox model.
    \item We introduce the \emph{expected prediction entropy (EPE)} as an evaluation metric for survival model accuracy.
    We derive several properties of the EPE to assist in its interpretation as an evaluation metric.
    \item We introduce the \emph{conditional rank statistics (CRS)}, which quantifies the deviation of an individual point to the distribution of survival times in an existing subgroup.
    \item We introduce eight algorithms for subgroup discovery with the Cox model. For one of the algorithms, DDGroup, which makes use of both the EPE and the CRS, we are able to show theoretical correctness guarantees.
    \item We apply these algorithms to both real and synthetic datasets and analyze the subgroups discovered by each method. We also conduct a case study to demonstrate the utility of discovered subgroups for hypothesis generation as well as improved predictive power.
\end{enumerate}

\section{Background on Survival Analysis and the Cox Model} \label{appendix: background}
A \emph{survival time} is a non-negative random variable $T$ which describes the amount of time until an event of interest. Examples of commonly modeled events include the onset of a disease, the death of a patient, the time at which a customer stops using a product or platform, or the failure of a mechanical component. The arbitrary event to be modeled is referred to as a \emph{failure}. Unlike more typical regression tasks in machine learning where the goal is to give a point estimate of a continuous-valued target, the goal of survival analysis is usually to model the \emph{distribution} of $T$ conditional on some associated covariates $X \in \R^d$.

Natural modeling targets for describing the distribution of $T$ include standard probabilistic quantities such as the probability density function (pdf) or cumulative distribution function (cdf) of $T$, conditional on the features $X$, and indeed some survival analysis methods take this approach. A more common target, however, is the \emph{hazard function}, defined as
\begin{equation} \label{eq: cts hazard}
    \lambda(t; x) = \lim_{dt \rightarrow 0} \frac{\P(t \leq T \leq t+dt \: | \: T \geq t, X=x)}{dt}.
\end{equation}
The hazard function can be thought of as an instantaneous rate of failure in the infinitesimal time interval $[t, t+dt)$, conditional on surviving up to time $t$ and on the features $X=x$. The hazard function is related to more standard quantities like the pdf or cdf. Specifically, letting $F(t, x) = \P(T \leq t \: | \: X = x)$ be the cdf and $f(t, x)$ the associated pdf (assuming one exists), we have the following identities:
\[S(t, x) := 1 - F(t, x) = \exp\l\{-\int_0^t \lambda(u, x)\, du \r\},
\quad \quad f(t, x) = \lambda(t, x) S(t, x).\]
The complement $S(t,x)$ of the cdf is referred to as the \emph{survival function}. The existence of these formulas shows that determining the hazard function completely specifies the distribution of $T|X$, as it completely specifies the pdf or cdf. In a biomedical context, the hazard function has several advantageous properties which make it a natural modeling target, including but not limited to interpretability. For instance, a patient in remission from cancer would naturally be more interested in knowing the conditional probability of a recurrence given that they have not experienced one yet, rather than an absolute probability which is more easily described by the cdf \cite{tian2015survnotes}.

The Cox model posits a particular semiparametric form for the hazard function which implies that a unit change in each covariate has a multiplicative effect on the hazard function, i.e.,
\begin{equation} \label{eq: cox}
\lambda(t; x) = \lambda_0(t)\exp(\b^\T x)
\end{equation}
for some coefficients $\beta$. Note that the higher the value of the log relative hazard $\b^\T x$, the faster the model predicts the unit with features $x$ will fail. In particular, this means that if $\b^\T x > \b^\T x'$, then the model predicts that the survival time for $x$ will be less than that of $x'$ ($x$ will fail first).

\blockcomment{
\subsection{Fitting the Cox Model with the Partial Likelihood}
This subsection follows the derivation of \cite{waagepetersen2022cox}. Suppose that the failure times are given by $t_1 < \cdots < t_m$. Let $L_i$ denote the index of the individual who fails at time $t_i$. Let $T_\ell$ denote the random failure time for the $\ell$-th individual, and define $R(t)$ to be the \emph{risk set} at time $t$, i.e. the set of individuals $R(t) = \{ \ell \: : \: T_\ell \geq t\}$ who have not failed before time $t$.

We begin by computing the probability of an \emph{individual} failure event, given the risk set at that time and the parameters $\b$. That is, we wish to compute
\begin{equation} \label{eq: likelihood 1}
\P(L_i = \ell \: | \: T_{L_i} = t_i, \: R(t_i) = R_i).
\end{equation}
When the failure times are continuous random variables, the probability that $T_{L_i} = t_i$ is zero. Thus we will instead consider
\begin{equation} \label{eq: likelihood 2}
\P(L_i = \ell \: | \: T_{L_i} \in [t_i, t_i + dt), \: R(t_i) = R_i)
\end{equation}
and let $dt \rightarrow 0$. First, observe that we have
\begin{align}
\P(L_i = \ell, \: &T_{L_i} \in [t_i, t_i + dt) \: | \: R(t_i) = R_i) \nn \\[10pt]
&= \P(T_\ell \in [t_i, t_i + dt), \: T_k > T_\ell \: \forall k \in R_i \setminus \{\ell\} \: | \: R(t_i) = R_i) \nn \\[10pt]
&= \P(T_\ell \in [t_i, t_i + dt), \: T_k > t_i + dt \: \forall k \in R_i \setminus \{\ell\} \: | \: R(t_i) = R_i) + O((dt)^2) \nn \\[10pt]
&= (\lambda(t_i; z_\ell) dt) \prod_{k\in R_i \setminus \{\ell\}} ( 1 - \lambda(t_i; z_k) dt) + O((dt)^2) \nn \\[5pt]
&= \lambda_0(t_i)\exp(z_\ell^\T \b) dt + O((dt)^2). \label{eq: failure prob num}
\end{align}
Using equation \eqref{eq: failure prob num}, we also have that
\begin{align}
\P(T_{L_i} \in [t_i, t_i + dt) \: | \: R(t_i) = R_i) &= \sum_{j \in R_i} \P(L_i = j, \: T_{L_i} \in [t_i, t_i + dt) \: | \: R(t_i) = R_i) \nn \\[5pt]
&= \sum_{j \in R_i} \lambda_0(t_i) \exp(z_j^\T \b) dt + O((dt)^2). \label{eq: failure prob denom}
\end{align}
Combining \eqref{eq: failure prob num} and \eqref{eq: failure prob denom}, we find that
\begin{equation} \label{eq: partial likelihood term}
    \P(L_i = \ell \: | \: T_{L_i} = t_i, R(t_i) = R_i) = \frac{\exp(z_\ell^\T \b)}{\sum_{j\in R_i} \exp(z_j^\T \b)}.
\end{equation}
}
Let $L_i$ denote the index of the individual who fails at time $t_i$. Let $T_\ell$ denote the random failure time for the $\ell$-th individual, and define $R(t)$ to be the \emph{risk set} at time $t$, i.e. the set of individuals $R(t) = \{ \ell \: : \: T_\ell \geq t\}$ who have not failed before time $t$. It can be shown that
\begin{equation} \label{eq: partial likelihood term}
    \P(L_i = \ell \: | \: T_{L_i} = t_i, R(t_i) = R_i) = \frac{\exp(x_\ell^\T \b)}{\sum_{j\in R_i} \exp(x_j^\T \b)}.
\end{equation}
Given the failure times $t_1 \leq \ldots \leq t_n$ and associated features $x_i$ and failure indicators $\d_i$, \cite{cox1972regression} then proposed estimating $\b$ by maximizing the log partial likelihood
\begin{equation} \label{eq: log partial likelihood}
    \cL(\b) := \sum_{1\leq i \leq n, \: \d_i=1} \l[ x_i^\T \b -  \log\l( \sum_{j \geq i} \exp(x_j^\T \b) \r) \r].
\end{equation}
While each term in the partial likelihood is a likelihood in the traditional sense, \cite{cox1975partial} showed that $\exp(\cL(\b))$ is \emph{not} a marginal or conditional likelihood (unless one makes restrictive assumptions on the censoring patterns/failure times). Nevertheless, maximizing \eqref{eq: log partial likelihood} still enjoys many of the same properties as traditional MLE, such as asymptotic normality and consistency \citep{cox1975partial}.

\section{Expected Prediction Entropy} \label{sec: epe}
As discussed in the introduction, we are guided in part by the desire to adapt the DDGroup framework of \cite{izzo23subgroup} to the survival analysis setting. The first step of this framework is to select a small ``core group'' of points where the Cox model gives a good fit. In general, we also want precise metrics to evaluate the quality of discovered subgroups, especially in realistic settings where ground truth subgroup descriptions may not be known. In this section, we discuss the shortcomings of existing metrics which appear to be natural choices for these tasks, and introduce the expected prediction entropy (EPE) to address these shortcomings.

\subsection{Inadequacy of Existing Metrics}
One of the most common measures of model accuracy in survival analysis is Harrell's C-index \citep{harrell1982cindex}, which measures the fraction of comparable units for which the earlier failure time coincides with the model's prediction. This metric is similar to the 0-1 loss in classification, and while it is a useful summary statistic, it is insensitive to the confidence of the model's prediction: a model prediction which was incorrect with 51\% confidence is penalized equally to a prediction which was incorrect with 99\% confidence.
As a result, evaluation using only the C-index can obscure the existence of small subpopulations with qualitatively different behavior from the rest of the population. In the context of subgroup discovery, these subpopulations are exactly what we want to detect, making the C-index inadequate for this task.

A natural alternative to consider is the partial likelihood.
While the partial likelihood does take model confidence into account, it is not suitable for \emph{comparing} different groups of data. This is because the value of the partial likelihood depends on the size of the risk sets $R_i$. For instance, if the first unit to fail out of 1000 units was given a predicted $10\%$ chance of being the first to fail by the model, this could be considered a very confident and accurate prediction (a $100\times$ improvement over a random guess, which would assign each unit a $1/1000$ chance of failure). On the other hand, if only two units were at risk and the model assigned a $10\%$ chance of failure to the unit which failed first, this would constitute a confident but inaccurate prediction. However, these two scenarios  contribute equally to the partial likelihood.

\subsection{EPE Definition}
Let $\lambda(t, x)$ be the true hazard function. Conditional on a failure occurring at time $t$ among two units with features $x_1$ and $x_2$, the probability that $x_1$ experiences failure is 
\begin{equation} \label{eq: failure prob}
\lambda(t, x_1)/(\lambda(t, x_1) + \lambda(t, x_2)).
\end{equation}
Given a survival model which predicts an instantaneous hazard rate $\hat{\lambda}(t, x)$, we can evaluate our model by measuring its ability to discriminate between which of two units at risk will fail.
\begin{defn}[Expected Prediction Entropy]
Let $P$ be a probability distributions over $\R^d \times \R_{\geq 0}$ which denotes the joint distribution of a (feature, survival time) pair. Let $(X, T), \, (X', T') \sim P$ be two i.i.d. draws from $P$, let $T^* = \min\{T, T'\}$, and define $Y = \I\{T \leq T'\}$. Let $\hat{\lambda}$ be an estimate for the hazard function which defines the distribution of $T$ conditional on $X$, and let $R \subseteq \R^d$ be a sub-region of the feature space. We define the \emph{expected prediction entropy (EPE)} as $\mathrm{EPE}(\hat{\lambda}, R) = $
\begin{equation} \label{eq: survival ce}
\E\l[-Y\log\frac{\hat{\lambda}(T^*, X)}{\hat{\lambda}(T^*, X) + \hat{\lambda}(T^*, X')} - (1-Y)\log\frac{\hat{\lambda}(T^*, X')}{\hat{\lambda}(T^*, X) + \hat{\lambda}(T^*, X')} \: \Bigg | \: X, X' \in R\r].
\end{equation}
\end{defn}

\paragraph{Specialization to the Cox Model}
The EPE has a particularly interesting interpretation when $\hat{\lambda}$ is given by a Cox model, i.e., $\hat{\lambda}(t, x) = \lambda_0(t)e^{\b^\T x}$. In this case, \eqref{eq: survival ce} reduces to
\begin{equation} \label{eq: cox epe}
\E\l[-Y\log\frac{1}{1+e^{-\b^\T (X - X')}} - (1-Y)\log\frac{1}{1+e^{\b^\T (X - X')}} \: \bigg| \: X, X' \in R\r].
\end{equation}
Observe that this is the standard cross entropy loss for a logistic model trained to predict the label $Y$ from the feature differences $X - X'$.
We remark that the expression \eqref{eq: cox epe} appeared in \cite{steck2007ranking} as a lower bound for the C-index. The authors use this lower bound directly to train a Cox model, instead of the standard partial likelihood.
\cite{kvamme2019time} used the same expression as an approximation to the partial likelihood, using a risk set of size 1 to avoid memory constraints during model training.
\cite{vauvelle2024differentiable} also explored this expression in the context of ranking losses, which are again used to train relative risk models.
To the best of our knowledge, we are the first to explore the usefulness and properties of the EPE as an \emph{evaluation metric}, not merely as a loss function.

\paragraph{Estimating EPE Empirically}
Let $\{(x_i, t_i, \d_i)\}_{i=1}^n \subseteq \R^d \times \R_{\geq 0} \times \{0, 1\}$ be a survival dataset with features $x_i$, event times $t_i$, and censoring indicators $\d_i$. An empirical estimate of the EPE is given by
\begin{equation} \label{eq: empirical epe}
-\frac{1}{N} \sum_{i \: : \: \d_i = 1} \sum_{j \in R_i} \log\frac{\hat{\lambda}(t_i, x_i)}{\hat{\lambda}(t_i, x_i) + \hat{\lambda}(t_i, x_j)},
\end{equation}
where $R_i = \{j \: : \: t_j > t_i\}$ is the risk set at time $t_i$ (minus the $i$-th datapoint itself) and $N = \sum_{i \: : \: \d_i = 1} |R_i|$ is the total number of comparable event times. In the case that there is no censoring (i.e., $\d_i=1$ for all $i$), \eqref{eq: empirical epe} gives an unbiased estimate for \eqref{eq: survival ce}. In the presence of censoring, the fact that we can only compare two datapoints when the first event time was uncensored may introduce a bias.

\subsection{Properties of the EPE}
We first show that when the data are generated by a Cox model, the EPE is a proper scoring rule in the sense that it is minimized if and only if the estimated Cox coefficients match the ground truth. By equation \eqref{eq: failure prob}, conditional on $T^*$, $X$, and $X'$, $Y$ is a Bernoulli random variable with parameter $p = \lambda(T^*, X)/(\lambda(T^*, X) + \lambda(T^*, X'))$. Since the cross entropy loss is a proper scoring rule, the minimum of \eqref{eq: survival ce} occurs when ratio of the estimated hazard functions equals its true value, i.e., when
\[ \frac{\hat{\lambda}(t, x)}{\hat{\lambda}(t, x) + \hat{\lambda}(t, x')} = \frac{\lambda(t, x)}{\lambda(t, x) + \lambda(t, x')} \]
for all $t, x, x'$ (potentially except for a set of measure 0). In particular, in the case of the Cox model, this implies that the Cox coefficients $\b$ must be correct. This is the content of Proposition~\ref{thm: proper scoring rule}.
\begin{restatable}{prop}{proper} \label{thm: proper scoring rule}
Suppose that the ground truth hazard function follows the Cox model, i.e., $\lambda(t, x) = \lambda_0(t)e^{\b^\T x}$. Then the EPE is minimized iff $\hat{\b}^\T (X-X') = \b^\T (X-X')$ with probability 1.
\end{restatable}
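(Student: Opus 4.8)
The plan is to reduce the EPE, via the tower property, to an \emph{expected} cross-entropy between two Bernoulli distributions, and then invoke the standard decomposition of cross-entropy into entropy plus a nonnegative KL divergence. First I would condition on $(X, X', T^*)$ inside the outer conditioning on $X, X' \in R$. By \eqref{eq: failure prob}, given these variables the label $Y$ is Bernoulli with true parameter $p = \lambda(T^*, X)/(\lambda(T^*, X) + \lambda(T^*, X'))$. The simplification specific to the Cox model is that the baseline hazard cancels in this ratio: since $\lambda(t,x) = \lambda_0(t)e^{\b^\T x}$,
\[
p = \frac{\lambda_0(T^*)e^{\b^\T X}}{\lambda_0(T^*)e^{\b^\T X} + \lambda_0(T^*)e^{\b^\T X'}} = \frac{1}{1 + e^{-\b^\T(X - X')}},
\]
which is free of $T^*$. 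The estimated Cox model likewise predicts $\hat{p} = 1/(1 + e^{-\hat{\b}^\T(X-X')})$, also free of $T^*$, so the integrand in \eqref{eq: cox epe} depends only on $(X, X')$ and we may marginalize $T^*$ out entirely.

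Next I would identify the conditional expectation of the integrand as precisely the binary cross-entropy $H(p, \hat{p}) = -p\log\hat{p} - (1-p)\log(1-\hat{p})$ between $\Bern(p)$ and $\Bern(\hat{p})$, using $\E[Y \mid X, X'] = p$. Applying the identity
\[
H(p, \hat{p}) = H(p) + D_{\mathrm{KL}}\bigl(\Bern(p) \,\big\|\, \Bern(\hat{p})\bigr),
\]
where $H(p)$ is the binary entropy, and then taking the outer expectation over $(X, X') \mid X, X' \in R$, yields
\[
\mathrm{EPE}(\hat{\lambda}, R) = \E\bigl[H(p) \mid X, X' \in R\bigr] + \E\bigl[ D_{\mathrm{KL}}(\Bern(p) \,\|\, \Bern(\hat{p})) \mid X, X' \in R\bigr].
\]
The first term is independent of $\hat{\b}$, so minimizing the EPE over $\hat{\b}$ is equivalent to minimizing the second (expected KL) term.

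Finally I would conclude from nonnegativity of the KL divergence. Since $D_{\mathrm{KL}} \geq 0$ with equality iff $\hat{p} = p$, the expected KL term is bounded below by $0$, and this lower bound is attained iff $D_{\mathrm{KL}}(\Bern(p)\,\|\,\Bern(\hat{p})) = 0$ almost surely. Because the logistic link $z \mapsto 1/(1+e^{-z})$ is strictly monotone and hence injective, $\hat{p} = p$ is equivalent to $\hat{\b}^\T(X - X') = \b^\T(X - X')$, giving both directions of the claimed ``iff.''

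The computation is routine; the step requiring the most care is the last one, namely passing from equality of the two expectations to an almost-sure statement. This hinges on two facts: that the entropy term is genuinely independent of $\hat{\b}$ (so the EPE and the expected-KL minimization problems coincide), and that the expectation of a nonnegative integrand vanishes iff that integrand is zero with probability $1$. A secondary point worth stating explicitly is the cancellation of $\lambda_0(T^*)$, which makes both $p$ and $\hat{p}$ independent of $T^*$ and thereby reduces the problem to the purely logistic comparison of \eqref{eq: cox epe}; without the proportional-hazards structure this reduction, and hence the clean characterization in terms of $\b^\T(X-X')$, would fail.
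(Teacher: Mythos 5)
Your proof is correct and follows essentially the same route as the paper's: the paper likewise conditions on $(X, X', T^*)$ to view $Y$ as Bernoulli with parameter $\lambda(T^*, X)/(\lambda(T^*, X) + \lambda(T^*, X'))$, and then invokes the fact that the cross-entropy is a proper scoring rule---which is precisely the entropy-plus-KL decomposition you spell out---to conclude that the estimated hazard ratio must match the true one, hence $\hat{\b}^\T(X-X') = \b^\T(X-X')$ almost surely. The only difference is one of detail: the paper treats properness of the cross-entropy as a known black-box fact and handles the baseline-hazard cancellation implicitly via \eqref{eq: cox epe}, whereas you make both of these steps (and the passage from a vanishing expectation of a nonnegative integrand to an almost-sure statement) explicit.
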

We remark that in the more general setting, as discussed in the derivation, the EPE is also minimized by any scalar multiple of the ground truth (full) hazard function. This is unavoidable, since the EPE is relative in nature and scaling the ground truth hazard will not change the relative probability of failure of one unit over another. However, multiplication of the hazard by a scalar also does not change the Cox coefficients, as this multiplication is absorbed into $\lambda_0(t)$, so the minimum EPE will uniquely identify the correct Cox coefficients. As the semiparametric nature of the Cox model means it is also inherently relative, this is the best we can hope for in terms of a proper scoring rule.

Next, we show that as long as the Cox model is well-specified, the EPE implicitly favors larger groups. This is beneficial in practical applications as we would like our conclusions to be applicable to as much of the data as possible.
\begin{restatable}{thm}{decepe} \label{thm: decreasing epe}
Let the joint data distribution $P$ be such that the marginal distribution of the features $P|_X$ is uniform on a region $\mathcal{B} \subseteq \R^d$. Let 
$R=\prod_{i=1}^d [a_i, b_i]$, $R'=\prod_{i=1}^d [a'_i, b'_i]$
be axis-aligned boxes such that $R, R'\subseteq \mathcal{B}$ and such that $|a_i-b_i| \leq |a'_i - b'_i|$ for all $i$. Further suppose that $T|X$ follows a Cox model with coefficients $\b$ whenever $X \in R \cup R'$. Then
$\mathrm{EPE}(\b, R') \leq \mathrm{EPE}(\b, R)$.
\end{restatable}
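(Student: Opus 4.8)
The plan is to reduce the claim to a statement about the \emph{peakedness} of two random variables and then apply a simple monotonicity argument. First I would use the Cox specialization of the EPE: since the data and the model share the true coefficients $\b$, the conditional law of $Y$ given $(X,X')$ is Bernoulli with parameter $\sigma(\b^\T(X-X'))$ (this is \eqref{eq: failure prob}, integrated over $T^*$ exactly as in the discussion preceding Proposition~\ref{thm: proper scoring rule}, where $\lambda_0(T^*)$ cancels). Hence the cross entropy in \eqref{eq: cox epe} collapses to the binary entropy of that parameter. Writing $W=\b^\T(X-X')$ and $g(w)=H(\sigma(w))$ with $H$ the binary entropy, this gives $\mathrm{EPE}(\b,R)=\E[\,g(W)\mid X,X'\in R\,]$. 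A one-line differentiation shows $g'(w)=-w\,\sigma(w)(1-\sigma(w))$, so $g$ is even and nonincreasing in $|w|$; this is the only analytic property of $g$ I will use.

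Next I would identify the law of $W$. Because $P|_X$ is uniform on $\cB\supseteq R$, conditioning on $X,X'\in R$ makes $X,X'$ i.i.d.\ uniform on $R$, so each coordinate difference $X_i-X_i'$ is symmetric triangular on $[-L_i,L_i]$ with $L_i=|b_i-a_i|$. Thus $W\stackrel{d}{=}\sum_i \b_i L_i U_i$ and $W'\stackrel{d}{=}\sum_i \b_i L_i' U_i$ for the \emph{same} i.i.d.\ symmetric triangular $U_i$ on $[-1,1]$, and since $L_i\le L_i'$ the coefficients satisfy $|\b_i L_i|\le|\b_i L_i'|$ in every coordinate. The theorem then follows from two facts: (i) if $h$ is even and nonincreasing in $|w|$ and $W'$ is \emph{less peaked} than $W$ (i.e.\ $\P(|W'|\le t)\le\P(|W|\le t)$ for all $t\ge 0$), then $\E h(W')\le \E h(W)$ — immediate, since $h(w)=\tilde h(|w|)$ with $\tilde h$ nonincreasing while $|W'|$ stochastically dominates $|W|$; and (ii) scaling up the coefficients of a linear combination of independent symmetric \emph{unimodal} variables makes it less peaked. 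Applying (i) with $h=g$ reduces everything to (ii).

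For (ii) I would increase the side lengths one coordinate at a time. Fixing coordinate $k$, I write $W=c_kU_k+Z$ with $c_k=\b_kL_k$, $c_k'=\b_kL_k'$, and $Z$ the sum over the remaining coordinates, independent of $U_k$. The crucial point is that $Z$ is symmetric and \emph{unimodal}: it is a convolution of symmetric log-concave (triangular/uniform) densities, and log-concavity — hence unimodality — is preserved under convolution. For symmetric unimodal $Z$ the map $\psi(\mu)=\P(Z\in[\mu-t,\mu+t])$ is even and nonincreasing in $|\mu|$ (the fixed-length window capturing the most mass is centred at the mode). Therefore $\P(|c_kU_k+Z|\le t)=\E_{U_k}[\psi(c_kU_k)]$, and since $U_k$ is symmetric with $|c_k|\le|c_k'|$, fact (i) applied to $h=\psi$ gives $\P(|c_k'U_k+Z|\le t)\le\P(|c_kU_k+Z|\le t)$ for every $t$. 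Iterating over $k$ — the ``remaining'' sum stays symmetric unimodal at each step, so the hypothesis regenerates — yields $\P(|W'|\le t)\le\P(|W|\le t)$ for all $t$, which is exactly (ii).

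I expect the peakedness step to be the crux. The tempting shortcut of differentiating $\E g(c_kU_k+Z)$ directly in $c_k$ does \emph{not} work: the integrand $U_k\,g'(c_kU_k+Z)$ is not sign-definite pointwise, since $g''(w)=\sigma(w)(1-\sigma(w))\,(w\tanh(w/2)-1)$ is positive in the tail region $w\tanh(w/2)>1$ where $g$ is convex, and $Z$ can land there. Only after integrating against the symmetric unimodal joint law of $(U_k,Z)$ does the correct sign emerge. Arguing at the level of the \emph{distribution} via peakedness, rather than through the EPE integral, is what makes the sign transparent, and the single external ingredient needed is the preservation of unimodality under convolution of symmetric log-concave densities.
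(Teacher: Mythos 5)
Your proposal is correct, and while it shares the paper's overall skeleton --- marginalize over $Y$ to get $\mathrm{EPE}(\b,R)=\E[H(\b^\T(X-X'))\mid X,X'\in R]$ with $H$ even and nonincreasing in $|z|$, then enlarge one side length at a time using the triangular law of the coordinate differences --- the crucial comparison step is genuinely different. The paper proves a \emph{single-crossing} (monotone density ratio) property: writing the two logit densities as convolutions $f=\psi_{|\b_1|c_1}*\varphi$, $g=\psi_{|\b_1|c_1'}*\varphi$ of triangular kernels, it shows by an explicit derivative computation (exploiting $\psi_c'=-\mathrm{sgn}/c^2$ and an algebraic factorization) that $g/f$ is monotone in $|z|$, and then converts single crossing into the expectation inequality via its Lemma on ``spreading'' (Lemma~\ref{thm: spreading ce}). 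You instead prove the weaker distributional statement that suffices --- peakedness, i.e.\ $\P(|W'|\le t)\le\P(|W|\le t)$ for all $t$ --- and you get it from classical structural facts: preservation of log-concavity (hence symmetric unimodality) under convolution, plus the one-dimensional Anderson-type fact that a fixed-length window captures the most mass of a symmetric unimodal law when centred at the mode; this is essentially the Birnbaum--Proschan peakedness argument. Note that the paper's single-crossing hypothesis implies your peakedness hypothesis (integrate the density difference), so your ``fact (i)'' is a strictly more general version of Lemma~\ref{thm: spreading ce}; conversely the paper establishes a stronger intermediate property than you need. What each buys: the paper's route is fully self-contained and computational, tied to the specific triangular kernels coming from uniform features; yours is more conceptual and more robust --- the peakedness step goes through verbatim for any symmetric log-concave (indeed symmetric unimodal) coordinate-difference laws, isolating uniformity as needed only to realize the enlarged box as a coordinatewise rescaling $|\b_iL_i|\le|\b_iL_i'|$. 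Your closing observation that a pointwise/derivative attack on $\E[g(c_kU_k+Z)]$ in $c_k$ fails because $g$ is convex in the tails (where $w\tanh(w/2)>1$) is accurate, and it correctly identifies why both proofs must argue at the level of distributions.
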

Because the minimum EPE depends on the intrinsic difficulty of distinguishing between units, we make the assumption that the features are uniformly distributed to normalize away this potential source of variation. For an easy counterexample of why this theorem may not hold without this assumption, consider features which are drawn uniformly at random from $\cB$ $1\%$, but with probability $99\%$ they are equal to some fixed $x_0\in\cB$. As long as the region $R$ does not contain $x_0$, we obtain the same behavior as in the theorem, as the features will be conditionally uniformly distributed in $R$. However, once $R$ expands to include $x_0$, with high probability we will have $X=X'$ and the failure probability is 50-50, leading to a larger EPE. This counterexample can also be smoothly approximated even with purely continuous distributions.

In general, the practical interpretation of Theorem~\ref{thm: decreasing epe} is that provided that the Cox model is always well specified, the EPE favors a group where the intrinsic difficulty of distinguishing between units is the lowest; in the case of uniformly distributed features, larger groups mean that units will tend to be farther apart, making them easier to classify.

\section{Conditional Rank Statistics} \label{sec: crs}
The EPE provides an improved metric for the fit of the Cox model to a group of points, and can be used for the core group selection step in the DDGroup framework. Next, we turn to the second phase of DDGroup, where we must ``reject'' points which cannot feasibly follow the same model as the core group. For this task, we introduce the conditional rank statistics (CRS).

\subsection{Motivation and Counterexamples}
\label{sec: epe counterex}
In the discussion of Theorem~\ref{thm: decreasing epe}, we noted that the result implies that the EPE favors regions where units are intrinsically easier for the Cox model to distinguish, and this can be impacted not only by the hazard function but also the feature distribution.
In particular, the EPE may prefer a region $R_1$ which does not follow a Cox model to a region $R_2$ which does follow a Cox model, provided that the units in $R_1$ are somehow ``intrinsically easier'' to distinguish between than the units in $R_2$. For a precise example, consider the following data generation setup: the dataset consists of 1D features in $[0, 1]$, and the ground truth hazard function is
\[
\lambda(t, x) = \begin{cases}
e^{m x} & 0 \leq x < c \\
e^{m x - b} & c \leq x \leq 1
\end{cases}.
\]
The same Cox model works in the left and right subintervals, but not with the same baseline hazard; thus, the Cox model is not well-specified for the entire interval using only the feature $x$.
Figure~\ref{fig: epe counterex} shows a heat map of the numerical value of the EPE for different regions $R$ with $m=10$, $b=2$, and $c=0.4$. We use a Monte Carlo estimate for the EPE with $n=4000$ total points. The region $[a, b]$ corresponds to the box whose bottom-left corder is at coordinate $(a, b)$ in the graph with the heat map value determined by the EPE for this region. Regions below the horizontal dashed red line are subsets of the left subinterval $[0, 0.4]$ (and therefore a Cox model holds in these regions); regions to the right of the vertical dashed red line are subsets of the right subinterval $[0.4, 1]$ (and therefore a Cox model holds in these regions as well, though not with the same baseline hazard function as the left subinterval). Regions in the upper-left quadrant (above the horizontal dashed red line and to the left of the vertical dashed red line) contain points in both the left and right subintervals, and therefore the Cox model does not hold. 
The largest region in which the Cox model is well-specified and with the lowest EPE is $[0.4, 1]$, indicated by the cyan square. However, the minimum EPE is actually obtained by the whole interval $[0,1]$, indicated by the red square.
\begin{figure}
\centering
\includegraphics[width=.5\linewidth]{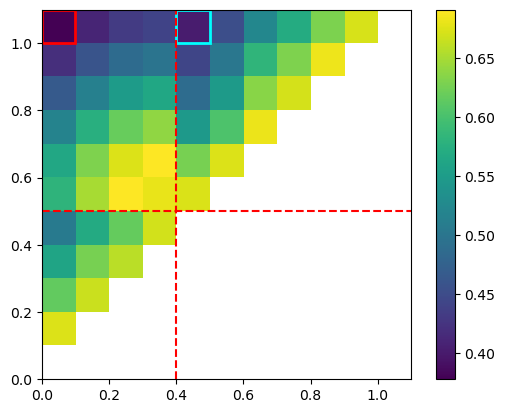}
    \caption{Counterexample motivating the CRS. The value of the cell with bottom-left corner at coordinate $(a,b)$ corresponds to the EPE of the region $[a, b]$. Regions below the horizontal dashed red line are subsets of the left subinterval $[0, c]$; regions to the right of the vertical dashed red line are subsets of the right subinterval $[c, 1]$; and regions in the upper-left quadrant (above the horizontal dashed red line and to the left of the vertical dashed red line) contain points in both the left and right subintervals. The interval $[c, 1]$ is the region of minimal EPE in which the Cox model holds (cyan boxed cell). However, the region which minimizes the EPE overall is the entire interval $[0,1]$ (red boxed cell), which does not follow a Cox model.}
    \label{fig: epe counterex}
\end{figure}

We remark that in spite of this example, we still believe the EPE to be useful for two reasons. First, although the Cox model is not well-specified on the minimizing region, it will still make good predictions.
Second,
it is not immediately clear how to turn the CRS into a single value for a group of points, so the EPE may be more practical for summarizing a collection of points.

\subsection{CRS Definition}
We restrict our attention to the case of uncensored data for now. For the Cox model with unconstrained baseline hazard function, all of the information is contained in the order of failures. Thus, we examine the probability of the rank statistics of the observed points, conditional on the estimated Cox coefficients and the observed failure order of the core group.

Let $\b$ be the fitted model coefficients and $x_1, \ldots, x_n$ be the feature vectors in the core group, labeled such that $t_1 < t_2 < \cdots < t_n$. For a ``test'' point with features $x^*$ and failure time $t^*$, we wish to compute the probability that the rank of $x^*$ is at least as extreme (high or low) as its observed value, conditional on the other observed failure times and assuming that $x^*$ follows the same Cox model as the core group. To do this, we work with the \emph{conditional rank statistics} of $x^*$, defined as:
\begin{equation} \label{eq: cond rank dist defn}
    r^c_k(x^*) = \P(t_{k-1} < t^* < t_k \: | \: x^*, x_1,\ldots,x_n; \: t_1<\cdots<t_n),
\end{equation}
where the probability is computed assuming each pair $(x, t)$ follows the same Cox model with fixed (unknown) baseline hazard function $\lambda_0(t)$ and Cox coefficients $\b$.
It will also be convenient to define the \emph{unconditional rank probabilities} of $x^*$ as
\begin{equation} \label{eq: uncond rank dist defn}
    r_k(x^*) = \P(t_1 < \cdots < t_{k-1} < t^* < t_k < \cdots < t_n \: | \: x^*, x_1,\ldots,x_n).
\end{equation}
By Bayes' rule, $r^c_k(x^*) = r_k(x^*)/(\sum_{j=1}^{n+1} r_j(x^*))$, so it suffices to compute the unconditional rank probabilities of $x^*$.
When the data are generated according to the Cox model, we have
\begin{equation} \label{eq: uncond rank prob}
    r_k(x^*) = \prod_{i=1}^{n+1} \frac{\mathrm{exp}(\b^\T x^{(k)}_i)}{\sum_{j=i}^{n+1} \mathrm{exp}(\b^\T x^{(k)}_j)},
\end{equation}
where we have defined $x_i^{(k)} = x_i$ if $i<k$, $x_i^{(k)} = x^*$ if $i=k$, and $x_i^{(k)} = x_{i-1}$ if $i>k$
(i.e., the $i$-th feature vector when $x^*$ has been ``inserted'' in the $k$-th position). Using the expression from Bayes' rule, we can then compute the conditional rank statistics for $x^*$.

\paragraph{Generalization to Censored Data} The conditional rank statistics have a straightforward generalization to the partial likelihood and censored data. We again consider the distribution of possible failure times for $x^*$ among all of the events (failure or censoring) experienced by the other points. If we know the actual rank of $x^*$ (i.e., if it failed), then we can conduct a two-tailed test after computing the statistics. If $x^*$ was censored, then we can only form a test based on its right tail.

Let $t_1 < \cdots < t_n$ be the event times for the points with features $x_1, \ldots, x_n$ in the core group, and let $\d_i$ be the corresponding failure indicators ($\d_i = \I\{x_i \textrm{ failed (was not censored) at time } y_i\}$). The partial likelihood that $x^*$ fails with event rank $k$ is
\begin{equation} \label{eq: cens uncond rank prob}
    r_k(x^*) = \prod_{i=1}^{n+1} \l( \frac{\mathrm{exp}(\b^\T x^{(k)}_i)}{\sum_{j=i}^{n+1} \mathrm{exp}(\b^\T x^{(k)}_j)} \r)^{\d_i} = \prod_{i \: : \: \d_i = 1} \l( \frac{\mathrm{exp}(\b^\T x^{(k)}_i)}{\sum_{j=i}^{n+1} \mathrm{exp}(\b^\T x^{(k)}_j)} \r),
\end{equation}
where $x_i^{(k)}$ are defined as before. Note that this is simply the standard Cox partial likelihood if $x^*$ fails as the $k$-th event. The conditional failure ``likelihoods'' $r^c_k(x^*)$ are then defined analogously to the case with no censoring, i.e., $r^c_k = r_k/\sum_{j=1}^{n+1}r_j$. We note that these are no longer actually probabilities or proper likelihoods in the presence of censoring.

\paragraph{Fast Implementation}
Computing the conditional rank probabilities naively is inefficient on large datasets, scaling as $\Omega(n^3)$. Using some recursive relationships between the unconditional rank probabilities, we can drastically reduce this runtime down to $O(n)$ which also leads to marked practical efficiency gains. Details can be found in Appendix~\ref{appendix: runtime}.

\subsection{Properties of the CRS}
The CRS is a powerful object for detecting deviations from the Cox model. However, its discrete and conditional nature makes it difficult to analyze directly. Here we provide a ``conditional Glivenko-Cantelli theorem'' for the CRS, which shows that in the absence of censoring, the CRS converges uniformly to its large-sample expectation in probability. This result allows us to analyze the effect size which the CRS can detect given a large sample.

\begin{restatable}{thm}{crsconv} \label{thm: crs convergence}
Fix a region $R$ of the feature space and assume there is no censoring in the data. Let 
\[
G(t) = \P(T(X) \leq t \: | \: X \in R)
\]
be the cdf of the marginal core group survival time distribution, i.e., where we first sample random features $X$ from the core group, then sample $T|X$. We fix a test point $x^*$ and assume that its survival time $T(x^*)$ is absolutely continuous with bounded Radon-Nikodym derivative with respect to this marginal core group survival time distribution. Let 
\[
\hat{G}_n(t) = \frac1n \sum_{i=1}^n \I\{t_i \leq t\}
\]
be the empirical cdf of the survival times given $n$ i.i.d. samples. In particular, in the case of no censoring,
\[
\P(\hat{G}_n(T(x^*)) \leq \a | \: x_1,\ldots, x_n, \: t_1 \leq \cdots \leq t_n) = \sum_{k=1}^{\a n} r^c_k(x^*)
\]
is precisely the $\alpha$ tail of the CRS for the test point $x^*$. Then as $n\rightarrow\infty$, uniformly in $\a$ we have
\[\P(\hat{G}_n(T(x^*)) \leq \a | \: x_1,\ldots, x_n, \: t_1 \leq \cdots \leq t_n) \stackrel{p}{\rightarrow} \P(G(T(x^*)) \leq \a).\]
\end{restatable}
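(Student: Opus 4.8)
The plan is to reduce the conditional probability to a comparison between the empirical cdf $\hat{G}_n$ and the true marginal cdf $G$, and then exploit Glivenko--Cantelli. Write $G^*$ for the law of $T(x^*)$ and set $H(\a) := \P(G(T(x^*)) \le \a) = G^*(\{s : G(s) \le \a\})$ for the deterministic target. The key structural observation is that the test point is drawn independently of the core sample and that $\hat{G}_n(T(x^*))$ depends on the core times only through the rank of $T(x^*)$. Consequently, conditioning on the core times themselves yields the ``oracle'' quantity
\[
\Phi_n(\a) := \P\big(\hat{G}_n(T(x^*)) \le \a \mid x_1,\dots,x_n,\, t_1,\dots,t_n\big) = G^*\big(\{s : \hat{G}_n(s) \le \a\}\big),
\]
using only that $T(x^*) \sim G^*$ is independent of $\hat{G}_n$. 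Conditioning instead on the coarser $\sigma$-algebra $\cF_n$ generated by the features and the failure \emph{order} (as the CRS does) gives the theorem's left-hand side, which by the tower property equals $\E[\Phi_n(\a) \mid \cF_n]$; in the special case where we condition on the full times it is $\Phi_n(\a)$ itself. It therefore suffices to control $\Phi_n$ uniformly in $\a$ and then pass through the conditional expectation.

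First I would apply the Glivenko--Cantelli theorem: since the core times are i.i.d.\ with cdf $G$, the uniform deviation $D_n := \sup_s |\hat{G}_n(s) - G(s)|$ tends to $0$ almost surely. On the event $\{D_n \le \e\}$, for every $\a$ we have the sandwich $\{s : G(s) \le \a - \e\} \subseteq \{s : \hat{G}_n(s) \le \a\} \subseteq \{s : G(s) \le \a + \e\}$, and applying $G^*$ gives $H(\a - \e) \le \Phi_n(\a) \le H(\a + \e)$. To convert this into a bound uniform in $\a$, I would show $H$ is Lipschitz: assuming $G$ is continuous (as in the no-ties, continuous-survival setting) the set $\{s : G(s) \le \a\}$ is an interval, and the change of variables $u = G(s)$ gives
\[
H(\a + \d) - H(\a) = \int_\a^{\a + \d} \frac{dG^*}{dG}\big(G^{-1}(u)\big)\, du \le \Big\| \frac{dG^*}{dG} \Big\|_\infty \, \d.
\]
This is exactly where the assumption of a bounded Radon--Nikodym derivative enters. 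Hence on $\{D_n \le \e\}$ we obtain $\sup_\a |\Phi_n(\a) - H(\a)| \le \|dG^*/dG\|_\infty \cdot \e$, and since $\P(D_n > \e) \to 0$ for every $\e > 0$ this gives $\sup_\a |\Phi_n(\a) - H(\a)| \stackrel{p}{\to} 0$.

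To recover the conditional rank tail itself, I would pass this uniform bound through the conditional expectation. Setting $M_n := \sup_\a |\Phi_n(\a) - H(\a)|$, the conditional Jensen inequality gives, for every $\a$, $|\E[\Phi_n(\a) \mid \cF_n] - H(\a)| = |\E[\Phi_n(\a) - H(\a) \mid \cF_n]| \le \E[M_n \mid \cF_n]$, so $\sup_\a |\E[\Phi_n(\a) \mid \cF_n] - H(\a)| \le \E[M_n \mid \cF_n]$. Since $M_n \stackrel{p}{\to} 0$ and $0 \le M_n \le 1$, bounded convergence yields $\E[M_n] \to 0$; as $\E[\E[M_n \mid \cF_n]] = \E[M_n]$ and $\E[M_n \mid \cF_n] \ge 0$, Markov's inequality gives $\E[M_n \mid \cF_n] \stackrel{p}{\to} 0$, establishing the claimed uniform convergence in probability.

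I expect the main obstacle to be the \emph{uniformity} in $\a$. Pointwise convergence of $\Phi_n(\a)$ at a fixed $\a$ follows immediately from Glivenko--Cantelli, but upgrading to a bound independent of $\a$ requires equicontinuity of the limit $H$, which is furnished precisely by the Lipschitz estimate above and hence by the bounded Radon--Nikodym derivative; continuity of $G$ is also needed to make the sublevel sets intervals and to legitimize the change of variables. By comparison, the conditional-expectation contraction is soft, but it is the step that correctly links the measure-theoretic oracle $\Phi_n$ to the conditional rank tail that the theorem actually names.
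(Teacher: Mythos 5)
Your proposal is correct, and its analytic core coincides with the paper's: both arguments run Glivenko--Cantelli on the core survival times, sandwich the sublevel sets of $\hat{G}_n$ between sublevel sets of $G$ at levels $\a\pm\e$, and convert the resulting $G$-measure bound into a bound on the law of $T(x^*)$ via the assumed bounded Radon--Nikodym derivative (the paper's ``$4C\e$'' estimate is exactly your Lipschitz bound on $H$, up to constants). Where you genuinely diverge is in the bridge from the oracle quantity $\Phi_n(\a)$ --- conditioned on the actual survival times --- to the CRS tail, which is conditioned only on the features and the failure \emph{order}. You dispatch this step with soft probability: tower property, conditional Jensen applied to $M_n=\sup_\a|\Phi_n(\a)-H(\a)|$, bounded convergence, and Markov's inequality. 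The paper instead builds this bridge by hand: it draws $N$ auxiliary copies of the survival times conditional on the features, represents the order-conditioned probability as an $N\to\infty$ limit of empirical averages of $\hat{p}_j(\a)$ over redraws sharing the observed permutation (via the strong law of large numbers), and then counts ``bad'' permutations, with $O(\sqrt{N})$ bookkeeping and a $\d/\n$ trade-off, together with a separate conditional-Markov argument excluding bad feature realizations $\cB^n_x$. Your route buys brevity and arguably rigor: it avoids interleaving the $N\to\infty$ limit with high-probability bounds, and it makes transparent that the only substantive ingredients are Glivenko--Cantelli and the Radon--Nikodym bound, with everything about rank-conditioning following from the contraction property of conditional expectation. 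What the paper's construction buys is a concrete, frequentist reading of the order-conditioned probability as an average over repeated draws consistent with the observed ranking, but it is in essence a hand-rolled derivation of the same contraction you invoke abstractly. Two minor points to tighten in your write-up: justify measurability of $M_n$ (monotonicity of $\Phi_n$ and $H$ in $\a$ lets you take the supremum over rationals), and make explicit that continuity of $G$ --- automatic here since survival times admit hazard functions --- is what guarantees the $G$-measure of $\{s \: : \: \a < G(s) \le \a+\d\}$ is at most $\d$; the paper uses this same fact implicitly.
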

We remark that the absolute continuity and bounded density of $T(x^*)$ with respect to the marginal core group survival distribution can be easily made to hold by imposing mild regularity conditions on the hazard function $\lambda(t, x)$ and the core group $R$.

\section{Algorithms} \label{sec: algs}
Having studied the theoretical underpinnings of the Cox subgroup discovery problem, we next turn to methodology. We introduce eight algorithms in total (not including the base method of just fitting the Cox model to the entire dataset). Given a training dataset and hyperparameter settings, each method returns a region $R$ which defines the subgroup, after which the Cox coefficients $\b$ are fit to the training data in this subgroup. In addition to the method descriptions, we also provide the hyperparameter ranges for each method. To ensure a fair comparison, each method has a total of 100 possible hyperparameter settings. More detail for the algorithms is given in Appendix~\ref{appendix: algs expanded}. 

\begin{itemize}
    \item \textbf{Base:} The simplest baseline is the standard Cox model fit to the entire dataset. The ``subgroup'' $R$ in this case is just the bounding box for the entire dataset. \emph{Hyperparameters:} None.

    \item \textbf{Random:} As another naive baseline, we also implement a method which randomly selects $2d$ points (where $d$ is the data dimension) and forms their bounding box to determine $R$. The number $2d$ is chosen so that each randomly selected point determines one side of the bounding box. \emph{Hyperparameters:} Random seed for selecting the $2d$ points $\in\{0, \ldots, 99\}$.

    \item \textbf{Survival Tree (ST):} Survival trees \citep{ishwaran2008survivalforest} are a classical tree-based method for survival analysis where the node splitting is performed according to the log-rank criterion. We use the regions described by the tree's leaves to define subgroups, and fit a Cox model in each leaf (similar to linear model trees \citep{wang1997lmt}). The splitting terminates when either (1) no split improves the log-rank criterion; (2) a maximum depth is reached; or (3) a minimum leaf size is reached, i.e., every valid split reduces the leaf size to below this threshold. The final subgroup $R$ is chosen as the leaf with minimum training EPE. \emph{Hyperparameters:} Maximum tree depth $\in \{1,2,\ldots,25\}$, minimum leaf size $\in \{5, 10, 20, 40\}$.

    \item \textbf{Cox Tree (CT):} Here we grow a tree using the EPE as a splitting criterion. Specifically, given a split of a parent node into left and right children, the impurity is defined as the weighted average of the EPE in the left and right children (weighted according to the size of the children). A split is chosen which minimizes this impurity and the tree building proceeds as normal.
    \emph{Hyperparameters:} Maximum tree depth $\in \{1,2,\ldots,25\}$, minimum leaf size $\in \{5, 10, 20, 40\}$.

    \item \textbf{PRIM:} The Patient Rule Induction Method (PRIM) introduced by \cite{friedman1999bump} is a general-purpose subgroup discovery method for finding axis-aligned regions of the data where a pre-specified quality function is maximized. We adapt this method to the survival analysis setting by using the negative EPE as the quality. \emph{Hyperparameters:} Peeling/pasting parameter $\a \in \{0.01, 0.02, \ldots, 0.25\}$, minimum support size $\b_0 \in \{0.005, 0.01, 0.02, 0.04\}$.

    \item \textbf{DDGroup (DG):} The DDGroup framework of \cite{izzo23subgroup} can naturally incorporate both the EPE and the CRS in a single algorithm, using the EPE for the core group selection procedure and the CRS for the rejection score/expansion phase. In particular, let $k^*$ be the rank of the failure time among core group failure times $t_1,\ldots,t_n$. We define the \emph{rank tail score}
    \[ \tau^* = \min\l\{\sum_{k=1}^{k^*} r^c_k(x^*), \: \sum_{k=k^*}^{n+1} r^c_k(x^*)\r\} \]
    and check whether $\tau^* <$ a specified rejection treshold to define the rejections labels. \emph{Hyperparameters:} Core group size $\in \{0.05n, 0.1n\}$, rejection threshold quantile $\a \in \{0.01, 0.02, \ldots, 0.5\}$.
    
    \item \textbf{C-Index DDGroup (DG-CI):} We also test three ablations of DDGroup. The first, DG-CI, uses the C-index to measure core group quality, i.e., the neighborhood with the highest C-index is selected as the core group. The C-index is also used to define conformity scores of test points to the core group during the point rejection phase of DDGroup. Specifically, given a core group $\{(x_i, t_i, \d_i)\}_{i=1}^k$, fit Cox coefficients $\b$, and a test point $(x^*, t^*, \d^*)$, the conformity score is
    \[
    s^*_{\mathrm{CI}} = \frac{\sum_{i \: : \: t_i < t^*, \, \d_i = 1} \I\{\b^\T x_i > \b^\T x^*\} + \d^* \sum_{i \: : \: t_i \geq t^*} \I\{\b^\T x_i \leq \b^\T x^*\}}{|\{i \: : \: t_i < t^*, \, \d_i = 1\}| + \d^*|\{i \: : \: t_i \geq t^*\}| },
    \]
    i.e., the fraction of core group points which are concordant with the test point and the given model. We reject points whose score $s^*_{\mathrm{CI}}$ falls below the $\a$ quantile of all scores computed over the dataset. \emph{Hyperparameters:} Core group size $\in \{0.05n, 0.1n\}$, rejection threshold quantile $\a \in \{0.01, 0.02, \ldots, 0.5\}$.

    \item \textbf{Partial likelihood DDGroup (DG-PL):} For this algorithm, we use the partial likelihood to implement the core group and rejection components of DDGroup. Specifically, the core group is selected as the neighborhood with the largest partial likelihood after fitting the Cox model. Given a core group $\{(x_i, t_i, \d_i)\}_{i=1}^k$, fit Cox coefficients $\b$, and a test point $(x^*, t^*, \d^*)$, the conformity score is
    \[
    s^*_{\mathrm{PL}} = \begin{cases}
        \frac{\exp(\b^\T x^*)}{\exp(\b^\T x^*) + \sum_{i \: : \: t_i \geq t^*} \exp(\b^\T x_i)}, &\quad \d^* = 1 \\[10pt]
        \sum_{i \: : \: t_i \geq t^*, \d_i = 1} \frac{\exp(\b^\T x^*)}{\exp(\b^\T x^*) + \sum_{j \: : \: t_j \geq t_i} \exp(\b^\T x_j)}, &\quad \d^* = 0
    \end{cases}.
    \]
    That is, the rejection score is the partial likelihood term for the test point when the test point is uncensored; otherwise, it is the sum of all possible partial likelihood terms which are consistent with a censored test observation. As with the other DDGroup versions, we reject points whose score $s^*_{\mathrm{PL}}$ is in the bottom $\a$ quantile of scores over the whole dataset. We also briefly remark that the double summation in the $\d^*=0$ case can lead to an $\Omega(k^2)$ computation when computed naively. We give an efficient implementation based on sorting the event times $y_i$ and using running partial sums which reduces the cost to $O(k\log k)$ when the times are unsorted, or $O(k)$ when the times are pre-sorted. \emph{Hyperparameters:} Core group size $\in \{0.05n, 0.1n\}$, rejection threshold quantile $\a \in \{0.01, 0.02, \ldots, 0.5\}$.
    
    \item \textbf{DDGroup without expansion (DG-NE):} An ablation of DDGroup without the expansion phase, where we just find the best core group (as measured by EPE) and do not expand. \emph{Hyperparameters:} Core group size $\in \{0.01n, 0.02n, \ldots, 1.0n\}$.
\end{itemize}
Pseudocode for each method is provided in Appendix~\ref{appendix: algs expanded}. Full implementations of each of these methods are provided in the project GitHub repo: \href{https://nj-gitlab.nec-labs.com/zach/cox-subgroup}{\texttt{https://nj-gitlab.nec-labs.com/zach/cox-subgroup}}.

\section{Performance Guarantees}
It is possible to show that DDGroup, which takes advantage of both the EPE and the CRS, recovers the correct region in a well-specified setting. Our proof relies on several assumptions:
\begin{enumerate}[label=A\arabic*.]
    \item The hazard function for the entire dataset has the form $\lambda(t; x) = \lambda_0(t)e^{h(x)}$ for some unknown risk function $h$. \label{assm: hazard func}
    \item There is no censoring in the data. \label{assm: censoring}
    \item There is a unique largest region $R^*$ which minimizes the EPE, and $R^*$ is an axis-aligned box.
    Conditional on $x\in R^*$, we have $h(x) = \b^\T x$ for some $\b$, i.e., the Cox model is well-specified. \label{assm: r star}
    \item The core group selection procedure finds a group of points which belong to $R^*$, and the Cox model fit to these points recovers the true parameters $\b$. \label{assm: core group}
    \item The error between the finite conditional rank statistics and its large-sample limit according to Theorem~\ref{thm: crs convergence} is negligible. We use this limiting distribution for the analysis, rather than the finite sample version described in Section~\ref{sec: crs}. \label{assm: crs}
\end{enumerate}
Under these assumptions, we can analyze a ``theoretically stylized'' version of DDGroup, given by Algorithms~\ref{alg: growing box} and \ref{alg: pipeline} in the appendix. Compared to the practical implementation, the main differences are:
\begin{itemize}
    \item Using the large-sample limit $G$ of the CRS to reject points as opposed to the finite sample version (Assumption~\ref{assm: crs});
    \item Using approximate, rather than exact, quantiles of $G$ to reject points;
    \item Splitting the training data so that disjoint subsets are used for finding the core group and performing the rejection/expansion step to avoid introducing intricate dependencies between the survival times (the practical implementation uses the entire training set for both steps);
    \item Assuming $\hat{\b} = \b$ when fit to the core group (Assumption~\ref{assm: core group}); and
    \item The inclusion of an additional hyperparameter, the expansion speed of the sides of the growing box, as in the theory for the original DDGroup \citep{izzo23subgroup}.
\end{itemize}
Our main theorem shows that this version of DDGroup can recover the ground truth region $R^*$ with high probability, given a large enough effect size. The full proof of this theorem, as well as a discussion on the validity of the assumptions and the strength of the convergence result, can be found in Appendix~\ref{appendix: proofs}.
\begin{restatable}{thm}{main} \label{thm: main}
Let $\hat{R}_n$ be the region output by Algorithm~\ref{alg: pipeline} on a dataset of $2n$ i.i.d. points satisfying the above assumptions. For any $\e > 0$, there is an effect size $C_\e = O(\log\e^{-1})$ such that if $|h(x) - \b^\T x| \geq C_\e$ outside of $R^*$, then there exist settings for the hyperparameters of Alg.~\ref{alg: pipeline} such that with probability at least 0.99, 
\[
R^* \subseteq \hat{R}_n \hspace{.25in} \textrm{and} \hspace{.25in} \mathrm{vol}(\hat{R}_n \setminus R^*) \leq C'\e
\]
for another constant $C'$ as $n\rightarrow\infty$.
\end{restatable}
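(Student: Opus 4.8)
The plan is to reduce the theorem to a single statement about the limiting per-point rejection probability as a function of a test point's position, and then feed this into the box-expansion machinery of \cite{izzo23subgroup}. Throughout I work with the data split described just before the theorem, so the $n$ points used for core-group selection are independent of the $n$ points used in the rejection/expansion phase; Assumption~\ref{assm: core group} then lets me treat the fitted coefficients as the true $\b$, and Assumption~\ref{assm: crs} lets me replace the finite CRS tail by its large-sample limit from Theorem~\ref{thm: crs convergence}. Two things must be established: (i) containment, $R^* \subseteq \hat{R}_n$, which requires that the expansion never halt while its frontier lies inside $R^*$; and (ii) the volume bound $\vol(\hat{R}_n \setminus R^*) \leq C'\e$, which requires that each face halt very soon after it exits $R^*$. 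Both reduce to understanding, for a test point with $h(x^*) = \b^\T x^* + \Delta$, the limiting probability that its rank tail score $\tau^*$ falls below the rejection threshold $\a$.

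The heart of the argument, and the genuinely survival-specific step, is to compute this rejection probability in the large-sample limit. By Theorem~\ref{thm: crs convergence}, the left tail of the CRS evaluated at the observed rank converges to $\tilde F_{x^*}(T(x^*))$, where $\tilde F_{x^*}$ is the cdf predicted by the fitted model (log-hazard $\b^\T x^*$) while $T(x^*)$ is drawn from the true distribution (log-hazard $\b^\T x^* + \Delta$); this uses $G^{-1}(G(T(x^*))) = T(x^*)$, so the marginal core-group cdf $G$ cancels. Writing $V = \tilde S_{x^*}(T(x^*)) = 1 - \tilde F_{x^*}(T(x^*))$ and using $S_{\mathrm{true}} = \tilde S_{x^*}^{e^\Delta}$, a one-line computation gives $\P(V \leq v) = v^{e^\Delta}$, i.e. $V \sim \mathrm{Beta}(e^\Delta, 1)$, so the two-sided score satisfies $\tau^* \to \min\{V, 1-V\}$. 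When $\Delta = 0$ (the case of $R^*$, by Assumption~\ref{assm: r star}) this is $\mathrm{Uniform}[0,1]$ and the false-rejection probability is exactly $2\a$; when $|\Delta| \geq C_\e$ the mass of $V$ is pushed to an endpoint and $\P(\tau^* \geq \a) = (1-\a)^{e^\Delta} - \a^{e^\Delta}$ becomes small. Requiring this to be at most $\e$ in both directions of $\Delta$ yields the claimed $C_\e = O(\log\e^{-1})$; the binding case is $\Delta < 0$ (a point that fails anomalously late, so $e^\Delta$ is small and $\P(\tau^* \geq \a) \approx e^\Delta\log\frac{1-\a}{\a}$).

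With the limiting rejection map in hand --- probability $2\a$ inside $R^*$ and at least $1-\e$ outside --- the remainder follows the expansion analysis. A concentration argument over the $n$ rejection-phase points shows that, with probability at least $0.99$, the empirical fraction of rejected points in any frontier slab is within a small additive error of its limiting value. Choosing $\a$ a small constant and the halting threshold at, say, $1/2$, the growing box never halts while its frontier lies in $R^*$, so it expands to cover all of $R^*$, giving $R^* \subseteq \hat{R}_n$. Once a face crosses $\partial R^*$ the slab rejection rate jumps above $1-\e > 1/2$ and the face halts almost immediately; the only way a face overshoots is by chasing the $O(\e)$-fraction of exterior points that escape rejection, and bounding the number of such points by a Chernoff estimate caps the overshoot of each of the $2d$ faces, hence --- using regularity of the feature density --- the total excess volume at $C'\e$. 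A union bound over the $2d$ faces and the few failure events delivers the stated $0.99$ probability.

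The main obstacle is this last step: turning the sharp pointwise transition in rejection probability into the \emph{geometric} conclusions $R^* \subseteq \hat{R}_n$ and $\vol(\hat{R}_n\setminus R^*) \leq C'\e$ simultaneously. The delicate point is that the $2\a$ rate of false rejections inside $R^*$ must be small enough never to trigger premature halting anywhere along the expansion path, while the $\e$-fraction of exterior escapees must be controlled uniformly along each face so that the overshoot --- and not merely the non-rejected volume --- is $O(\e)$. This is precisely where the expansion-speed hyperparameter and the coupling between $C_\e$ and $\e$ enter, and it is the part of the argument that must be imported and adapted most carefully from the original DDGroup analysis.
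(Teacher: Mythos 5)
Your distributional computation is correct, and it is in fact equivalent to the paper's own key lemmas, just expressed in a cleaner coordinate system. The identification of the limiting two-sided score as $\min\{V,1-V\}$ with $V=\tilde{S}_{x^*}(T(x^*))\sim\mathrm{Beta}(e^\Delta,1)$ reproduces exactly the content of Lemma~\ref{thm: rej quantiles} and Lemma~\ref{thm: lower bound good rej}: after the time change of Lemma~\ref{thm: time warp}, the paper's per-point quantiles satisfy $\uq(x)=G(\tilde{F}_x^{-1}(p/2))$ and $\oq(x)=G(\tilde{F}_x^{-1}(1-p/2))$, so the stylized rule ``reject when $G(t^*)\notin[\uq(x^*),\oq(x^*)]$'' is precisely your rule ``reject when $\tilde{F}_{x^*}(t^*)\notin[\a,1-\a]$'' with $p=2\a$, and your false-rejection rate $2\a$ and detection probability $1-\bigl((1-\a)^{e^\Delta}-\a^{e^\Delta}\bigr)$ are the quantities the paper bounds.

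The genuine gap is in how you feed this into the expansion step. You fix $\a$ to be a small constant and halt a face when the frontier slab's \emph{empirical rejection rate} exceeds $1/2$; but Theorem~\ref{thm: main} is about Algorithm~\ref{alg: pipeline}, whose subroutine Algorithm~\ref{alg: growing box} halts each face at the \emph{first} rejected point. Containment $R^*\subseteq\hat{R}_n$ therefore requires that \emph{no} point of $R^*$ be falsely rejected, and with constant $\a$ the number of false rejections inside $R^*$ is of order $\a n\to\infty$, so containment fails with probability tending to one; your concentration/majority argument analyzes a different algorithm. The resolution, which is the actual quantitative core of the paper's proof and exactly the step you defer to ``importing from DDGroup,'' is an $n$-dependent coupling of the two error rates: take the per-point false-positive rate $p=\pI/n$ so that a union bound over the $n$ rejection-phase points (Lemma~\ref{thm: upper bound overall fpr}) kills all false rejections in $R^*$; note that each $\e$-slab outside a face contains $m\geq c\e n$ points with high probability (Lemma~\ref{thm: points near boundary}), so each exterior point needs rejection probability only $\n=1-\pII^{1/m}\approx \log(\pII^{-1})/(c\e n)$ to force at least one rejection per slab (Lemma~\ref{thm: lower bound overall fnr}); and then verify that in the induced effect-size requirement the $\log n$ coming from $p\sim 1/n$ cancels against the $\log n$ coming from $\n\sim 1/(\e n)$ (Lemma~\ref{thm: c epsilon value}), which is what keeps $C_\e=O(\log\e^{-1})$ as $n\to\infty$. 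This cancellation is survival-specific and is not available to import from \cite{izzo23subgroup}. A symptom of the mismatch: in your fixed-$\a$ regime the binding direction is $\Delta<0$, whereas in the correct regime it flips to $\Delta>0$ (early failures require $\approx\log\e^{-1}$, late failures only $o(1)$), which is the asymmetry the paper discusses in Appendix~\ref{appendix: main thm discussion}.
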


\section{Experiments} \label{sec: experiments}
We now test our methods empirically.
Except when noted otherwise, in all of the following experiments, we used the following setup.
First, each method was supplied with a range of 100 total hyperparameter settings (specified in Section~\ref{sec: algs}). Given a hyperparameter setting and a training dataset, each method returns a (subgroup, Cox coefficients) pair. Thus, for each random train/test split of a given dataset, each method returns a list of at most 100 subgroups and associated Cox coefficients.\footnote{For some hyperparameter settings of some methods, an error may occur and the method will fail to return a region. This can happen if, e.g., not enough uncensored data is included in a region to successfully fit a Cox model. Thus there are at most 100 subgroups returned, rather than exactly 100.} From among these 100 subgroups, we filtered to those which contained at least 10\% of the training data to prevent overfitting. The remaining subgroup with the lowest training EPE was selected for each method. This subgroup and associated Cox model was then used to compute the EPE, rejection fraction, and C-index on the test data, and the results were averaged over 10 random train/test splits of the data.

\subsection{Synthetic Data} \label{sec: synth}
\paragraph{Metrics}
For the synthetic data, we have access to the ground truth region $R^*$ and can use it as the primary evaluation for each method. To this end, we follow \cite{izzo23subgroup} and define the precision and recall of an estimated region $R$ by
\[
\textrm{Precision} = \frac{\mathrm{vol}(R\cap R^*)}{\mathrm{vol}(R)}, \quad \quad \textrm{Recall} = \frac{\mathrm{vol}(R\cap R^*)}{\mathrm{vol}(R^*)}.
\]
The F1 score is then defined in the usual manner as the harmonic mean of precision and recall. An F1 score closer to 1 means better recover of the ground truth, with perfect recovery iff F1 = 1.

\paragraph{Dataset Descriptions}
We test on two synthetic datasets: Synth-Counter and Synth-Nonlinear. Synth-Counter is exactly the dataset described in Section~\ref{sec: epe counterex}. We use this to test whether the CRS can overcome some of the shortcomings of the EPE. In Synth-Nonlinear, the features are uniformly generated from $B = [-1,1]^d$, and there is a ground-truth region $R^*$ which is an axis aligned box, and conditional on $x\in R^*$ the survival time $y$ is generated according to a Cox model with baseline hazard $\lambda_0(t) \equiv 1$ and some ground truth Cox coefficients $\b^*$. Equivalently, $t|x$ is exponentially distributed with rate $e^{x^\T \b^*}$. For $x\not\in R^*$, $t|x$ also follows a Cox model but with respect to a nonlinear transformation of the features $x$.

\paragraph{Results: Synth-Counter}
The purpose of this experiment is to show that the CRS can cope with the counterexample discussed in Section~\ref{sec: epe counterex}, while the C-index and partial likelihood cannot. In this dataset, the region of minimum EPE in which a Cox model is well-specified is $R^* = [0.4, \, 1]$, but the region with minimum EPE overall is the entire interval $\cB = [0, 1]$. For this dataset only, rather than selecting the region with minimum training EPE for each method (which would generally return the whole interval $\cB$), we simply select the region with the highest F1 score among the 100 regions returned by the method for each of the 10 replicates, then average over the 10 replications. The results are shown in Table~\ref{tab: synth-counter}. We see that by leveraging the CRS, it is possible to tune DDGroup so that it recovers the ground truth region $R^*$. On the other hand, relying only on the C-index or partial likelihood, it is not possible to tune these ablations of DDGroup to recover $R^*$.

\begin{table}
    \centering
    \caption{Best F1 score obtained by different implementations of DDGroup in the example from Fig.~\ref{fig: epe counterex}. By using the CRS, it is possible to tune DDGroup to find the correct subgroup. Even with the best possible hyperparameter tuning, the other versions which do not take advantage of the EPE and CRS cannot recover $R^*$.} \label{tab: synth-counter}
    \begin{tabular}{l|cccc}
        \toprule
        Metric & Base & DG-PL & DG-CI & DG \\
        \midrule
        F1 ($\uparrow$) & 0.75 (0.00) & 0.81 (0.10) & 0.76 (0.04) & 0.94 (0.01) \\
        \bottomrule
    \end{tabular}
\end{table}

\paragraph{Results: Synth-Nonlinear}
With this dataset, we study the more common scenario where we will select a subgroup according to the minimum training EPE.
As before, DDGroup has the best performance in terms of ability to recover the ground truth $R^*$. With the exception of the survival tree and PRIM, all methods exhibit significant improvement over the baseline (non-subgroup) approach in terms of F1 score, EPE value, and C-index. We note that although DDGroup obtains the best region recovery, DDGroup without the expansion phase obtains a lower EPE. We observed that this is due to DDG-NE selecting a region which is a proper subset of $R^*$ (precision $=1$, but lower recall). This conservative choice leads to slightly improved EPE at the cost of full recovery of $R^*$. In contrast, the full version of DDGroup---which optimizes for good EPE and reasonable fit to the Cox model simultaneously---is less conservative and recovers the full region.
\begin{table}[ht]
\centering
\caption{Comparison of methods on F1, Test EPE, and Test C-index. DDGroup (DG) obtains the best performance in terms of recovering $R^*$. The other methods generally have worse performance, though they all perform better than the baseline (non-subgroup) approach with the exception of the survival tree and PRIM.}
\label{tab: synth-nonlinear}
\begin{tabular}{l|ccc}
\toprule
Method & F1 ($\uparrow$) & Test EPE ($\downarrow$) & Test C-index ($\uparrow$) \\
\midrule
\rowcolor{gray!10}Base             & 0.29 (0.00) & 0.69 (0.00) & 0.54 (0.00) \\
\midrule
                  ST               & 0.28 (0.03) & 0.66 (0.01) & 0.59 (0.01) \\
\rowcolor{gray!10}PRIM             & 0.30 (0.01) & 0.69 (0.00) & 0.54 (0.00) \\
                  CT               & 0.33 (0.08) & 0.64 (0.04) & 0.60 (0.03) \\
\rowcolor{gray!10}DDG-PL           & 0.46 (0.03) & 0.64 (0.02) & 0.61 (0.02) \\
                  Random           & 0.75 (0.04) & 0.56 (0.03) & 0.74 (0.02) \\
\rowcolor{gray!10}DDG-CI           & 0.89 (0.03) & 0.39 (0.02) & 0.86 (0.01) \\
                  DDG-NE           & 0.90 (0.01) & \cellcolor{green!20}0.29 (0.01) & \cellcolor{green!20}0.88 (0.00) \\
\rowcolor{gray!10}\textbf{DDGroup} & \cellcolor{green!20}0.97 (0.01) & 0.38 (0.02) & 0.87 (0.01) \\
\bottomrule
\end{tabular}
\end{table}

\subsection{Real Data} \label{sec: real}

\paragraph{Additional Metrics}
Next, we perform experiments on real data to show the practical utility of our methods.
As discussed in Section~\ref{sec: crs}, practitioners may be interested in not only high predictability of the data (represented by low EPE) but also a qualitatively good fit of the Cox model to the data. When a ground truth region in which a Cox model is well-specified is unknown, we must resort to an observable proxy. Thus, we define the \emph{rejection fraction}, which measures the number of individual points in a subgroup which deviate from the Cox model based on the CRS. See Appendix~\ref{appendix: experiments} for the definition.

\paragraph{Datasets}
We used datasets from the sksurv Python package \citep{sksurv} as well as the METABRIC dataset \citep{curtis2012metabric}.
Each of these datasets studies patient survival under various conditions. A common design choice in clinical statistical analyses is to adjust for a single covariate to observe its effect, and generally simple subgroup definitions are preferred \citep{friedman2015fundamentals}. Thus, in each of these datasets, we adjust for a single covariate and define a subgroup in terms of age. We chose age as the subgroup-defining variable as age is known to modulate/interact with other biomarkers for many health-related outcomes \citep{belloy2023alzage,  mak2023cancerage, moqri2023age}. In Table~\ref{tab: real}, the results are presented as (base dataset name)-(adjusted covariate). Dataset descriptions are provided in Appendix~\ref{appendix: datasets}.

\paragraph{Results}
The results are shown in Table~\ref{tab: real}. We include a version of the table with standard errors in Appendix~\ref{appendix: extra results}.
In 5/7 datasets, DDGroup finds subgroups with the lowest EPE, indicating good predictive performance and few confidently inaccurate predictions. DDGroup also tends to have a low rejection fraction, indicating a small fraction of ``outliers'' within the subgroup which have very low likelihood of occurrence if the Cox model is correct. In fact, DDGroup is on the Pareto frontier of (EPE, rejection fraction) for every dataset, i.e., there is no method which is strictly better than DDGroup in terms of both EPE and rejection fraction for any dataset.

The results also show that while the EPE and C-index are indeed correlated, the correlation is not perfect (e.g., GBSG2-tsize). We believe that the C-index is still a useful metric to report in practice, especially as it is likely easier to interpret for non-experts. Nevertheless, when model calibration is an important consideration (as it often will be in high-stakes clinical settings), the EPE provides a more fine-grained metric which explicitly accounts for model confidence, and which can detect confident but inaccurate predictions which may be missed by the C-index.

We note that the EPE of a null model (i.e., a Cox model with $\b=0$, which gives any pair of units a 50-50 chance of which will fail first, i.e., a random guess) is $-\log \frac12 \approx 0.6931$. In some of the datasets, the baseline Cox model fit to the entire dataset, as well as some of the baselines, perform no better than a random model.

While DDGroup generally has the best performance, we also find that the Cox tree can be useful in some scenarios. CT finds subgroups which have reduced EPE compared to the baseline in 5/7 datasets. While the best subgroup found by CT is generally of lower quality than DDGroup, CT has the advantage of providing a model for the entire dataset, as the entire feature space is divided into the leaves of the tree, each of which has a local Cox model. The survival tree also has this feature, though its performance is generally inferior to the Cox tree. Thus, when a global model is desired, the Cox tree may be a useful alternative.

\begin{table}[h!]
    \centering
    \caption{Results on the real datasets. The best value for each metric is highlighted in green for each dataset. DDGroup tends to find the subgroup with lowest EPE and offers a good balance between EPE and Cox model fit. The other baselines are often capable of improving of the basic Cox model fit to the entire data, though in some cases they do not perform better than a null model. While the Cox tree obtains higher EPE than DDGroup on average over these datasets, the fact that it returns a global interpretable model may still make it attractive in some cases.}
    \resizebox{\textwidth}{!}{
    \begin{tabular}{ll|ccccccccc}
        \toprule
        & & Base & Rand & PRIM & ST & CT & DG-PL & DG-CI & DG-NE & DG \\
        \midrule
        \rowcolor{gray!10}AIDS-CD4 & EPE ($\downarrow$) & 0.54 & 0.59 & 0.53 & 0.58 & 0.50 & 0.54 & 0.59 & 0.46 & \cellcolor{green!20}0.43 \\
        &Rej@10\% ($\downarrow$) & 0.07 & 0.11 & 0.06 & 0.08 & 0.12 & 0.08 & 0.10 & \cellcolor{green!20}0.05 & 0.07 \\
        \rowcolor{gray!10}&C-Index ($\uparrow$) & 0.71 & 0.72 & 0.73 & 0.71 & 0.74 & 0.72 & 0.68 & 0.75 & \cellcolor{green!20}0.76 \\
        &Size ($\uparrow$) & 1.00 & 0.17 & \cellcolor{green!20}0.87 & 0.32 & 0.17 & 0.73 & 0.55 & 0.14 & 0.20 \\
        \midrule
        \rowcolor{gray!10}AIDS-Karnof&EPE ($\downarrow$) & 0.62 & 0.58 & 0.62 & 0.64 & 0.72 & 0.62 & 0.72 & 0.77 & \cellcolor{green!20}0.38 \\
        &Rej@10\% ($\downarrow$) & 0.09 & \cellcolor{green!20}0.01 & 0.15 & 0.11 & 0.19 & 0.07 & 0.06 & 0.07 & 0.01 \\
        \rowcolor{gray!10}&C-Index ($\uparrow$) & 0.66 & 0.68 & 0.66 & 0.64 & 0.68 & 0.66 & 0.69 & 0.67 & \cellcolor{green!20}0.84 \\
        &Size ($\uparrow$) & 1.00 & 0.13 & \cellcolor{green!20}0.96 & 0.21 & 0.44 & 0.82 & 0.65 & 0.13 & 0.15 \\
        \midrule
        \rowcolor{gray!10}AIDS-prior&EPE ($\downarrow$) & 0.70 & 0.67 & 0.70 & 0.69 & 0.68 & 0.66 & 0.70 & 0.67 & \cellcolor{green!20}0.65 \\
        &Rej@10\% ($\downarrow$) & 0.14 & 0.21 & 0.06 & 0.06 & 0.06 & 0.11 & 0.14 & \cellcolor{green!20}0.05 & 0.13 \\
        \rowcolor{gray!10}&C-Index ($\uparrow$) & 0.46 & 0.55 & 0.46 & 0.56 & 0.57 & 0.52 & 0.46 & 0.57 & \cellcolor{green!20}0.58 \\
        &Size ($\uparrow$) & 1.00 & 0.16 & 0.94 & 0.10 & 0.16 & 0.64 & \cellcolor{green!20}1.00 & 0.12 & 0.16 \\
        \midrule
        \rowcolor{gray!10}GBSG2-tsize&EPE ($\downarrow$) & 0.68 & 0.62 & 0.68 & 0.68 & 0.63 & 0.68 & 0.65 & 0.61 & \cellcolor{green!20}0.61 \\
        &Rej@10\% ($\downarrow$) & 0.15 & 0.13 & 0.08 & \cellcolor{green!20}0.04 & 0.05 & 0.13 & 0.11 & 0.05 & 0.07 \\
        \rowcolor{gray!10}&C-Index ($\uparrow$) & 0.57 & 0.62 & 0.55 & 0.58 & 0.62 & 0.58 & \cellcolor{green!20}0.64 & 0.64 & 0.64 \\
        &Size ($\uparrow$) & 1.00 & 0.15 & \cellcolor{green!20}0.94 & 0.26 & 0.11 & 0.90 & 0.39 & 0.12 & 0.12 \\
        \midrule
        \rowcolor{gray!10}MBRIC-MKI&EPE ($\downarrow$) & 0.69 & 0.70 & 0.69 & 0.70 & 0.72 & 0.69 & 0.70 & 0.72 & \cellcolor{green!20}0.69 \\
        &Rej@10\% ($\downarrow$) & 0.08 & 0.12 & \cellcolor{green!20}0.08 & 0.14 & 0.13 & 0.08 & 0.09 & 0.15 & 0.11 \\
        \rowcolor{gray!10}&C-Index ($\uparrow$) & 0.49 & 0.50 & 0.49 & 0.52 & 0.49 & 0.49 & 0.49 & 0.49 & \cellcolor{green!20}0.54 \\
        &Size ($\uparrow$) & 1.00 & 0.17 & 0.84 & 0.26 & 0.25 & \cellcolor{green!20}1.00 & 0.64 & 0.11 & 0.38 \\
        \midrule
        \rowcolor{gray!10}VLC-Karnof & EPE ($\downarrow$) & 0.58 & 0.42 & 0.58 & 0.45 & 0.22 & 0.32 & 0.34 & \cellcolor{green!20}0.19 & 0.33 \\
        &Rej@10\% ($\downarrow$) & \cellcolor{green!20}0.04 & 0.12 & 0.03 & 0.08 & 0.21 & 0.11 & 0.10 & 0.20 & 0.09 \\
        \rowcolor{gray!10}&C-Index ($\uparrow$) & 0.69 & 0.84 & 0.68 & 0.77 & 0.93 & 0.92 & 0.87 & \cellcolor{green!20}0.93 & 0.87 \\
        &Size ($\uparrow$) & 1.00 & 0.17 & \cellcolor{green!20}0.88 & 0.23 & 0.20 & 0.23 & 0.22 & 0.18 & 0.28 \\
        \midrule
        \rowcolor{gray!10}WHAS-DBP&EPE ($\downarrow$) & 0.67 & 0.83 & 0.67 & 0.65 & 0.64 & 0.67 & \cellcolor{green!20}0.55 & 0.74 & 0.62 \\
        &Rej@10\% ($\downarrow$) & 0.07 & \cellcolor{green!20}0.01 & 0.13 & 0.21 & 0.16 & 0.04 & 0.02 & 0.20 & \cellcolor{green!20}0.01 \\
        \rowcolor{gray!10}&C-Index ($\uparrow$) & 0.61 & 0.50 & 0.61 & 0.68 & 0.66 & 0.64 & \cellcolor{green!20}0.75 & 0.63 & 0.70 \\
        &Size ($\uparrow$) & 1.00 & 0.15 & \cellcolor{green!20}0.98 & 0.15 & 0.16 & 0.62 & 0.20 & 0.15 & 0.13 \\
        \bottomrule
    \end{tabular}
    }
    \label{tab: real}
\end{table}

\paragraph{Case Study: NASA Jet Engine Data}
We performed an additional in-depth case study on the simulated jet engine failure data from \cite{saxena2008damage} to show the utility of the interpretable subgroups discovered by our methods. (The data are available through \href{https://www.kaggle.com/datasets/behrad3d/nasa-cmaps}{Kaggle}.) The goal of the dataset is to predict the remaining useful life (RUL) of a jet engine as a function of different sensor measurements and operating conditions. The data are originally presented as a time series with different sensor measurements recorded as the engine is used. We converted the data into a standard survival analysis format by using each collection of sensor readings as a fixed covariate vector, then using the remaining time until failure from that reading as the survival time. In this dataset, all failures are observed, so there is no censoring.

As discussed by \cite{saxena2008damage}, in this dataset, there are known operating conditions (related to different altitudes, flight speeds, and air temperatures) where the relationship between sensor readings and RUL will differ qualitatively. Thus, the different operating conditions serve as natural subgroups of the data which we can try to recover. Note that the data are created by simulating physical processes leading to the fault of the engine; in particular, the simulation has nothing to do with the Cox model at face value, thereby testing the robustness of our methods to misspecification.

To create the dataset, we subsampled data where the flight speed and air temperature were in a fixed state, with the goal to recover the two possible altitude operating conditions. The creation of a new datapoint from every time point in the original time series leads to a very large dataset. Thus, from among the remaining data, we further randomly subsampled down to $n=5000$ datapoints to avoid excessive runtimes. As in the other experiments, we perform an 80-20 train/test split and average all results over 10 such splits.

The Cox model adjusts for two sensor readings: the engine core speed (``Nc,'' in RPM) and the corrected fan speed (``NRf,'' also in RPM).\footnote{Note: In modern jet engines, the core and fan are two separate components which can rotate at different speeds, leading to potentially different values for these covariates. See \href{https://www.grc.nasa.gov/WWW/K-12/airplane/aturbf.html}{https://www.grc.nasa.gov/WWW/K-12/airplane/aturbf.html} for an explanation.} Since the data were not uniformly distributed over the covariate space, we measure the precision and recall of a discovered subgroup according to the number of datapoints in the ground truth and estimated regions, rather than the region volumes. Specifically, given a ground truth region $R^*$, an estimate $\hat{R}$, and the dataset of features $\{x_i\}_{i=1}^n$, we define
\[
\textrm{Precision}(\hat{R}, R^*) = \frac{\# \{i \in [n] \: : \: x_i \in \hat{R} \cap R^*\}}{\# \{i \in [n] \: : \: x_i \in \hat{R}\}}, \quad \quad \textrm{Recall}(\hat{R}, R^*) = \frac{\# \{i \in [n] \: : \: x_i \in \hat{R} \cap R^*\}}{\# \{i \in [n] \: : \: x_i \in R^*\}}.
\] 
In the dataset, there are two potential ground truth regions $R^*_{\mathrm{lo}}$ and $R^*_{\mathrm{hi}}$ corresponding to lower and higher altitude flights. Given an estimate $\hat{R}$, we associate it to whichever of the two ground truth subgroups yields a higher precision and recall.

The results are shown in Table~\ref{tab: nasa}.
Several of the methods find subgroups with perfect precision. This means that the subgroup belongs entirely to one of the two possible operating conditions, which is ideal; there is complete separation between the two qualitatively different regimes.
The Cox tree performs best on this dataset, obtaining a low EPE value and the highest recall among methods with perfect precision. Some of the baseline methods (Random and the C-index ablation of DDGroup) find subgroups with slightly lower EPE than DDGroup, at the cost of identifying a smaller fraction of the ground truth region (lower recall).
We remark that because the Cox model is not necessarily a perfect fit to this data (which was generated via simulations of physical equations), we do not expect that all of the data within a single operating regime will be grouped together by the Cox model.

The methods which obtain perfect precision all return subsets of the high altitude operating conditions. Interestingly, there is a \emph{qualitative} as well as quantitative change in the Cox coefficients for these two settings. Without accounting for the subgroup, both coefficients have small positive values. However, for the high altitude subgroups, the coefficient on the core speed has a larger positive value, while the coefficient on fan speed becomes negative. There is a natural physical interpretation of this observation. On average, increased usage of both the fan and core components will lead to degradation of the engine due to mechanical wear; this is reflected by the small positive coefficients of the Cox model fit to the whole dataset. However, by taking a subgroup approach, we can consider a \emph{different baseline hazard function} for different subgroups. High altitude will likely require greater engine utilization to maintain overall. However, conditional at flying at high altitude, a greater usage of the fans may lead to less strain on the engine than relying on the core. Thus, the subgroup analysis approach suggests an actionable plan for engine usage at high altitudes: increase reliance on the fan, and reduce reliance on the core if possible. This indeed seems to be the trend in modern jet engines which have a reduced core size (see, e.g., \href{https://www.nasa.gov/aeronautics/smaller-is-better-for-jet-engines/}{https://www.nasa.gov/aeronautics/smaller-is-better-for-jet-engines/}). We remark that as with any data-driven approach to hypothesis generation, these conclusions should be verified with domain experts and checked experimentally.

\begin{table}
\centering
\caption{Results on the NASA engine RUL dataset. The best EPE and C-index values are highlighted in green, as well as the methods which achieve perfect precision. The highest recall among those methods with perfect precision (Cox tree) is also highlighted in green. The Cox tree excels in terms of most metrics for this dataset. Several methods are able to discover subgroups which agree with the known ground truth operating conditions and which reveal qualitatively different behavior from the overall dataset.} \label{tab: nasa}
\resizebox{\textwidth}{!}{
\begin{tabular}{l|ccccccccc}
\toprule
 & Base & Rand & PRIM & ST & CT & DG-PL & DG-CI & DG-NE & DG \\
\midrule
\rowcolor{gray!10}EPE ($\downarrow$) & 0.65 (0.00) & 0.53 (0.01) & 0.65 (0.00) & 0.53 (0.01) & \cellcolor{green!20}0.52 (0.01) & 0.64 (0.01) & 0.54 (0.01) & 0.51 (0.02) & 0.53 (0.01) \\
C-Index ($\uparrow$) & 0.60 (0.00) & 0.735 (0.01) & 0.60 (0.00) & 0.73 (0.01) & 0.74 (0.01) & 0.62 (0.01) & 0.73 (0.01) & \cellcolor{green!20}0.75 (0.01) & 0.73 (0.01) \\
\rowcolor{gray!10}Precision ($\uparrow$) & 0.63 (0.01) & \cellcolor{green!20}1.00 (0.00) & 0.64 (0.01) & \cellcolor{green!20}1.00 (0.00) & \cellcolor{green!20}1.00 (0.00) & 0.67 (0.04) & \cellcolor{green!20}1.00 (0.00) & \cellcolor{green!20}1.00 (0.00) & \cellcolor{green!20}1.00 (0.00) \\
Recall ($\uparrow$) & 1.00 (0.00) & 0.24 (0.02) & 0.91 (0.06) & 0.40 (0.06) & \cellcolor{green!20}0.73 (0.10) & 0.92 (0.08) & 0.29 (0.04) & 0.16 (0.01) & 0.36 (0.08) \\
\rowcolor{gray!10}$\b_{\mathrm{Nc}}$  & 0.31 (0.01) &  2.35 (0.10) & 0.32 (0.01) &  2.00 (0.09) &  2.19 (0.08)                  & 0.53 (0.22) &  2.17 (0.09) &  2.51 (0.05) &  2.23 (0.09) \\
                  $\b_{\mathrm{NRf}}$ & 0.30 (0.01) & -1.19 (0.07) & 0.30 (0.01) & -0.94 (0.10) & -1.08 (0.05) & 0.15 (0.15) & -1.06 (0.06) & -1.28 (0.04) & -1.10 (0.06)\\
\bottomrule
\end{tabular}
}
\end{table}

\section{Related Work} \label{sec: related work}
The Cox model \citep{cox1972regression, cox1975partial} is a standard method for survival analysis, and it has found widespread use in practice due to its ease of interpretation. Nevertheless, the interpretability comes at the cost of strong modeling assumptions which may be violated in practice \citep{hernan2010hazards}.
Within the machine learning community, there has been a great deal of effort to apply modern ML techniques to survival data and provide more powerful and flexible models \citep{katzman2018deepsurv, hu2023conditional, wu2023frailty, bleistein2024dynamic, bertsimas2022trees, kim2023survival, chen2024kernets, lee2018deephit, rindt2022survival, vauvelle2024differentiable, wang2019mlsurv}.

One of the most common evaluation metrics for survival models is Harrell's concordance index (C-index) \citep{harrell1982cindex}, which evaluates a model according to how well the predicted failure order of the units matches the data. There are many known shortcomings of relying solely on the C-index for survival model evaluation \citep{hartman2023pitfalls}, and the examples raised in this paper add to the body of evidence that these are valid concerns. Thus, in addition to improving modeling flexibility, there has been recent interest in proper evaluation metrics for survival models \citep{yanagisawa2023proper, haider2020effective, qi2023effective}.

Our work sits at the intersection of two orthogonal topics, survival analysis and subgroup discovery. At a high level, subgroup discovery refers to mining datasets for subsets or regions in which the data distribution is in some sense ``interesting,'' usually quantified by a numerical score function taking an extreme value when evaluated on the subgroup \citep{friedman1999bump, atzmueller2015subgroup, leman2008exceptional, izzo23subgroup, xu2024syflow}.
While subgroup discovery is a general problem, it has found a great deal of applications in biostatistics \citep{lipkovich2017tutorial, lipkovich2023modern}. Many methods have been proposed to study heterogeneous treatment effects in patient populations, in particular to find patient groups which experience enhanced benefit from a treatment \citep{kehl2006responder, lipkovich2011subgroup, dusseldorp2014qualitative, lipkovich2014strategies, lipkovich2017subgroup, schnell2018multiplicity, schnell2021monte, li2023statistical}; 
or for purposes of patient stratification \citep{polonik2010prim, chen2015prim, huang2017patient}. 

The work most closely related to ours in spirit is \cite{wei2018change}, which also studied a subgroup discovery problem for the Cox model. The resulting model, called the change-plane Cox model, defines subgroups via the two sides of a hyperplane rather than an axis-aligned box. There are also more restrictive assumptions on the relationship between the two resulting subgroups in their setting than in the setting considered in the present paper. These differences are quite significant and render their method unsuitable for our setting; we give an in-depth discussion in Appendix~\ref{appendix: algs expanded}. \cite{zhang2025changeplanecox} also studied properties of this change-plane Cox model.

\section{Conclusion} \label{sec: conclusion}
We introduced the problem of subgroup discovery with the Cox model, where the goal is to find interpretable subsets of the feature space in which a Cox model makes confident and accurate predictions, and proposed several methods for this problem. The most successful methods rely on two components: the expected prediction entropy (EPE), which quantifies the ability of a survival model to discriminate between the relative risk of failure for two units; and the conditional rank statistics (CRS), a statistical object which can be used to measure the deviation of an individual datapoint to the distribution of survival times in an existing subgroup. We studied the some theoretical properties of these methods and metrics and confirmed their effectiveness empirically on synthetic and real datasets.

\textbf{Limitations \& Future Work}
As our work is the first to address subgroup discovery with the Cox model, there are many open directions for follow-up work.
On the metrics side, we derived some important properties of the EPE, but we also note a drawback: there is an intricate dependence between the feature distribution and the value of the EPE. Devising a way to disentangle the effect of the feature distribution from the intrinsic performance of the model itself on the EPE (or developing another metric without this problem) would provide better ``apples-to-apples'' comparisons between the value of different subgroups. 
Another important open direction is to create valid p-values for Cox models and subgroups discovered with out methods. A naive Bonferroni correction may be overly conservative when each method produces hundreds of possible subgroups. In this paper, we have partially alleviated this issue by using a held-out test set, but more advanced methods tailored to the subgroup problem may give more precise control of false discoveries.

\bibliographystyle{plainnat}
\bibliography{arxiv}

@article{cox1972regression,
  title={Regression models and life-tables},
  author={Cox, David R},
  journal={Journal of the Royal Statistical Society: Series B (Methodological)},
  volume={34},
  number={2},
  pages={187--202},
  year={1972},
  publisher={Wiley Online Library}
}

@article{cox1975partial,
  title={Partial likelihood},
  author={Cox, David R},
  journal={Biometrika},
  volume={62},
  number={2},
  pages={269--276},
  year={1975},
  publisher={Oxford University Press}
}

@article{harrell1982cindex,
  title={Evaluating the yield of medical tests},
  author={Harrell, Frank E and Califf, Robert M and Pryor, David B and Lee, Kerry L and Rosati, Robert A},
  journal={Jama},
  volume={247},
  number={18},
  pages={2543--2546},
  year={1982},
  publisher={American Medical Association}
}

@article{steck2007ranking,
  title={On ranking in survival analysis: Bounds on the concordance index},
  author={Steck, Harald and Krishnapuram, Balaji and Dehing-Oberije, Cary and Lambin, Philippe and Raykar, Vikas C},
  journal={Advances in neural information processing systems},
  volume={20},
  year={2007}
}

@misc{waagepetersen2022cox,
title = {Cox's proportional hazards model and Cox's partial likelihood},
howpublished = {\url{https://people.math.aau.dk/~rw/Undervisning/DurationAnalysis/Slides/lektion3.pdf}},
year={2022},
author={Waagepetersen, Rasmus}
}

@InProceedings{izzo23subgroup,
  title = 	 {Data-Driven Subgroup Identification for Linear Regression},
  author =       {Izzo, Zachary and Liu, Ruishan and Zou, James},
  booktitle = 	 {Proceedings of the 40th International Conference on Machine Learning},
  year = 	 {2023},
  pdf = 	 {https://proceedings.mlr.press/v202/izzo23a/izzo23a.pdf},
  url = 	 {https://proceedings.mlr.press/v202/izzo23a.html}
}

@article{sksurv,
  author  = {Sebastian P{\"o}lsterl},
  title   = {scikit-survival: A Library for Time-to-Event Analysis Built on Top of scikit-learn},
  journal = {Journal of Machine Learning Research},
  year    = {2020},
  volume  = {21},
  number  = {212},
  pages   = {1-6},
  url     = {http://jmlr.org/papers/v21/20-729.html}
}

@article{hu2023conditional,
  title={Conditional Distribution Function Estimation Using Neural Networks for Censored and Uncensored Data},
  author={Hu, Bingqing and Nan, Bin},
  journal={Journal of Machine Learning Research},
  volume={24},
  number={223},
  pages={1--26},
  year={2023}
}

@inproceedings{wu2023frailty,
title={Neural Frailty Machine: Beyond proportional hazard assumption in neural survival regressions},
author={Ruofan Wu and Jiawei Qiao and Mingzhe Wu and Wen Yu and Ming Zheng and Tengfei LIU and Tianyi Zhang and Weiqiang Wang},
booktitle={Thirty-seventh Conference on Neural Information Processing Systems},
year={2023},
url={https://openreview.net/forum?id=3Fc9gnR0fa}
}

@inproceedings{kim2023survival,
title={Survival Permanental Processes for Survival Analysis with Time-Varying Covariates},
author={Hideaki Kim},
booktitle={Thirty-seventh Conference on Neural Information Processing Systems},
year={2023},
url={https://openreview.net/forum?id=CYCzfXn6cZ}
}

@inproceedings{lee2018deephit,
  title={Deephit: A deep learning approach to survival analysis with competing risks},
  author={Lee, Changhee and Zame, William and Yoon, Jinsung and Van Der Schaar, Mihaela},
  booktitle={Proceedings of the AAAI conference on artificial intelligence},
  volume={32},
  year={2018}
}

@article{katzman2018deepsurv,
  title={DeepSurv: personalized treatment recommender system using a Cox proportional hazards deep neural network},
  author={Katzman, Jared L and Shaham, Uri and Cloninger, Alexander and Bates, Jonathan and Jiang, Tingting and Kluger, Yuval},
  journal={BMC medical research methodology},
  volume={18},
  pages={1--12},
  year={2018},
  publisher={Springer}
}

@inproceedings{bleistein2024dynamic,
title={Dynamic Survival Analysis with Controlled Latent States},
author={Linus Bleistein and Van Tuan NGUYEN and Adeline Fermanian and Agathe Guilloux},
booktitle={Forty-first International Conference on Machine Learning},
year={2024},
url={https://openreview.net/forum?id=xGlVkBSDdt}
}

@article{chen2024kernets,
  title={Survival Kernets: Scalable and Interpretable Deep Kernel Survival Analysis with an Accuracy Guarantee},
  author={Chen, George H},
  journal={Journal of Machine Learning Research},
  volume={25},
  number={40},
  pages={1--78},
  year={2024}
}

@inproceedings{rindt2022survival,
  title={Survival regression with proper scoring rules and monotonic neural networks},
  author={Rindt, David and Hu, Robert and Steinsaltz, David and Sejdinovic, Dino},
  booktitle={International Conference on Artificial Intelligence and Statistics},
  pages={1190--1205},
  year={2022},
  organization={PMLR}
}

@inproceedings{yanagisawa2023proper,
  title={Proper scoring rules for survival analysis},
  author={Yanagisawa, Hiroki},
  booktitle={International Conference on Machine Learning},
  pages={39165--39182},
  year={2023},
  organization={PMLR}
}

@article{haider2020effective,
  title={Effective ways to build and evaluate individual survival distributions},
  author={Haider, Humza and Hoehn, Bret and Davis, Sarah and Greiner, Russell},
  journal={Journal of Machine Learning Research},
  volume={21},
  number={85},
  pages={1--63},
  year={2020}
}

@article{qi2023effective,
  title={An effective meaningful way to evaluate survival models},
  author={Qi, Shi-ang and Kumar, Neeraj and Farrokh, Mahtab and Sun, Weijie and Kuan, Li-Hao and Ranganath, Rajesh and Henao, Ricardo and Greiner, Russell},
  journal={arXiv preprint arXiv:2306.01196},
  year={2023}
}

@article{kvamme2019time,
  title={Time-to-event prediction with neural networks and Cox regression},
  author={Kvamme, Havard and Borgan, Ornulf and Scheel, Ida},
  journal={Journal of machine learning research},
  volume={20},
  number={129},
  pages={1--30},
  year={2019}
}

@article{vauvelle2024differentiable,
  title={Differentiable sorting for censored time-to-event data.},
  author={Vauvelle, Andre and Wild, Benjamin and Eils, Roland and Denaxas, Spiros},
  journal={Advances in Neural Information Processing Systems},
  volume={37},
  year={2023}
}

@misc{tian2015survnotes,
  author        = {Lu Tian},
  title         = {Survival Analysis (STAT331) Lecture Notes},
  publisher = {Stanford University},
  year = {2015},
  url = {https://web.stanford.edu/~lutian/coursepdf/unit1.pdf}
}

@article{lipkovich2017tutorial,
  title={Tutorial in biostatistics: data-driven subgroup identification and analysis in clinical trials},
  author={Lipkovich, Ilya and Dmitrienko, Alex and B D'Agostino Sr, Ralph},
  journal={Statistics in medicine},
  volume={36},
  number={1},
  pages={136--196},
  year={2017},
  publisher={Wiley Online Library}
}

@article{lipkovich2023modern,
  title={Modern approaches for evaluating treatment effect heterogeneity from clinical trials and observational data},
  author={Lipkovich, Ilya and Svensson, David and Ratitch, Bohdana and Dmitrienko, Alex},
  journal={arXiv preprint arXiv:2311.14889},
  year={2023}
}

@article{schnell2018multiplicity,
  title={Multiplicity-adjusted semiparametric benefiting subgroup identification in clinical trials},
  author={Schnell, Patrick M and M{\"u}ller, Peter and Tang, Qi and Carlin, Bradley P},
  journal={Clinical Trials},
  volume={15},
  number={1},
  pages={75--86},
  year={2018},
  publisher={SAGE Publications Sage UK: London, England}
}

@article{schnell2021monte,
  title={Monte Carlo approaches to frequentist multiplicity-adjusted benefiting subgroup identification},
  author={Schnell, Patrick M},
  journal={Statistical Methods in Medical Research},
  volume={30},
  number={4},
  pages={1026--1041},
  year={2021},
  publisher={SAGE Publications Sage UK: London, England}
}

@article{friedman1999bump,
  title={Bump hunting in high-dimensional data},
  author={Friedman, Jerome H and Fisher, Nicholas I},
  journal={Statistics and computing},
  volume={9},
  number={2},
  pages={123--143},
  year={1999},
  publisher={Springer}
}

@article{kehl2006responder,
  title={Responder identification in clinical trials with censored data},
  author={Kehl, Victoria and Ulm, Kurt},
  journal={Computational statistics \& data analysis},
  volume={50},
  number={5},
  pages={1338--1355},
  year={2006},
  publisher={Elsevier}
}

@article{chen2015prim,
  title={A PRIM approach to predictive-signature development for patient stratification},
  author={Chen, Gong and Zhong, Hua and Belousov, Anton and Devanarayan, Viswanath},
  journal={Statistics in medicine},
  volume={34},
  number={2},
  pages={317--342},
  year={2015},
  publisher={Wiley Online Library}
}

@article{polonik2010prim,
  title={PRIM analysis},
  author={Polonik, Wolfgang and Wang, Zailong},
  journal={Journal of Multivariate Analysis},
  volume={101},
  number={3},
  pages={525--540},
  year={2010},
  publisher={Elsevier}
}

@article{lipkovich2011subgroup,
  title={Subgroup identification based on differential effect search—a recursive partitioning method for establishing response to treatment in patient subpopulations},
  author={Lipkovich, Ilya and Dmitrienko, Alex and Denne, Jonathan and Enas, Gregory},
  journal={Statistics in medicine},
  volume={30},
  number={21},
  pages={2601--2621},
  year={2011},
  publisher={Wiley Online Library}
}

@article{lipkovich2014strategies,
  title={Strategies for identifying predictive biomarkers and subgroups with enhanced treatment effect in clinical trials using SIDES},
  author={Lipkovich, Ilya and Dmitrienko, Alex},
  journal={Journal of biopharmaceutical statistics},
  volume={24},
  number={1},
  pages={130--153},
  year={2014},
  publisher={Taylor \& Francis}
}

@article{lipkovich2017subgroup,
  title={Subgroup identification in clinical trials by stochastic SIDEScreen methods},
  author={Lipkovich, Ilya and Dmitrienko, Alex and Patra, Kaushik and Ratitch, Bohdana and Pulkstenis, Erik},
  journal={Statistics in Biopharmaceutical Research},
  volume={9},
  number={4},
  pages={368--378},
  year={2017},
  publisher={Taylor \& Francis}
}

@article{dusseldorp2014qualitative,
  title={Qualitative interaction trees: a tool to identify qualitative treatment--subgroup interactions},
  author={Dusseldorp, Elise and Van Mechelen, Iven},
  journal={Statistics in medicine},
  volume={33},
  number={2},
  pages={219--237},
  year={2014},
  publisher={Wiley Online Library}
}

@article{huang2017patient,
  title={Patient subgroup identification for clinical drug development},
  author={Huang, Xin and Sun, Yan and Trow, Paul and Chatterjee, Saptarshi and Chakravartty, Arunava and Tian, Lu and Devanarayan, Viswanath},
  journal={Statistics in medicine},
  volume={36},
  number={9},
  pages={1414--1428},
  year={2017},
  publisher={Wiley Online Library}
}

@article{atzmueller2015subgroup,
  title={Subgroup discovery},
  author={Atzmueller, Martin},
  journal={Wiley Interdisciplinary Reviews: Data Mining and Knowledge Discovery},
  volume={5},
  number={1},
  pages={35--49},
  year={2015},
  publisher={Wiley Online Library}
}

@inproceedings{leman2008exceptional,
  title={Exceptional model mining},
  author={Leman, Dennis and Feelders, Ad and Knobbe, Arno},
  booktitle={Machine Learning and Knowledge Discovery in Databases: European Conference, ECML PKDD 2008, Antwerp, Belgium, September 15-19, 2008, Proceedings, Part II 19},
  pages={1--16},
  year={2008},
  organization={Springer}
}

@article{li2023statistical,
  title={Statistical Performance Guarantee for Selecting Those Predicted to Benefit Most from Treatment},
  author={Li, Michael Lingzhi and Imai, Kosuke},
  journal={arXiv preprint arXiv:2310.07973},
  year={2023}
}

@InProceedings{xu2024syflow,
  title = 	 {Learning Exceptional Subgroups by End-to-End Maximizing {KL}-Divergence},
  author =       {Xu, Sascha and Walter, Nils Philipp and Kalofolias, Janis and Vreeken, Jilles},
  booktitle = 	 {Proceedings of the 41st International Conference on Machine Learning},
  pages = 	 {55267--55285},
  year = 	 {2024},
  volume = 	 {235},
  publisher =    {PMLR}
}

@article{wei2018change,
  title={The change-plane Cox model},
  author={Wei, Susan and Kosorok, Michael R},
  journal={Biometrika},
  volume={105},
  number={4},
  pages={891--903},
  year={2018},
  publisher={Oxford University Press}
}

@article{bertsimas2022trees,
  title={Optimal survival trees},
  author={Bertsimas, Dimitris and Dunn, Jack and Gibson, Emma and Orfanoudaki, Agni},
  journal={Machine learning},
  volume={111},
  number={8},
  pages={2951--3023},
  year={2022},
  publisher={Springer}
}

@article{hernan2010hazards,
  title={The hazards of hazard ratios},
  author={Hern{\'a}n, Miguel A},
  journal={Epidemiology},
  volume={21},
  number={1},
  pages={13--15},
  year={2010},
  publisher={LWW}
}

@book{hosmer2008textbook,
  title={Applied survival analysis: regression modeling of time-to-event data},
  author={Hosmer Jr, David W and Lemeshow, Stanley and May, Susanne},
  volume={618},
  year={2008},
  publisher={John Wiley \& Sons}
}

@book{kalbfleisch2011textbook,
  title={The statistical analysis of failure time data},
  author={Kalbfleisch, John D and Prentice, Ross L},
  year={2011},
  publisher={John Wiley \& Sons}
}

@article{rudin2019stop,
  title={Stop explaining black box machine learning models for high stakes decisions and use interpretable models instead},
  author={Rudin, Cynthia},
  journal={Nature machine intelligence},
  volume={1},
  number={5},
  pages={206--215},
  year={2019},
  publisher={Nature Publishing Group UK London}
}

@article{liu2024characterizing,
  title={Characterizing mutation-treatment effects using clinico-genomics data of 78,287 patients with 20 types of cancers},
  author={Liu, Ruishan and Rizzo, Shemra and Wang, Lisa and Chaudhary, Nayan and Maund, Sophia and Garmhausen, Marius Rene and McGough, Sarah and Copping, Ryan and Zou, James},
  journal={Nature Communications},
  volume={15},
  number={1},
  pages={10884},
  year={2024},
  publisher={Nature Publishing Group UK London}
}

@article{liu2022systematic,
  title={Systematic pan-cancer analysis of mutation--treatment interactions using large real-world clinicogenomics data},
  author={Liu, Ruishan and Rizzo, Shemra and Waliany, Sarah and Garmhausen, Marius Rene and Pal, Navdeep and Huang, Zhi and Chaudhary, Nayan and Wang, Lisa and Harbron, Chris and Neal, Joel and others},
  journal={Nature Medicine},
  volume={28},
  number={8},
  pages={1656--1661},
  year={2022},
  publisher={Nature Publishing Group US New York}
}

@article{huo2024ttepretraining,
  title={Time-to-Event Pretraining for 3D Medical Imaging},
  author={Huo, Zepeng and Fries, Jason Alan and Lozano, Alejandro and Valanarasu, Jeya Maria Jose and Steinberg, Ethan and Blankemeier, Louis and Chaudhari, Akshay S and Langlotz, Curtis and Shah, Nigam H},
  journal={arXiv preprint arXiv:2411.09361},
  year={2024}
}

@article{ishwaran2008survivalforest,
  title={Random survival forests},
  journal={Annals of Applied Statistics},
  author={Ishwaran, Hemant and Kogalur, Udaya B and Blackstone, Eugene H and Lauer, Michael S},
  year={2008}
}

@inproceedings{wang1997lmt,
  title={Inducing model trees for continuous classes},
  author={Wang, Yong and Witten, Ian H},
  booktitle={Proceedings of the ninth European conference on machine learning},
  volume={9},
  pages={128--137},
  year={1997},
  organization={Citeseer}
}

@article{hartman2023pitfalls,
  title={Pitfalls of the concordance index for survival outcomes},
  author={Hartman, Nicholas and Kim, Sehee and He, Kevin and Kalbfleisch, John D},
  journal={Statistics in medicine},
  volume={42},
  number={13},
  pages={2179--2190},
  year={2023},
  publisher={Wiley Online Library}
}

@article{karnofsky1948use,
  title={The use of the nitrogen mustards in the palliative treatment of carcinoma: with particular reference to bronchogenic carcinoma.},
  author={Karnofsky, David A and Abelmann, Walter H and Craver, Lloyd F and Burchenal, Joseph H},
  journal={Cancer},
  year={1948},
  publisher={John Wiley \& Sons}
}

@book{friedman2015fundamentals,
  title={Fundamentals of clinical trials},
  author={Friedman, Lawrence M and Furberg, Curt D and DeMets, David L and Reboussin, David M and Granger, Christopher B},
  year={2015},
  publisher={Springer}
}

@article{curtis2012metabric,
  title={The genomic and transcriptomic architecture of 2,000 breast tumours reveals novel subgroups},
  author={Curtis, Christina and Shah, Sohrab P and Chin, Suet-Feung and Turashvili, Gulisa and Rueda, Oscar M and Dunning, Mark J and Speed, Doug and Lynch, Andy G and Samarajiwa, Shamith and Yuan, Yinyin and others},
  journal={Nature},
  volume={486},
  number={7403},
  pages={346--352},
  year={2012},
  publisher={Nature Publishing Group UK London}
}

@inproceedings{saxena2008damage,
  title={Damage propagation modeling for aircraft engine run-to-failure simulation},
  author={Saxena, Abhinav and Goebel, Kai and Simon, Don and Eklund, Neil},
  booktitle={2008 international conference on prognostics and health management},
  pages={1--9},
  year={2008},
  organization={IEEE}
}

@article{moqri2023age,
  title={Biomarkers of aging for the identification and evaluation of longevity interventions},
  author={Moqri, Mahdi and Herzog, Chiara and Poganik, Jesse R and Justice, Jamie and Belsky, Daniel W and Higgins-Chen, Albert and Moskalev, Alexey and Fuellen, Georg and Cohen, Alan A and Bautmans, Ivan and others},
  journal={Cell},
  volume={186},
  number={18},
  pages={3758--3775},
  year={2023},
  publisher={Elsevier}
}

@article{belloy2023alzage,
  title={APOE genotype and Alzheimer disease risk across age, sex, and population ancestry},
  author={Belloy, Michael E and Andrews, Shea J and Le Guen, Yann and Cuccaro, Michael and Farrer, Lindsay A and Napolioni, Valerio and Greicius, Michael D},
  journal={JAMA neurology},
  volume={80},
  number={12},
  pages={1284--1294},
  year={2023},
  publisher={American Medical Association}
}

@article{mak2023cancerage,
  title={Clinical biomarker-based biological aging and risk of cancer in the UK Biobank},
  author={Mak, Jonathan KL and McMurran, Christopher E and Kuja-Halkola, Ralf and Hall, Per and Czene, Kamila and Jylh{\"a}v{\"a}, Juulia and H{\"a}gg, Sara},
  journal={British Journal of Cancer},
  volume={129},
  number={1},
  pages={94--103},
  year={2023},
  publisher={Nature Publishing Group UK London}
}

@article{uxa2021metabricexplanation,
  title={Ki-67 gene expression},
  author={Uxa, Sigrid and Castillo-Binder, Paola and Kohler, Robin and Stangner, Konstanze and M{\"u}ller, Gerd A and Engeland, Kurt},
  journal={Cell Death \& Differentiation},
  volume={28},
  number={12},
  pages={3357--3370},
  year={2021},
  publisher={Nature Publishing Group UK London}
}

@article{wang2019mlsurv,
  title={Machine learning for survival analysis: A survey},
  author={Wang, Ping and Li, Yan and Reddy, Chandan K},
  journal={ACM Computing Surveys (CSUR)},
  volume={51},
  number={6},
  pages={1--36},
  year={2019},
  publisher={ACM New York, NY, USA}
}

@article{zhang2025changeplanecox,
  title={Subgroup Testing in the Change-Plane Cox Model},
  author={Zhang, Xiao and Ren, Panpan and Shi, Xingjie and Ma, Shuangge and Liu, Xu},
  journal={Statistics in Medicine},
  volume={44},
  number={15-17},
  pages={e70179},
  year={2025},
  publisher={Wiley Online Library}
}

\newpage
\appendix

\section{Runtime Improvements} \label{appendix: runtime}
A naive implementation of the conditional rank tail probability took over 20 seconds to evaluate on a single point in some early experiments. Thus, a faster implementation is necessary. We will use the abbreviation $r_k = r_k(x^*)$.

First, we observe that the naive computation of a single $r_k$ from equation~\eqref{eq: uncond rank prob} will require $\Omega(n^2)$ time. This can easily be reduced to $O(n)$ by updating the partial sum contained in the denominator as each term in the product is computed, rather than recomputing it from scratch each time. With this modification, we can compute $r_1$ in $O(n)$ time.

We can obtain another speedup by computing the remaining $r_k$ recursively, rather than repeatedly using the procedure above from scratch for each $r_k$. A direct calculation using the formula~\eqref{eq: cens uncond rank prob} shows that
\begin{equation} \label{eq: cens rk ratio}
    r_{k+1} = \frac{(1-\d_k) e^{\b^\T x^*} + S_k}{e^{\b^\T x^*} - e^{\b^\T x_k} + S_k} \cdot r_k,
\end{equation}
where we have defined $S_k = \sum_{i=k}^n e^{\b^\T x_i}$. Again using the running partial sum trick to quickly compute $S_k$ (rather than computing from scratch each time), we can compute the next $r_{k+1}$ in constant time using the previous one. This means that $r_1,\ldots,r_{n+1}$ can \emph{all} be computed using only $O(n)$ time total.

The pseudocode for the resulting procedure is given in Algorithm~\ref{alg: fast cens log rank}. We have replaced the rank probabilities $r_k$ with the logarithms since when working with large datasets, working directly with the product of many probabilities (even when each is individually of ``reasonable'' size) can lead to numerical issues. Given the set of $\log r_k$, the CRS $r^c$ can then be computed by taking a softmax.

\begin{algorithm}
\caption{Fast computation of the log rank probabilities with censoring} \label{alg: fast cens log rank}
\begin{algorithmic}
\STATE $S \gets \sum_{i=1}^n e^{\b^\T x_i}$
\STATE log\_prod $\gets \b^\T x^* - \log(S + e^{\b^\T x^*})$
\FOR{$i = 1, \ldots, n$} 
    \STATE log\_prod $\gets$ log\_prod $+ \: \d_i(\b^\T x_i - \log S)$
    \STATE $S \gets S - e^{\b^\T x_i}$
\ENDFOR
\STATE $\log r_1 \gets$ log\_prod
\STATE
\STATE $S \gets \sum_{i=1}^n e^{\b^\T x_i}$
\FOR{$k = 1, \ldots, n$}
    \STATE $\log r_{k+1} \gets \log r_k + \log(S + (1-\d_k)e^{\b^\T x^*}) - \log(S + e^{\b^\T x^*} - e^{\b^\T x_k})$
    \STATE $S \gets S - e^{\b^\T x_k}$
\ENDFOR
\STATE
\RETURN $\log r_1, \ldots, \log r_{n+1}$
\end{algorithmic}
\end{algorithm}

\section{EPE Decreases with Region Size}
In this section, we give the proof of Theorem~\ref{thm: decreasing epe}.
\begin{lem} \label{thm: peaked ce}
Let $H(z) = -\frac{1}{1+e^{-z}} \log \frac{1}{1+e^{-z}} - \frac{1}{1+e^{z}} \log \frac{1}{1+e^{z}}$. Then $H(z)$ is a decreasing function of $|z|$.
\end{lem}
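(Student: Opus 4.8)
The plan is to recognize $H$ as the binary entropy of the logistic sigmoid and reduce the claim to a one-line sign computation for its derivative. Writing $p := \frac{1}{1+e^{-z}}$ for the sigmoid, we have $1-p = \frac{1}{1+e^{z}}$, so $H(z) = -p\log p - (1-p)\log(1-p) =: h(p)$ is exactly the binary entropy evaluated at $p$. Before differentiating, I would first note that $H$ is even: sending $z \mapsto -z$ interchanges the two summands (it swaps $p$ and $1-p$), so $H(-z) = H(z)$. Hence it suffices to prove that $H$ is strictly decreasing on $[0,\infty)$, which together with evenness gives that $H$ is decreasing in $|z|$.

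Next I would compute $H'(z)$ via the chain rule. Since $h'(p) = \log\frac{1-p}{p}$ and $\frac{dp}{dz} = p(1-p)$, we get $H'(z) = p(1-p)\log\frac{1-p}{p}$. The key simplification is the identity $\frac{1-p}{p} = \frac{1+e^{-z}}{1+e^{z}} = e^{-z}$, so that $\log\frac{1-p}{p} = -z$. Substituting yields the clean formula
\[
H'(z) = -z\,p(1-p) = -\frac{z}{(1+e^{-z})(1+e^{z})}.
\]
Because $p(1-p) > 0$ for every real $z$, the sign of $H'(z)$ is exactly opposite to that of $z$: we have $H'(z) < 0$ for $z > 0$ and $H'(z) > 0$ for $z < 0$ (with $H'(0)=0$). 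In particular $H$ is strictly decreasing on $(0,\infty)$, which combined with evenness finishes the argument.

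There is no genuine obstacle here beyond bookkeeping; the only step requiring care is the algebraic simplification $\log\frac{1-p}{p} = -z$, which is precisely what makes the derivative collapse to the transparent form $-z\,p(1-p)$. If one preferred to avoid calculus, an alternative route is to invoke the standard facts that the binary entropy $h$ is strictly increasing on $[0,\tfrac12]$, strictly decreasing on $[\tfrac12,1]$, and symmetric about $\tfrac12$, combined with the observation that $p = \frac{1}{1+e^{-z}}$ is strictly increasing with value $\tfrac12$ at $z=0$; as $|z|$ grows, $p$ moves monotonically away from $\tfrac12$, so $h(p)$ decreases. I would nonetheless present the derivative computation as the primary proof, since it is self-contained and exhibits the exact sign directly.
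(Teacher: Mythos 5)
Your proof is correct and essentially the same as the paper's: both decompose $H$ as the binary entropy $h(p)$ composed with the sigmoid $p(z)=\frac{1}{1+e^{-z}}$ and argue via calculus, with the paper composing the unimodality of $h$ about $p=\tfrac12$ with the monotonicity of the sigmoid --- which is exactly the ``alternative route'' you sketch at the end. Your primary version, the explicit chain-rule formula $H'(z) = -z\,p(1-p)$, is a slightly more self-contained rendering of the same idea and is computed correctly.
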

\begin{proof}
By taking a derivative, we see that the function $h(p) := -p\log p - (1-p)\log(1-p)$ is increasing in $p$ for $p\in[0,1/2]$ and decreasing in $p$ for $p\in[1/2, 1]$. Equivalently, $h(p)$ is a decreasing function of $|p-1/2|$. Setting $p(z)=\frac{1}{1+e^{-z}}$, we see that $p(z)=1/2$ for $z=0$ and $p(z)$ moves away from $1/2$ (i.e., $|p(z)-1/2|$ increases) as the magnitude of $z$ increases. This yields the desired result.
\end{proof}

\begin{lem} \label{thm: spreading ce}
Let $Z, Z'$ be random variables with densities $f, g$ respectively. Suppose that there exists a constant $c\geq0$ such that $f(z) \geq g(z)$ whenever $|z| < c$ and $f(z) \leq g(z)$ whenever $|z| \geq c$. Then $\E[H(Z)] \geq \E[H(Z')]$.
\end{lem}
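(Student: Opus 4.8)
The plan is to study the difference directly: writing $\phi(z) = f(z) - g(z)$, I want to show that
\[
\E[H(Z)] - \E[H(Z')] = \int_{\R} H(z)\,\phi(z)\, dz \geq 0.
\]
Two facts are immediately available. First, by hypothesis $\phi(z) \geq 0$ whenever $|z| < c$ and $\phi(z) \leq 0$ whenever $|z| \geq c$. Second, since $f$ and $g$ are both probability densities, $\int_{\R} \phi(z)\, dz = 0$. The strategy is to exploit this mean-zero property together with the monotonicity of $H$ established in Lemma~\ref{thm: peaked ce}.

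The key step is to subtract off a carefully chosen constant before integrating. Let $H_c := H(c)$. Because Lemma~\ref{thm: peaked ce} shows that $H$ is a decreasing function of $|z|$ (and $H$ depends on $z$ only through $|z|$), we have $H(z) \geq H_c$ exactly when $|z| < c$ and $H(z) \leq H_c$ exactly when $|z| \geq c$. Using $\int_{\R}\phi(z)\,dz = 0$ to insert the constant for free, I rewrite
\[
\E[H(Z)] - \E[H(Z')] = \int_{\R} H(z)\,\phi(z)\, dz = \int_{\R} \bigl(H(z) - H_c\bigr)\,\phi(z)\, dz.
\]
The crucial point is that this particular choice of constant, namely the value of $H$ at the crossover radius $c$, makes the sign change of the factor $H(z) - H_c$ occur at precisely the same threshold $|z| = c$ where the sign of $\phi(z)$ changes.

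The final step is then a pointwise sign-matching argument on the integrand $\bigl(H(z) - H_c\bigr)\phi(z)$. On the region $|z| < c$ both factors are nonnegative, while on $|z| \geq c$ both factors are nonpositive; in either case their product is nonnegative. Hence the integrand is nonnegative everywhere, the integral is nonnegative, and the claim follows. I do not expect any serious obstacle here: $H$ is continuous, so $H_c$ is well-defined, and the only genuine insight is recognizing that one must subtract $H(c)$ (rather than, say, the mean of $H$) so that the two sign changes align at the same radius. Once the constant is chosen correctly, the result reduces to an elementary observation about the product of two factors that flip sign together.
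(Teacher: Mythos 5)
Your proof is correct, and it takes a recognizably different route from the paper's. Both arguments rest on the same two ingredients---Lemma~\ref{thm: peaked ce} (that $H$ is decreasing in $|z|$) and the single-crossing structure of $\phi = f-g$ at $|z|=c$---but the bookkeeping differs. The paper splits the integral into the regions $|z|<c$ and $|z|\geq c$, bounds $H$ below by $\inf_{|z|<c}H$ on the first and above by $\sup_{|z|\geq c}H$ on the second, and then invokes the mass-balance identity $\P(|Z|<c)-\P(|Z'|<c) = \P(|Z'|\geq c)-\P(|Z|\geq c)$ to factor the resulting lower bound as $\bigl(\inf_{|z|<c}H - \sup_{|z|\geq c}H\bigr)\bigl(\P(|Z|<c)-\P(|Z'|<c)\bigr)$, a product of two nonnegative quantities. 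You instead use the total-mass identity $\int\phi = 0$ to subtract the pivot value $H(c)$ for free, after which the integrand $\bigl(H(z)-H(c)\bigr)\phi(z)$ is nonnegative \emph{pointwise}, and no splitting or factorization is needed. Your version is arguably cleaner: it reduces the inequality to a single pointwise sign check, and it needs only monotonicity of $H$ in $|z|$ (not even continuity, despite your remark---$H(z)\geq H(c)$ for $|z|<c$ follows from monotonicity alone). One cosmetic nitpick: the phrase ``exactly when'' claims an equivalence that requires strict monotonicity of $H$; the implications you actually use (the two one-directional ones) are all that is needed and follow from plain monotonicity, so nothing breaks, but ``whenever'' would be the accurate word. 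The paper's inf/sup formulation buys a little robustness in that it never references the value of $H$ at the crossover point itself, but for this $H$ the two proofs deliver the same result with comparable effort.
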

\begin{proof}
We have the following:
\begin{align}
    \E[H(Z)] &- \E[H(Z')] = \int_{|z|< c} H(z) (f(z)-g(z)) \, dz + \int_{|z|\geq c} H(z) (f(z)-g(z)) \, dz \nn \\
    &\geq \inf_{|z| < c} H(z) \cdot \int_{|z|< c} (f(z)-g(z)) \, dz - \sup_{|z| \geq  c} H(z) \cdot \int_{|z|\geq c} (g(z)-f(z)) \, dz \label{eq: spreading ce 1} \\
    &= \inf_{|z| < c} H(z) \cdot (\P(|Z|< c) -\P(|Z'| < c)) - \sup_{|z| \geq  c} H(z) \cdot (\P(|Z'|\geq c) - \P(|Z|\geq c)) \nn \\
    &= (\inf_{|z| < c} H(z) - \sup_{|z| \geq  c} H(z))(\P(|Z|< c) - \P(|Z'|< c)) \label{eq: spreading ce 2} \\
    &\geq 0. \label{eq: spreading ce 3}
\end{align}
Equation~\eqref{eq: spreading ce 1} holds because $f(z)-g(z) \geq 0$ when $|z|< c$ and $f(z)-g(z)\leq 0$ when $|z|\geq c$. Note that this also implies that $\P(|Z|< c) - \P(|Z'|< c) = \int_{|z|< c} (f(z)-g(z))\, dz \geq 0$.
Equation~\eqref{eq: spreading ce 2} holds by substituting $\P(|Z|\geq c) = 1-\P(|Z|< c)$ and similarly for $Z'$. 
Equation~\eqref{eq: spreading ce 3} holds because $\inf_{|z|< c} H(z) \geq \sup_{|z|\geq c} H(z)$ by Lemma~\ref{thm: peaked ce}, and because $\P(|Z|< c) - \P(|Z'|< c) \geq 0$ as established previously. This completes the proof.
\end{proof}

\decepe*
\begin{proof}
By marginalizing \eqref{eq: cox epe} over $Y$, we have that
\[ \mathrm{EPE}(\b, R) = \E[H(\b^\T(X-X')) \: | \: X, X' \in R] \]
and similarly for $R'$. Note that since the expectation depends only on $X-X'$, it is translation invariant, so we may assume that $R = \prod_{i=1}^d [0, c_i]$ and $R' = \prod_{i=1}^d [0, c_i']$. We may further assume that $c_i = c'_i$ for all $i\geq2$ (i.e., $R$ and $R'$ differ in only a single side length): if we can show that the EPE decreases when only one side length is increased, then we can create a chain of at most $d$ inequalities in the EPE (where one side length increases at time) to prove the general inequality. Finally, since the expectation depends only on $\b^\T(X-X')$, we can replace $\b_1$ (the first coordinate of $\b$) with $c_1\b_1$ and $X_1, X'_1$ (the first coordinates of $X$ and $X'$) with $X/c_1, X'/c_1$. This effectively replaces $c_1$ with $c_1/c_1=1$ and $c'_1$ with $c'_1/c_1 = c > 1$. Thus, we may assume that $c_1=1$ and $c'_1 = c > 0$.

Let $Z = \b^\T (X-X')$ for $X,X'\sim\mathrm{Unif}(R)$ and $Z' = \b^\T (X-X')$ for $X,X'\sim\mathrm{Unif}(R')$, and let $f,g$ be the densities for $Z, Z'$ respectively. By Lemma~\ref{thm: spreading ce}, it suffices to show that there is a constant $c$ such that $f(z) \geq g(z)$ for $|z|< c$ and $f(z) \leq g(z)$ for $|z|\geq c$. Since the distributions of $Z$ and $-Z$ are equal (and similarly for $Z', -Z'$), $f$ and $g$ are both symmetric so it suffices to show this for $z\geq0$.

For any $c>0$, define $\psi_c(z) = \frac{c-|z|}{c^2}$. For a $d$-dimensional vector $v$, let $v_{\setminus 1}$ denote the $d-1$-dimensional vector consisting of all but the first entry of $v$. Finally, define $\varphi(z)$ to be the density of $\b_{\setminus 1}^\T (X_{\setminus1} - X'_{\setminus1})$ for $X_{\setminus1}, X'_{\setminus1}\sim \mathrm{Unif}(\prod_{i=2}^d [0, c_i])$.
The densities of $Z$ and $Z'$ are given by convolutions: $f = \psi_{|\b_1| c_1} * \varphi$ and $g = \psi_{|\b_1| c'_1} * \varphi$. We will now proceed show that the ratio $g(z)/f(z)$ is nondecreasing for $z\geq 0$ by showing that 
\[
\frac{d}{dz}\l[\frac{g(z)}{f(z)}\r] \leq 0 \quad \textrm{for} \quad z\geq 0.
\]
Since $g, f \geq 0$, this will suffice to prove the claim. (In particular, we can take the $c$ in Lemma~\ref{thm: spreading ce} to be $\sup \{z \: : \: g(z) / f(z) \leq 1\}$.) The derivative can be computed as follows:
\begin{equation} \label{eq: density ratio derivative}
    \frac{d}{dz}\l[\frac{g}{f}\r] = \frac{d}{dz}\l[\frac{\psi_{|\b_1| c'_1} * \varphi}{\psi_{|\b_1| c_1} * \varphi} \r]
    = \frac{(\psi_{|\b_1| c_1} * \varphi)(\psi_{|\b_1| c'_1}' * \varphi) - (\psi_{|\b_1| c'_1} * \varphi)(\psi_{|\b_1| c_1}' * \varphi)}{(\psi_{|\b_1| c_1} * \varphi)^2}.
\end{equation}
Since the denominator of \eqref{eq: density ratio derivative} is nonnegative, it suffices to show that the numerator is nonnegative. Note that $\frac{d}{dz}\psi_c(z) = -\mathrm{sgn}(z)/c^2$. Thus, the numerator of \eqref{eq: density ratio derivative} becomes
\begin{align}
    (\psi_{|\b_1| c_1} &* \varphi)(\psi_{|\b_1| c'_1}' * \varphi) - (\psi_{|\b_1| c'_1} * \varphi)(\psi_{|\b_1| c_1}' * \varphi) \nn \\[10pt]
    &= (\psi_{|\b_1| c_1} * \varphi)\l(\frac{-\mathrm{sgn}}{(|\b_1| c'_1)^2} * \varphi\r) - (\psi_{|\b_1| c'_1} * \varphi)\l(\frac{-\mathrm{sgn}}{(|\b_1| c_1)^2} * \varphi\r) \nn \\[10pt]
    &= \l(\l( \frac{\psi_{|\b_1|c_1}}{(|\b_1| c'_1)^2} - \frac{\psi_{|\b_1|c_1'}}{(|\b_1| c_1)^2} \r) * \varphi \r) (-\mathrm{sgn} * \varphi). \label{eq: density ratio derivative numerator}
\end{align}
A direct computation shows that
\[ \frac{\psi_{|\b_1|c_1}}{(|\b_1| c'_1)^2} - \frac{\psi_{|\b_1|c_1'}}{(|\b_1| c_1)^2} = \frac{|\b_1|c_1 - |z|}{(|\b_1|c_1)^2 (|\b_1|c_1')^2} - \frac{|\b_1|c_1' - |z|}{(|\b_1|c_1')^2 (|\b_1|c_1)^2} = \frac{|\b_1|(c_1-c_1')}{(|\b_1|c_1)^2 (|\b_1|c_1')^2}.\]
Since $c_1'\geq c_1$, this is a nonpositive constant function. Convolving against $\varphi$ (which is nonnegative function) will then result in a nonpositive constant.

Finally, we compute $-\mathrm{sgn}*\varphi$:
\begin{align}
    -(\mathrm{sgn}*\varphi)(z) &= -\int \varphi(y) \mathrm{sgn}(z-y) \, dy \nn \\
    &= \int_{y > z} \varphi(y) \, dy - \int_{y \leq z} \varphi(y) \, dy \nn \\
    &= \P(\b_{\setminus1}^\T (X_{\setminus 1} - X'_{\setminus 1}) > z) - \P(\b_{\setminus1}^\T (X_{\setminus 1} - X'_{\setminus 1}) \leq z). \nn
\end{align}
Since the distribution of $\b_{\setminus1}^\T (X_{\setminus 1} - X'_{\setminus 1})$ is symmetric about 0, it follows that 
\[\P(\b_{\setminus1}^\T (X_{\setminus 1} - X'_{\setminus 1}) > z) \leq 1/2, \quad \P(\b_{\setminus1}^\T (X_{\setminus 1} - X'_{\setminus 1}) \leq z) \geq 1/2\]
for $z\geq 0$. Thus, $(-\mathrm{sgn}*\varphi)(z) \leq 0$ for $z \geq 0$.

To conclude, we have now shown that \eqref{eq: density ratio derivative numerator} is the product of two nonpositive quantities, therefore it must be nonnegative. This completes the proof.
\end{proof}

\section{Convergence of the Conditional Rank Statistics}
In this section, we prove Theorem~\ref{thm: crs convergence} which states that the CRS converge uniformly to their expectation in probability. We restate the theorem here for convenience.
\crsconv*
\begin{proof}
By the Glivenko-Cantelli theorem, we have that $\sup_{t\in \R}|\hat{G}_n(t)-G(t)| \rightarrow 0$ almost surely, and therefore in probability as well.

Let $\cB^n_x = \{ \{x_i\}_{i=1}^n \: : \: \P_{\hat{G}_n}(\sup_{t\in\R} |\hat{G}_n(t)-G(t)|>\e \: | \: \{x_i\}_{i=1}^n) > \n\}$. We claim that $\P(\{x_i\}_{i=1}^n \in \cB_x^n) \rightarrow 0$ as $n\rightarrow\infty$ for any $\e, \n > 0$. Let $\P(\cB^n_x) = \g$. Again by Glivenko-Cantelli, for and $\e, \d>0$ we can choose $n$ large enough so that the following holds:
\begin{align}
    \d &> \P(\sup_t |\hat{G}_n(t) - G(t)|>\e) \nn\\
    &= \P(\cB^n_x) \P(\sup_t |\hat{G}_n(t) - G(t)|>\e \: | \: \cB^n_x) + (1-\P(\cB^n_x))\P(\sup_t |\hat{G}_n(t) - G(t)|>\e \: | \: \neg \cB^n_x) \nn \\
    &\geq \g \n + (1-\g) \cdot 0 = \g\n.
\end{align}
If we take $\n=\d^{1/2}$ then we see that $\g\leq\d^{1/2}$. Since $\d$ can be made arbitrarily small, $\n$ and $\g$ can also be made arbitrarily close to 0 for large enough $n$.

Choose $n$ large enough so that $\P(\{x_i\}_{i=1}^n \in \cB_x^n) \leq \d$, then consider a realization $\{x_i\}_{i=1}^n \not\in \cB_x^n$. By definition, we have 
\begin{equation} \label{eq: bad draw prob}
\P(\sup_{t\in\R}|\hat{G}_n(t)-G(t)|>\e \: | \: \{x_i\}_{i=1}^n) \leq \d.
\end{equation}

Next, consider $N$ draws of the survival times conditional on the $x_i$, let $t_i^{(j)}$ be the survival time for $x_i$ in the $j$-th draw, and let $\hat{G}_n^{(j)}$ be the associated empirical cdf for the survival times. Let 
\[
\cB_d = \{j\in [N] \: : \: \sup_{t\in\R}|\hat{G}_n^{(j)}(t) - G(t)| > \e\}
\]
be the indices of the ``bad'' draws. By \eqref{eq: bad draw prob}, $|\cB_d| \leq \d N + O(\sqrt{N})$ with high probability.

To simplify notation, let $T^* = T(x^*)$. We next claim the following: Given that $j\not\in \cB_d$, we have that $|\P_{T^*}(\hat{G}_n^{(j)}(T^*) \leq \a) - \P_{T^*}(G(T^*)\leq\a)| \leq 4C\e$ for all $\a$. To see this, observe that
\begin{equation} \label{eq: quantile prob diff}
|\P_{T^*}(\hfjn(T^*) \leq \a) - \P_{T^*}(G(T^*))| \leq \int_{t\geq 0} |\I\{\hfjn(t) \leq \a\} - \I\{G(t) \leq \a\}| \, dP(t), 
\end{equation}
where $dP(t)$ denotes the probability measure for $T^*$. Since $j \not\in \cB_d$, we have $\sup_t |\hfjn(t) - G(t)| \leq \e$. This implies the following:
\begin{itemize}
    \item If $\hfjn(t) \leq \a-\e$, then $G(t)\leq\a$ and vice versa.
    \item If $\hfjn(t) > \a+\e$, then $G(t)>\a$ and vice versa.
\end{itemize}
It follows that the integrand in \eqref{eq: quantile prob diff} is equal to 1 only if $\a-\e < \hfjn(t) \leq \a+\e$ or $\a-\e < G(t) \leq \a+\e$, and equal to 0 otherwise. Again since $j\not\in\cB_d$, observe that 
\[
\a-\e<\hfjn(t)\leq\a+\e \quad \Longrightarrow \quad \a-2\e<G(t)\leq\a+2\e.
\]
Thus, the integrand in \eqref{eq: quantile prob diff} is 1 only if $\a-2\e < G(t) \leq \a+2\e$. This implies that
\[
|\P_{T^*}(\hfjn(T^*) \leq \a) - \P_{T^*}(G(T^*))| \leq \P_{T^*}(\a-2\e<G(T^*)\leq\a+2\e).
\]
Observe that the set $\{t: \a-2\e < G(t) \leq \a+2\e\}$ has measure at most $4\e$ with respect to the marginal distribution of the core group survival times. Since $T^*$ is absolutely continuous with respect to the core group marginal survival time distribution and with bounded density, it follows that 
\[
\sup_\a|\P_{T^*}(\hat{G}_n(T^*) \leq \a) - \P_{T^*}(G(T^*) \leq \a)| \leq \sup_\a\P_{T^*}(\a-2\e<G(T^*)\leq\a+2\e)\leq 4C\e.
\]
In particular, this means that the set of indices $j$ for which $\sup_\a|\P_{T^*}(\hat{G}_n^{(j)}(T^*) \leq \a) - \P_{T^*}(G(T^*) \leq \a)| > 4C\e$ is a subset of $\cB_d$, so there are at most $\d N + O(\sqrt{N})$ such indices.

Let $\s\in S_n$ be a permutation. Let $\P(\cdot \: | \: \s)$ denote the probability of an event conditional on $T(x_{\s(1)}) \leq \ldots \leq T(x_{\s(n)})$, and define $\s_j$ to be the permutation such that $t_{\s_j(1)}^{(j)} \leq \ldots \leq t_{\s_j(n)}^{(j)}$. Note that the total number of possible orderings of the survival times is $n!$, which is independent of $N$, and furthermore each permutation of the survival times occurs with positive probability independent of $N$.

Stepping back for a moment, recall that $\P_{\hat{G}_n, T^*}(\hat{G}_n(T^*) \leq \a \, | \, \s)$ (the cdf of the conditional rank statistics) actually consists of $n+1$ jump discontinuities at $\a\in\{k/n\}_{k=0}^n$, corresponding to the $n+1$ possible ranks of $T^*$ among $T_{\s(i)}$. Since $n$ is fixed with respect to $N$, by the strong law of large numbers and a union bound, we will have that 
\[
\lim_{N\to\infty} \frac{\sum_{j=1}^N \P_{T^*}(\mathrm{rk}(T^*) = k \, | \, T^{(j)}_1, \ldots, T^{(j)}_n, \s) \I\{\s_j=\s\}}{\sum_{j=1}^N \I\{\s_j=\s\}} = \P_{\hat{G}_n, T^*}(\mathrm{rk}(T^*) = k \, | \, \s)
\]
almost surely, and simultaneously for all $k=1,\ldots,n+1$. Thus, for any $\a\in[0,1]$, we have
\begin{align}
\P_{\hat{G}_n, T^*}(\hat{G}_n(T^*) \leq \a \: | \: \s) &= \sum_{k=1}^{\lfloor \a n\rfloor + 1}\P_{\hat{G}_n, T^*}(\mathrm{rk}(T^*) = k \, | \, \s) \nn \\
&=\sum_{k=1}^{\lfloor \a n \rfloor + 1} \lim_{N\to\infty} \frac{\sum_{j=1}^N \P_{T^*}(\mathrm{rk}(T^*) = k \, | \, T^{(j)}_1, \ldots, T^{(j)}_n, \s) \I\{\s_j=\s\}}{\sum_{j=1}^N \I\{\s_j=\s\}} \nn \\
&=\lim_{N\to\infty} \frac{\sum_{j=1}^N \sum_{k=1}^{\lfloor \a n \rfloor + 1}\P_{T^*}(\mathrm{rk}(T^*) = k \, | \, T^{(j)}_1, \ldots, T^{(j)}_n, \s) \I\{\s_j=\s\}}{\sum_{j=1}^N \I\{\s_j=\s\}} \nn \\
&=\lim_{N \rightarrow \infty} \frac{\sum_{j=1}^N \P_{T^*}(\hat{G}_n^{(j)}(T^*) \leq \a) \I\{\s_j = \s\}}{\sum_{j=1}^N \I\{\s_j = \s\}}. \label{eq: large N estimate}
\end{align}

Let $\hat{p}_j = \hat{p}_j(\a) = \P(\hat{G}_n^{(j)}(T(x^*)) \leq \a)$ and $p^* = p^*(\a) = \P(F(T(x^*)) \leq \a)$. (We drop the $\a$ argument in what follows for clarity of presentation.) The preceding argument shows that $(\sum_{j=1}^N \hat{p}_j \I\{\s_j = \s\})/(\sum_{j=1}^N \I\{\s_j = \s\})$ converges to $\P_{\hat{G}_n, T^*}(\hat{G}_n(T^*) \leq \a \: | \: \s)$ uniformly in $\a$ as $N\rightarrow\infty$ and for all $\s$. It thus suffices to show that these finite sum estimates in \eqref{eq: large N estimate} are sufficiently close to $p^*$ (uniformly in $\a$) with high probability.

To this end, we want to know for which permutations $\s$ can equation \eqref{eq: large N estimate} deviate from $p^*$ by more than $4C\e+\n$ for some small $\n$ and for $N$ large enough. Let $N_\s = |\{j\: : \: \s_j = \s\}|$ and $N_\s^\times = |\{j\: : \: \s_j = \s \wedge \|\hat{p}_j-p^*\|_\infty>4C\e\}$. Observe that deterministically $\|\hat{p}_j - p^*\|_\infty \leq 1$, thus we have
\begin{align}
    \sup_\a &\l| \frac{\sum_{j=1}^N \hat{p}_j \I\{\s_j = \s\}}{\sum_{j=1}^N \I\{\s_j=\s\}} - p^* \r| \nn \\[10pt]
    &\leq \frac{\sum_{j \: : \: \|\hat{p}_j - p^*\|_\infty\leq 4C\e, \s_j = \s} \|\hat{p}_j - p^*\|_\infty + \sum_{j \: : \: \|\hat{p}_j - p^*\|_\infty > 4C\e, \s_j = \s} \|\hat{p}_j - p^*\|_\infty}{N_\s} \nn \\
    &\leq \frac{(N_\s - N_\s^\times)4C\e + N_\s^\times}{N_\s} \nn \\
    &\leq 4C\e + \frac{N_\s^\times}{N_\s}. \nn
\end{align}
In particular, we would need $4C\e + \n \leq 4C\e + N_\s^\times/N_\s$, which implies
\[ N_\s^\times \geq \n N_\s \geq \n \P(\s) N - O(\sqrt{N}) \]
with high probability. However, recall that the total number of indices $j$ with $|\hat{p}_j-p^*| > 4C\e$ is at most $\d N + O(\sqrt{N})$, so $\sum_\s N_\s^\times \leq \d N + O(\sqrt{N})$. But then we have
\begin{align}
    \d N + O(\sqrt{N}) &\geq \sum_\s N_\s^\times \nn \\
    &\geq \sum_{\s \in \cB_\s} N_\s^\times \nn \\
    &\geq \sum_{\s \in \cB_\s} \n \P(\s) N - O(\sqrt{N}) \nn.
\end{align}
Rearranging, we see that
\begin{equation} \label{eq: bad perm prob bound}
\sum_{\s \in \cB_\s} \P(\s) \leq \frac{\d}{\n} + O\l(\frac{1}{\sqrt{N}}\r).
\end{equation}
In particular, we can take $\n = \d^{1/2}$.

To recap, we have shown that for any $\e, \d > 0$, we can choose $n$ large enough such that $\P(\{x_i\}_{i=1}^n \in \cB_x^n) < \d$, and such that $\P(\s \in \cB_\s \: | \: \{x_i\}\not\in\cB_x^n) \leq 2\d^{1/2}$. (The factor of 2 is to absorb the $O(1/\sqrt{N})$ term in \eqref{eq: bad perm prob bound}.) It follows that
\begin{align}
\P&\l\{\sup_\a \l|\P(\hat{G}_n(T(x^*))\leq\a \: | \: \s) - \P(G(T^*)) \leq \a)\r| > 4C\e + \d^{1/2} \r\} \nn \\
&\leq \P(\{x_i\}_{i=1}^n \in \cB_x^n) + \P(\s \in \cB_\s \: | \: \{x_i\}_{i=1}^n \not\in \cB_x^n) \nn \\
&\leq \d + 2\d^{1/2}. \nn
\end{align}
As both the size of the error $4C\e + \d^{1/2}$ and the failure probability $\d + 2\d^{1/2}$ can be made arbitrarily close to 0 by letting $\e, \d \rightarrow 0$, we have that $\P(\hat{G}_n(T(x^*)) \leq \a | \s) \stackrel{p}{\rightarrow} \P(G(T(x^*)) \leq \a)$ uniformly in $\a$, as desired.
\end{proof}

\section{Proof of Theorem~\ref{thm: main}} \label{appendix: proofs}

\subsection{Assumptions and Notation}
We first restate our assumptions for convenience.

\begin{enumerate}[label=A\arabic*.]
    \item The hazard function for the entire dataset has the form $\lambda(t; x) = \lambda_0(t)e^{h(x)}$ for some unknown risk function $h$.
    \item There is no censoring in the data.
    \item There is a unique largest region $R^*$ which minimizes the EPE, and $R^*$ is an axis-aligned box.
    Conditional on $x\in R^*$, we have $h(x) = \b^\T x$ for some $\b$, i.e., the Cox model is well-specified.
    \item The core group selection procedure finds a group of points which belong to $R^*$, and the Cox model fit to these points recovers the true parameters $\b$. 
    \item The error between the finite conditional rank statistics and its large-sample limit according to Theorem~\ref{thm: crs convergence} is negligible. We use this limiting distribution for the analysis, rather than the finite sample version described in Section~\ref{sec: crs}.
\end{enumerate}

We define the following quantities:
\begin{itemize}
    \item 
    $T(x)$ denotes the survival time for a point with features $x$, i.e., for each datapoint $x_i$ in the dataset, $t_i = T(x_i)$. By our assumptions on the data generating distribution, $T(x)$ is a survival time with hazard function $\lambda(t, x) = \lambda_0(t)e^{h(x)}$.
    \item $G(t; R_{\mathrm{core}})$ denotes the marginal CDF for survival times sampled from points belonging to the core group. That is, $G(t) = \P(T(X) \leq t \: | \: X \in R_{\mathrm{core}})$, where the probability is computed with respect to both the randomness in $X \in R_{\mathrm{core}}$ and $T(X)$. To simplify the notation, we will typically write $G(t) = G(t; R_{\mathrm{core}})$.
    \item $G^{-1}: [0,1) \rightarrow \R_{\geq 0}$ denotes the inverse CDF.
    \item $\pI$ denotes the type I error rate, i.e., the probability that a point which belongs to $R^*$ is rejected.
    \item $\pII$ denotes the type II error rate, i.e., the probability that we fail to reject a point close to each face of $R^*$.
    \item $2n$ denotes the total number of datapoints in the training dataset, $n$ of which are used to find the core group and $n$ of which are used for the rejection/expansion step. We split the dataset to avoid creating dependencies between the datapoints after the core group selection step.
\end{itemize}
The necessity and validity of these assumptions is discussed in Appendix~\ref{appendix: assumptions}.

\subsection{Theoretical Implementation of DDGroup}
We use a modified version of the algorithm for the proof of Theorem~\ref{thm: main}. Several of these descriptions are replicated (nearly) verbatim from \cite{izzo23subgroup} for the reader's convenience.

Let $U \subseteq \R^d$. We define the \emph{directed infinity norm} $\|x\|_{U,\infty}$ by
$$ \|x\|_{U,\infty} = \max_{u \in U} \: x^\T u. $$
We note that for many sets $U$, $\|\cdot\|_{U,\infty}$ may not be a norm, nor even a seminorm. In what follows, $U$ will initially be defined as $U = \{\pm s^\pm_j e_i\}_{i=1}^d$. When $s^\pm_j = 1$ for all dimensions $j$ and signs $\pm$,  $\|\cdot\|_{U,\infty} = \|\cdot\|_\infty$ coincides with the usual infinity norm on $\R^d$. With generic ``speed'' parameters $s^\pm_j$, this quantity corresponds to the $\ell_\infty$ norm after first rescaling the positive/negative axes by $1/s^+_j$ and $1/s^-_j$, respectively.

\begin{algorithm}[h!]
\caption{\textsc{CoreGroup}$(S_{\mathrm{core}}, D)$} \label{alg: core group}
\begin{algorithmic}
\REQUIRE Core group shape $S_{\mathrm{core}}$, dataset $D = \{(x_i, t_i)\}_{i=1}^n$
\STATE $\textrm{EPE}^* \gets \infty$
\FOR{$(x, t) \in D$}
    \STATE $D_\textrm{nbhd} \gets \{(x', t') \in D \: | \: x'-x \in S_{\mathrm{core}}\}$ \COMMENT{$D_\textrm{nbhd}$ contains all points in a set of shape $S_{\mathrm{core}}$ centered at $x$. For instance, $S_{\mathrm{core}}$ could be an $\ell_\infty$ ball of radius $r$; then $D_\textrm{nbhd}$ contains all points within an $\ell_\infty$-distance $r$ from $x$.}
    \STATE $\hat{\beta} \gets \textsc{FitCox}(D_\textrm{nbhd})$ \COMMENT{Fit the Cox model using the log partial likelihood.}
    \IF{$\textsc{EPE}(\hat{\b}, D_\textrm{nbhd}) < \textrm{EPE}^*$}
        \STATE $D_\textrm{core} \gets D_\textrm{nbhd}$
        \STATE $R_\textrm{core} \gets x + S_{\mathrm{core}}$
        \STATE $\textrm{EPE}^* \gets \textsc{EPE}(\hat{\b}, D_\textrm{nbhd})$
    \ENDIF
\ENDFOR
\RETURN $R_{\textrm{core}}, D_\textrm{core}$
\end{algorithmic}
\end{algorithm}

\begin{algorithm}[h!]
\caption{\textsc{GrowBox}$(\bar{x}, X_{\textrm{rej}}, \{s^\pm_j\}_{j=1}^d$)} \label{alg: growing box}
\begin{algorithmic}
\REQUIRE Starting point (center) $\bar{x}$, rejected points $X_{\textrm{rej}}$, side expansion speeds $\{s^\pm_j\}_{j=1}^d$
\STATE $\hat{R} \gets \emptyset$
\STATE $U \gets \{ -s^-_j e_j, \, s^+_j e_j\}_{j=1}^d$ \COMMENT{$e_j$ denotes the $j$-th standard basis vector.}
\STATE $X_\textrm{rej} \gets X_\textrm{rej} + \{-\bar{x}\}$ \COMMENT{Center the points at $\bar{x}$. $+$ denotes Minkowski sum.}
\WHILE{$X_{\textrm{rej}} \neq \emptyset$}
    \STATE $x^* \gets \argmin_{x\in X_{\textrm{rej}}} \{ \|x\|_{U, \infty} \}$
    \STATE $a^* \gets \|x^*\|_{U, \infty}$
    \STATE $u^* \gets \argmax_{u \in U} \{ u^\T x^* \}$ \COMMENT{$u^*$ is the next support direction for the polytope}
    \STATE Add $(u^*, a^*)$ to $\hat{R}$
    \STATE Remove $u^*$ from $U$
    \STATE $X_{\textrm{rej}} \gets \{x \in X_{\textrm{rej}} \: | \: x^\T u^* < a^* \}$
\ENDWHILE
\RETURN $\hat{R} + \{\bar{x}\}$ \COMMENT{Undo the centering procedure from the first part of the algorithm.}
\end{algorithmic}
\end{algorithm}

\begin{algorithm}[h!]
\caption{\textsc{DDGroup}$(R_{\mathrm{core}}, \{s^\pm_j\}_{j=1}^d, D)$} \label{alg: pipeline}
\begin{algorithmic}
\REQUIRE Core group shape $S_{\mathrm{core}}$, growth speeds $s_j^\pm$, dataset $D=\{(x_i, t_i)\}_{i=1}^{2n}$
\STATE
\STATE \textbf{Phase 1:} Find a core group and fit a coarse model using half of the data.
\STATE $R_\textrm{core}, D_\textrm{core} \gets \textsc{CoreGroup}(S_{\mathrm{core}}, \{x_i\}_{i=n+1}^n)$
\STATE $\hat{\b} \gets \textsc{FitCox}(D_\textrm{core})$ \COMMENT{This fits the Cox model by minimizing the log partial likelihood. We assume $\hat{\b}=\b$ (Assumption~\ref{assm: core group})}
\STATE

\STATE \textbf{Phase 2:} Label which points should be excluded using the other half of the training data.
\FOR{$i = 1, \ldots, n$}
    \STATE $\ell_i \gets \I\{G(t_i; R_{\mathrm{core}}) \not\in [\uq(x), \, \oq(x)]\}$ \COMMENT{The lower and upper quantiles $\uq,\oq$ are defined in Lemma~\ref{thm: rej quantiles}.}
\ENDFOR
\STATE $X_\textrm{rej} \gets \{x_i \in X \: | \: \ell_i = 1\}$
\STATE

\STATE \textbf{Phase 3:} Approximate $R^*$.
\STATE $\bar{x} \gets \textsc{Mean}(\{x \: | \: (x, t) \in D_\textrm{core}\})$
\STATE $\hat{R} \gets \textsc{GrowBox}(\bar{x}, X_\textrm{rej}, \{s_j^\pm\}_{j=1}^d)$
\RETURN $\hat{R}$
\end{algorithmic}
\end{algorithm}

\subsection{Proof}
We now proceed with the proof of Theorem~\ref{thm: main}, which is broken down into five steps.

\begin{lem} \label{thm: time warp}
We may assume WLOG that $\lambda_0(t) \equiv 1$.
\end{lem}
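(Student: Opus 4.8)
The plan is to apply a monotone reparametrization of time---a ``time warp''---that preserves the order of all failure times while flattening the baseline hazard to $1$. Concretely, define the cumulative baseline hazard $\Lambda_0(t) = \int_0^t \lambda_0(u)\,du$. Under Assumption~\ref{assm: hazard func} (together with mild positivity of $\lambda_0$ on the support of the failure times), $\Lambda_0$ is a strictly increasing map, so $t \mapsto \Lambda_0(t)$ is order-preserving. I would first verify the elementary distributional fact that if $T(x)$ has hazard $\lambda_0(t)e^{h(x)}$, then the transformed time $\tilde{T}(x) = \Lambda_0(T(x))$ has survival function $\P(\tilde{T}(x) > s \mid X = x) = \exp(-e^{h(x)} s)$; that is, $\tilde{T}(x)$ follows a Cox model with the \emph{same} risk function $h$ (hence the same coefficients $\b$ on $R^*$) but with baseline hazard $\tilde{\lambda}_0 \equiv 1$.

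The crux is then to argue that Algorithm~\ref{alg: pipeline} produces identical output on the original and the time-warped data, so that establishing the theorem for $\lambda_0 \equiv 1$ establishes it in general. I would check the three phases in turn. For Phase 1 (Algorithm~\ref{alg: core group}): neighborhoods are determined by features alone; the partial-likelihood fit \eqref{eq: log partial likelihood} depends on the times only through the risk sets, i.e. through the failure \emph{order}; and the Cox EPE \eqref{eq: cox epe} depends on the times only through $Y = \I\{T \leq T'\}$ (the $T^*$-dependence cancels in the hazard ratio). Since $\Lambda_0$ is increasing, both the order and the labels $Y$ are unchanged, so $R_{\mathrm{core}}$, $D_{\mathrm{core}}$, and $\hat{\b}$ all coincide. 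For Phase 2, the rejection label uses $G(t_i; R_{\mathrm{core}})$ together with the quantiles $\uq, \oq$ derived from the CRS. The unconditional rank probabilities \eqref{eq: uncond rank prob} depend only on the features and the failure order, so the CRS and its quantiles are unchanged; moreover $G$ is a probability integral transform, and $\tilde{G}(\tilde{t}_i) = \P(\Lambda_0(T) \leq \Lambda_0(t_i) \mid X \in R_{\mathrm{core}}) = G(t_i)$, so the quantity being tested is itself preserved. Hence exactly the same points are rejected. Finally Phase 3 (Algorithm~\ref{alg: growing box}) operates purely on the features of the rejected set and the core-group mean, all of which are unchanged, so $\hat{R}$ is identical.

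It remains to note that the objects against which $\hat{R}$ is compared are also time-warp invariant. The ground-truth region $R^*$ is defined in Assumption~\ref{assm: r star} as the largest EPE-minimizing box, and well-specification on a region is the statement $h(x) = \b^\T x$ there---both conditions involve only $h$ and the feature distribution, which the warp leaves fixed, while the EPE is invariant by the Phase-1 argument. The volume-based precision/recall and the containment and volume conclusions of Theorem~\ref{thm: main} likewise live entirely in feature space. Thus $R^* \subseteq \hat{R}_n$ and the volume bound hold for the original model if and only if they hold after the warp, and we may assume $\lambda_0 \equiv 1$ throughout.

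I expect the main obstacle to be the bookkeeping in Phase 2: one must confirm both that the reference distribution $G$ and its CRS-derived quantiles transform compatibly---so that the membership $G(t_i) \in [\uq, \oq]$ is genuinely preserved, rather than merely each side being order-invariant in isolation---and that these quantiles really are functions of order and features only. The remaining steps reduce to the standard principle that the Cox partial likelihood and the concordance-type EPE are invariant under strictly increasing transformations of the event times.
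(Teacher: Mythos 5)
Your proposal is correct and takes essentially the same approach as the paper: define $\tilde{T} = \Lambda_0(T)$, compute its conditional survival function to be $\exp(-e^{h(x)}t)$ (so the warped times follow the same Cox model with $\tilde{\lambda}_0 \equiv 1$), and then invoke monotonicity of $\Lambda_0$ to conclude that all downstream quantities are unchanged. Your phase-by-phase verification---in particular that the partial likelihood, EPE labels, and CRS depend on times only through their order, and that $\tilde{G}(\tilde{t}_i) = G(t_i)$ so the rejection test is genuinely preserved---is simply a more careful spelling-out of the paper's one-sentence claim that ``all of the ranks will be preserved, so all of the results which hold for $T$ hold also for $\tilde{T}$.''
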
 
\begin{proof}
Consider the random variable $\tilde{T} = \int_0^T \lambda_0(s)\, ds$. Note that this is a monotonic change of variables since $\lambda_0(s) > 0$. In addition, note that the survival function $\tilde{S}(t,x)$ of $\tilde{T}$ conditional on features $X=x$ is given by
\begin{align}
\tilde{S}(t,x) &= \P(\tilde{T} \geq t \: | \: X=x) \nn \\
&= \P\l(\int_0^T \lambda_0(s)\, ds \geq t \: | \: X=x\r) \nn \\
&= \P(T \geq \Lambda_0^{-1}(t) \: | \: X=x) \nn \\
&= \exp\l( -e^{h(x)} \Lambda_0(\Lambda_0^{-1}(t)) \r) \nn \\
&= \exp(-e^{h(x)}t). \label{eq: time warp pf}
\end{align}
This implies that $\tilde{T}$ has hazard function $\tilde{\lambda}(t;x) = e^{h(x)}$, which in particular means that the baseline hazard function under this transformation is $\tilde{\lambda}_0(t) \equiv 1$. Since the transformation is monotonic, all of the ranks will be preserved, so all of the results which hold for $T$ hold also for $\tilde{T}$ and vice-versa.
\end{proof}

\begin{lem} \label{thm: upper bound overall fpr}
Let $p$ be an upper bound on the probability that a point $x\in R^*$ is rejected. If $p \leq \pI/n$, then the probability that \emph{any} point in $R^*$ is rejected is at most $\pI$.
\end{lem}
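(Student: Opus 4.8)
The plan is to prove this by a straightforward union bound over the datapoints lying in $R^*$, exploiting the fact that the rejection/expansion phase operates on only $n$ points. First I would fix the $n$ points $\{x_i\}_{i=1}^n$ used in Phase~2 of Algorithm~\ref{alg: pipeline}, and for each index $i$ with $x_i \in R^*$ define $A_i$ to be the event that $x_i$ is assigned rejection label $\ell_i = 1$. By the hypothesis of the lemma, each such event satisfies $\P(A_i) \leq p$.

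The event that \emph{some} point of $R^*$ is rejected is exactly $\bigcup_{i : x_i \in R^*} A_i$, so I would bound its probability by the union bound:
\[
\P\l( \bigcup_{i : x_i \in R^*} A_i \r) \leq \sum_{i : x_i \in R^*} \P(A_i) \leq \sum_{i : x_i \in R^*} p.
\]
Since at most $n$ of the Phase~2 points can lie in $R^*$, the right-hand side is at most $np$, and substituting the assumption $p \leq \pI / n$ gives the claimed bound $\P\l(\bigcup_i A_i\r) \leq \pI$.

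The main point to get right is conceptual rather than technical: the union bound requires no independence among the events $A_i$, which is essential here because the rejection labels are correlated through the shared (random) core group, the fitted coefficients $\hat{\b}$, and the marginal CDF $G(\cdot; R_{\mathrm{core}})$. The only bookkeeping needed is the count --- the rejection step uses the second half of the $2n$-point sample, so the number of candidate points in $R^*$ is at most $n$, which is exactly the factor the hypothesis $p \leq \pI/n$ is designed to cancel. I do not anticipate any substantive obstacle beyond applying the per-point bound $p$ uniformly over all relevant indices and correctly counting that at most $n$ points are eligible.
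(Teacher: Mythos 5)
Your proposal is correct and is essentially identical to the paper's proof: a union bound over the at most $n$ Phase-2 points lying in $R^*$, each rejected with probability at most $p \leq \pI/n$, giving total probability at most $\pI$. Your added remark that the union bound requires no independence among the rejection events is a fair point of emphasis, but it does not change the argument.
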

\begin{proof}
Since the size of the dataset is $n$, there are clearly at most $n$ points in $R^*$. The result then follows from a simple union bound over these points.
\end{proof}

\begin{lem} \label{thm: rej quantiles}
Define the lower and upper rejection quantiles
\[
\uq(x) = 1-\E_{X\sim \mathrm{Core}}\l[\l(1-\pt\r)^{e^{\b^\T(X-x)}}\r], \hspace{.25in}
\oq(x) = 1-\E_{X\sim \mathrm{Core}}\l[\l(\pt\r)^{e^{\b^\T(X-x)}}\r]
\]
respectively. Using these quantiles, the probability of incorrectly rejecting a point $x\in R^*$ is at most $p$, i.e., 
\[\P(G(T(x)) \not\in [\uq(x), \oq(x)]) \leq p.\]
\end{lem}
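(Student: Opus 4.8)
The plan is to compute the \emph{exact} distribution of the random variable $G(T(x))$ for a fixed test point $x \in R^*$ and to recognize it as a monotone reparametrization of a uniform random variable, whose $\pt$ and $1-\pt$ quantiles are precisely $\oq(x)$ and $\uq(x)$.

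First I would invoke Lemma~\ref{thm: time warp} to assume $\lambda_0(t) \equiv 1$, so that for any feature vector $z$ lying in the well-specified region, $T(z)$ is exponential with rate $e^{\b^\T z}$. Since the core group lies inside $R^*$ (Assumption~\ref{assm: core group}), the marginal core-group CDF takes the form
\[
G(t) = \P(T(X) \leq t \mid X \in R_{\mathrm{core}}) = 1 - \E_{X\sim\mathrm{Core}}\l[ e^{-e^{\b^\T X} t} \r],
\]
where the expectation runs over the core-group feature distribution only. For the fixed test point $x \in R^*$, its survival time $T(x)$ is itself $\mathrm{Exp}(e^{\b^\T x})$.

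The key step is a probability integral transform. The variable $s := e^{-e^{\b^\T x} T(x)}$ is uniform on $[0,1]$, and substituting $T(x) = -\log(s)/e^{\b^\T x}$ into $G$ collapses the nested exponentials via $e^{-e^{\b^\T X} T(x)} = s^{e^{\b^\T(X-x)}}$, yielding
\[
G(T(x)) = 1 - \E_{X\sim\mathrm{Core}}\l[ s^{e^{\b^\T(X-x)}} \r] =: g(s).
\]
Because every exponent $e^{\b^\T(X-x)}$ is strictly positive, $s \mapsto s^{e^{\b^\T(X-x)}}$ is strictly increasing on $[0,1]$, so $g$ is strictly decreasing. Comparing with the definitions, $\uq(x) = g(1-\pt)$ and $\oq(x) = g(\pt)$, so by monotonicity the event $G(T(x)) \in [\uq(x), \oq(x)]$ is exactly the event $s \in [\pt,\, 1-\pt]$. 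Since $s$ is uniform and $p < 1$, this interval is nonempty and has probability $1-p$; hence $\P(G(T(x)) \notin [\uq(x),\oq(x)])$ equals $p$, which is in particular at most $p$, as claimed.

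The main obstacle — essentially the only nontrivial point — is correctly disentangling the two sources of randomness: the expectation defining $G$ averages over the \emph{core-group} feature distribution, carrying its own rates $e^{\b^\T X}$, while the outer randomness comes from the \emph{test point's} survival time $T(x)$ with rate $e^{\b^\T x}$. Keeping these separate is precisely what allows the substitution $s = e^{-e^{\b^\T x}T(x)}$ to produce the clean power $s^{e^{\b^\T(X-x)}}$ and to match the stated quantiles on the nose. I would also flag that $p<1$ is needed for $[\uq(x),\oq(x)]$ to be a genuine interval, and that the strict positivity of the exponents is what guarantees $g$ is invertible so that the quantile identification is valid.
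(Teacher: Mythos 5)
Your proof is correct, but it is organized differently from the paper's. The paper splits the failure event into its two tails via a union bound and then derives each quantile separately: it inverts the exponential tail of $T(x)$, pushes the resulting condition through $G^{-1}$, and identifies $\uq(x)$ (resp. $\oq(x)$) as the largest (resp. smallest) value making each tail probability at most $\pt$. You instead apply a probability integral transform to the test point's own survival time, setting $s = e^{-e^{\b^\T x}T(x)} \sim \mathrm{Unif}[0,1]$, and observe that $G(T(x)) = g(s)$ with $g(s) = 1-\E_{X\sim\mathrm{Core}}\bigl[s^{e^{\b^\T(X-x)}}\bigr]$ strictly decreasing, so that $\uq(x) = g(1-\pt)$ and $\oq(x) = g(\pt)$ are exact quantiles of the law of $G(T(x))$. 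The underlying algebraic collapse $e^{-e^{\b^\T X}t} = s^{e^{\b^\T(X-x)}}$ is the same in both arguments (and both lean on Lemma~\ref{thm: time warp}), but your route buys something the paper's does not state: the rejection probability is \emph{exactly} $p$, not merely at most $p$, and you never need to reason about $G^{-1}$ or its invertibility. What the paper's tail-by-tail formulation buys in exchange is reusable machinery: the same $G^{-1}$ inequality chains are deployed again in Lemma~\ref{thm: lower bound good rej} for points \emph{outside} $R^*$, where $T(x)$ has rate $e^{h(x)} \neq e^{\b^\T x}$ and your clean uniformity argument no longer applies. Your flagged caveats (positivity of the exponents for strict monotonicity, $p<1$ for a nonempty interval) are the right ones and are harmless under the paper's choice $p = \pI/n$.
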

\begin{proof}
By a union bound, it suffices to choose $\underline{q}(x)$, $\overline{q}(x)$ such that
\[
\P(G(T(x)) < \underline{q}(x)) \leq p/2, \hspace{.25in} \P(G(T(x)) > \overline{q}(x)) \leq p/2.
\]

Let us start with the lower quantile. We will abbreviate $\underline{q}=\underline{q}(x)$. Given $x$, we wish to determine the maximum possible $\underline{q}$ such that
\[
\P(T(x) \leq G^{-1}(\uq)) \leq \frac{p}{2} \quad \Leftrightarrow \quad \P(T(x) > G^{-1}(\uq)) \geq 1-\pt.
\]
From Lemma~\ref{thm: time warp}, it suffices to consider the case where $\lambda_0(t) \equiv 1$. In this case, since $x\in R^*$, $T(x)$ is an exponential random variable with rate $e^{\b^\T x}$ and we can compute the tail probability explicitly. In particular, we have
\begin{align}
\phantom{\Leftrightarrow}\quad &\P(T(x) > G^{-1}(\uq)) = \exp(-e^{\b^\T x}G^{-1}(\uq)) \geq 1-\pt \\[10pt]
\Leftrightarrow \quad &-e^{\b^\T x} G^{-1}(\uq) \geq \log(1-\pt) \\[10pt]
\Leftrightarrow \quad &G^{-1}(\uq) \leq -\log(1-\pt)e^{-\b^\T x} \\[10pt]
\Leftrightarrow \quad &\uq \leq G\l( -\log(1-\pt)e^{-\b^\T x} \r) \\[10pt]
&\phantom{\uq} = \P_{T\sim \mathrm{Core}}\l(T \leq -\log(1-\pt)e^{-\b^\T x}\r) \\[10pt]
&\phantom{\uq} = \E_{X\sim \mathrm{Core}}\l[ 1 - \exp\l( -e^{\b^\T X} \cdot (-\log(1-\pt)e^{-\b^\T x}) \r) \r] \\[10pt]
&\phantom{\uq} = 1 - \E_{X\sim \mathrm{Core}}\l[(1-\pt)^{e^{\b^\T(X-x)}}\r]. \label{eq: lower rej quantile}
\end{align}
In particular, we can take $\uq(x)$ equal to the expression in \eqref{eq: lower rej quantile}. A similar calculation for the upper quantile yields the expression for $\oq(x)$.
\end{proof}

\begin{lem} \label{thm: lower bound overall fnr}
Let $\{\tilde{x}_i\}_{i=1}^m$ be $m$ points with independent survival times be such that 
\[
\P(\tilde{x}_i \textrm{ is rejected}) \geq \n
\]
for all $i$. If $\n \geq 1 - \pII^{1/m}$, then the probability that none of the $x_i$ are rejected is at most $\pII$.
\end{lem}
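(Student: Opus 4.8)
The plan is to reduce the statement to a one-line computation using independence, since the only probabilistic content is that the rejection decisions for distinct points do not interact. First I would introduce, for each $i$, the ``survival'' event $A_i = \{\tilde{x}_i \text{ is \emph{not} rejected}\}$. The hypothesis $\P(\tilde{x}_i \text{ is rejected}) \geq \n$ is exactly the statement that $\P(A_i) \leq 1 - \n$ for every $i$. The event whose probability we must bound is $\bigcap_{i=1}^m A_i$, i.e., that no point is rejected.

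The crux is that the $A_i$ are mutually independent. In the theoretical implementation (Algorithm~\ref{alg: pipeline}), the rejection label for a point is $\ell_i = \I\{G(t_i; R_{\mathrm{core}}) \notin [\uq(\tilde{x}_i), \oq(\tilde{x}_i)]\}$, which, once the core group $R_{\mathrm{core}}$ and the map $G$ are fixed (these are determined by the disjoint Phase 1 data), is a deterministic function of the single survival time $t_i = T(\tilde{x}_i)$. Since the points $\tilde{x}_i$ are fixed and their survival times are independent by hypothesis, the events $A_i$ are independent. Hence
\[
\P\l(\bigcap_{i=1}^m A_i\r) = \prod_{i=1}^m \P(A_i) \leq (1-\n)^m.
\]

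Finally I would substitute the threshold condition. The assumption $\n \geq 1 - \pII^{1/m}$ rearranges to $1 - \n \leq \pII^{1/m}$, so
\[
(1-\n)^m \leq \l(\pII^{1/m}\r)^m = \pII,
\]
which gives $\P(\text{no point rejected}) \leq \pII$ as claimed. I do not anticipate any real obstacle here: the argument is elementary once independence is in hand. The only point deserving a sentence of care is justifying that independence, namely that the rejection rule is a per-point function of the independent survival times with all shared objects ($R_{\mathrm{core}}$, $G$, the quantile functions) treated as fixed from the separate half of the data; this is precisely why the dataset was split in Algorithm~\ref{alg: pipeline}.
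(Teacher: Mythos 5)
Your proposal is correct and follows essentially the same route as the paper's proof: bound the probability that no point is rejected by $(1-\n)^m$ via independence of the survival times, then substitute $\n \geq 1 - \pII^{1/m}$ to get $(1-\n)^m \leq \pII$. Your extra sentence justifying the independence (the rejection rule being a fixed per-point function of the independent survival times, with the core group and $G$ determined by the disjoint Phase 1 data) is a detail the paper leaves implicit but does not change the argument.
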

\begin{proof}
By the independence of the survival times, the probability that none of the points are rejected is at most
\[
(1-\n)^m \leq (1-(1-\pII^{1/m}))^m = \pII
\]
as desired.
\end{proof}

\begin{lem} \label{thm: points near boundary}
Suppose $\{x_i\}_{i=1}^n$ are drawn i.i.d. from the uniform distribution on $\mathcal{B}\subseteq \R^d$ with $\mathrm{vol}(\mathcal{B}) = B < \infty$. Let $S \subseteq \mathcal{B}$ be any region such that $\mathrm{vol}(S) \geq C > 0$. Then $|\{x_i \: | \: x_i \in S\} \geq \frac{Cn}{2B}$ with probability at least $1-1/n$ for large enough $n$.
\end{lem}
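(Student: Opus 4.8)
The plan is to recognize the number of sample points falling in $S$ as a binomial random variable and to apply a standard lower-tail concentration bound. Let $N = |\{i : x_i \in S\}| = \sum_{i=1}^n \I\{x_i \in S\}$. Since the $x_i$ are i.i.d. uniform on $\mathcal{B}$, each indicator $\I\{x_i \in S\}$ is an independent Bernoulli random variable with success probability $p = \mathrm{vol}(S)/\mathrm{vol}(\mathcal{B}) = \mathrm{vol}(S)/B \geq C/B$. Hence $N \sim \mathrm{Binomial}(n, p)$ with mean $\mu = np \geq Cn/B$.

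The key observation is that the target threshold satisfies $Cn/(2B) \leq \mu/2$, so the event $\{N < Cn/(2B)\}$ is contained in $\{N < \mu/2\}$, and it suffices to control the latter. Applying the multiplicative Chernoff bound for the lower tail with deviation parameter $\delta = 1/2$ gives
\[
\P\l(N < \frac{\mu}{2}\r) \leq \exp\l(-\frac{\mu}{8}\r) \leq \exp\l(-\frac{Cn}{8B}\r).
\]

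It then remains only to verify that this bound is at most $1/n$ for large $n$. Since $C$ and $B$ are fixed positive constants, the exponent $Cn/(8B)$ grows linearly in $n$ whereas $\log n$ grows only logarithmically; thus $\exp(-Cn/(8B)) \leq 1/n$ holds for all $n$ larger than some threshold depending only on the ratio $C/B$. This completes the argument.

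There is no genuine obstacle here, as the statement is a routine concentration bound. The only points requiring minor care are (i) correctly identifying the per-point inclusion probability as $\mathrm{vol}(S)/B$ and verifying the lower bound $p \geq C/B$, and (ii) ensuring the threshold $Cn/(2B)$ sits below half the mean so that a single multiplicative Chernoff bound applies cleanly. A Hoeffding bound would serve equally well as an alternative: writing $\mu - Cn/(2B) \geq Cn/(2B)$, one obtains $\P(N \leq Cn/(2B)) \leq \exp(-C^2 n/(2B^2))$, which is likewise $o(1/n)$.
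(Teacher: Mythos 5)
Your proof is correct and takes essentially the same route as the paper's: identify the count of points in $S$ as a sum of i.i.d.\ Bernoulli indicators with success probability at least $C/B$, apply a standard concentration inequality, and observe that the resulting bound beats the required $1/n$ failure probability for large $n$. The only cosmetic difference is parameterization --- the paper applies Hoeffding with deviation $\sqrt{n\log n/2}$ (fixing the failure probability at $1/n$ and checking the deviation is $o(n)$), while you fix the deviation at half the mean and use a multiplicative Chernoff bound, a variant you yourself note is interchangeable with the Hoeffding version.
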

\begin{proof}
Since the $x_i$ are drawn from the uniform distribution, we have that $\P(x_i \in S) \geq C/B$. This means that $\I\{x_i \in S\} \sim \mathrm{Bernoulli}(q)$ with $q \geq C/B$. Thus by Hoeffding's inequality, with probability at least $1-1/n$ we have that
\[
|\{x \: : \: |u^\T x| < r\}| \geq qn - \sqrt{\frac{n\log n}{2}} \geq \frac{Cn}{2B}
\]
for large enough $n$.
\end{proof}

\begin{lem} \label{thm: lower bound good rej}
There exists $C_\e = O(\log \e^{-1})$ such that if $|h(x) - \b^\T x| \geq C_\e$, then the probability of rejecting $x$ is at least $\n$:
\[
\P(G(T(x)) \not\in [\underline{q}(x), \, \overline{q}(x)]) \geq \n.
\]
\end{lem}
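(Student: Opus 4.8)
The plan is to compute the rejection probability of $x$ exactly and then pick $C_\e$ large enough to push it above $\n$. By Lemma~\ref{thm: time warp} I may assume $\lambda_0 \equiv 1$, so $T(x)$ is exponential with rate $e^{h(x)}$ --- crucially the \emph{true} rate, which need not equal the Cox-predicted rate $e^{\b^\T x}$ driving the quantiles. From the proof of Lemma~\ref{thm: rej quantiles}, the quantiles have the explicit form $\uq(x) = G(-\log(1-\pt)\,e^{-\b^\T x})$ and $\oq(x) = G(-\log(\pt)\,e^{-\b^\T x})$. Since $G$ is a strictly increasing CDF, the events $\{G(T(x)) < \uq(x)\}$ and $\{G(T(x)) > \oq(x)\}$ are equivalent to $\{T(x) < -\log(1-\pt)e^{-\b^\T x}\}$ and $\{T(x) > -\log(\pt)e^{-\b^\T x}\}$. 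Evaluating the exponential tail at the true rate $e^{h(x)}$ and writing $\Delta = h(x) - \b^\T x$, the factor $e^{h(x)}e^{-\b^\T x} = e^\Delta$ makes both terms collapse into powers of $e^\Delta$, yielding the clean identity
\[
\P\big(G(T(x)) \notin [\uq(x),\oq(x)]\big) = \big(1 - (1-\pt)^{e^\Delta}\big) + (\pt)^{e^\Delta}.
\]

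Next I would use $|\Delta| \geq C_\e$ and split on the sign of $\Delta$. When $\Delta \geq C_\e$ (the point fails faster than predicted) the lower-tail term dominates; dropping the nonnegative upper-tail term and using that $s \mapsto (1-\pt)^s$ is decreasing gives a rejection probability of at least $1 - (1-\pt)^{e^{C_\e}}$. When $\Delta \leq -C_\e$ (fails slower) the upper-tail term gives at least $(\pt)^{e^{-C_\e}}$, using that $s \mapsto (\pt)^s$ is decreasing. It then suffices to choose $C_\e$ making both lower bounds at least $\n$, i.e.\ to solve $1 - (1-\pt)^{e^{C_\e}} \geq \n$ and $(\pt)^{e^{-C_\e}} \geq \n$. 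Taking logarithms, the first requires $e^{C_\e} \geq \log(1-\n)/\log(1-\pt)$ and the second $e^{C_\e} \geq \log(\pt)/\log(\n)$; both ratios are positive and exceed $1$ once $n$ is large enough that $0 < \pt < \n < 1$, so taking $C_\e$ equal to the log of the larger ratio completes the choice.

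To see that this $C_\e$ is $O(\log\e^{-1})$, I would substitute the scalings forced by the surrounding proof: $p = \pI/n$ (Lemma~\ref{thm: upper bound overall fpr}) and $\n = 1 - \pII^{1/m}$ (Lemma~\ref{thm: lower bound overall fnr}), where $m = \Theta(n\e)$ is the number of points in the width-$\Theta(\e)$ strip just outside a face of $R^*$ (Lemma~\ref{thm: points near boundary}, using that $\mathrm{vol}(\hat R_n \setminus R^*) \leq C'\e$ only requires rejecting a point within distance $\Theta(\e)$ of each face). For small $p,\n$ the binding lower-tail ratio behaves like $\log(1-\n)/\log(1-\pt) \approx \n/\pt = 2\n/p$, and since $\n \approx \log(\pII^{-1})/m = \Theta(1/(n\e))$ we get $2\n/p = 2\n n/\pI = \Theta(1/(\pI\e))$. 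The factors of $n$ cancel, leaving $C_\e = \log\Theta(1/\e) + O(1) = O(\log\e^{-1})$ with $\pI,\pII$ held fixed.

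The main obstacle is twofold. First, one must carefully evaluate the exponential tails at the true rate $e^{h(x)}$ while the quantiles remain pinned to the predicted rate $e^{\b^\T x}$; the whole effect is mediated by $\Delta$, and it is precisely the collapse to $(1-\pt)^{e^\Delta}$ and $(\pt)^{e^\Delta}$ that makes the argument transparent. Second, and more delicately, verifying the $O(\log\e^{-1})$ rate hinges on the asymmetry of the exponential tails --- the lower tail is the binding constraint --- together with the cancellation $\n n = \Theta(1/\e)$, so that the sample-size dependence of both $p$ and $\n$ disappears and the required effect size is independent of $n$.
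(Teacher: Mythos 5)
Your proof is correct, and its skeleton---the time-warp reduction, the split on the sign of $\Delta = h(x)-\b^\T x$, and the scalings $p=\pI/n$, $\n = 1-\pII^{1/m}$, $m=\Theta(\e n)$ in the rate check---matches the paper's; in fact your two conditions $e^{C_\e} \geq \log(1-\n)/\log(1-\pt)$ and $e^{C_\e} \geq \log(\pt)/\log(\n)$ are \emph{exactly} the paper's effect-size thresholds, once one decodes that the paper's $\log^2$ denotes the iterated logarithm $\log\log$. The genuine difference is in the middle step. The paper never writes the rejection probability in closed form: it works through $G^{-1}$, establishing the one-sided bounds $G^{-1}(\uq) \geq -\log(1-\pt)e^{-\b^\T x}$ and $G^{-1}(\oq) \leq -\log(\pt)e^{-\b^\T x}$ by a pointwise comparison of integrands inside the expectations defining the quantiles, and threads these through equivalences like $\P(G(T(x))\leq \uq)\geq\n \Leftrightarrow h(x) \geq \log\log\frac{1}{1-\n} - \log G^{-1}(\uq)$. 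You instead observe that the quantiles of Lemma~\ref{thm: rej quantiles} are \emph{exactly} $\uq(x) = G\l(-\log(1-\pt)e^{-\b^\T x}\r)$ and $\oq(x) = G\l(-\log(\pt)e^{-\b^\T x}\r)$ (valid under Assumption~\ref{assm: core group} plus the time warp, since $G$ is then a mixture of exponentials with rates $e^{\b^\T X}$), so strict monotonicity of $G$ cancels $G$ entirely and yields the exact identity $\P(G(T(x)) \notin [\uq(x), \oq(x)]) = \bigl(1-(1-\pt)^{e^\Delta}\bigr) + (\pt)^{e^\Delta}$. This is a real simplification: it reveals that the paper's inequalities are tight, and it makes the case analysis and the asymmetry of the two tails one-line observations rather than separate derivations. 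Two caveats on your rate verification: it uses first-order approximations where the paper's Lemma~\ref{thm: c epsilon value} uses inequalities valid for large $n$, and your assertion that the lower tail is the binding constraint---equivalently, that $\log\bigl(\log(\pt)/\log(\n)\bigr) = o(1)$ under the stated scalings---is correct but needs to be checked rather than asserted (it is precisely the second bound of that lemma, proved via Lemma~\ref{thm: difference of loglog}), since $C_\e$ must be taken as the maximum of the two logarithms.
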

\begin{proof}
By Lemma~\ref{thm: time warp}, it suffices to consider $\lambda_0(t)\equiv 1$ so that $T(x)$ is exponential with rate $e^{h(x)}$. We will use this fact throughout the proof. We will also make repeated use of the fact that $G$ and $G^{-1}$ are monotonically increasing functions without stating this explicitly.

By Lemma~\ref{thm: c epsilon value}, we have that $-\log^2(\frac{1}{1-p/2}) + \log^2(\frac{1}{1-\n})$ and $\log^2(2/p) -\log^2(1/\n)$ are both $O(\log \e^{-1})$. Thus, it suffices to show that if
\[
h(x) \leq \b^\T x - \log^2\frac2p +\log^2\frac1\n \quad \textrm{or} \quad h(x) \geq \b^\T x -\log^2(\frac{1}{1-p/2}) + \log^2(\frac{1}{1-\n})
\]
then the rejection probability is at least $\n$ for $x$. The value of $C_\e$ can then be chosen as the $O(\log\e^{-1})$ upper bound for these two quantities.

We will now consider two cases based on whether $h(x) > \b^\T x$ or $h(x) < \b^\T x$. In the former case, we use the fact that 
\[
\P(G(T(x)) \not\in [\underline{q}(x), \, \overline{q}(x)]) \geq \P(G(T(x)) < \underline{q}(x))
\]
and lower bound the RHS; similarly, when $h(x) < \b^\T x$, we use the fact that 
\[
\P(G(T(x)) \not\in [\underline{q}(x), \, \overline{q}(x)]) \geq \P(G(T(x)) > \overline{q}(x))
\]
and lower bound the RHS.

Let us first suppose $h(x) > \b^\T x$. Let $\uq=\uq(x)$. Then we have
\begin{align}
\P(G(T(x)) \leq \uq) \geq \n \quad \Leftrightarrow \quad &\P(T(x) > G^{-1}(\uq)) \leq 1-\n \\[10pt]
\Leftrightarrow \quad &\exp(-e^{h(x)}G^{-1}(\uq)) \leq 1-\n \\[10pt]
\Leftrightarrow \quad &e^{h(x)}G^{-1}(\uq) \geq -\log(1-\n) \\[10pt]
\Leftrightarrow \quad &h(x) \geq \log^2\l(\frac{1}{1-\n} \r) - \log G^{-1}(\uq). \label{eq: hazard lb 1}
\end{align}

Using the expression for $\uq=\uq(x)$ from Step 3, for any $t \geq 0$, we have
\begin{align}
&\P_{T\sim\mathrm{Core}}(T \leq t) \leq \uq \\
\Leftrightarrow \quad &\E_{X\sim\mathrm{Core}}\l[1-\exp(-e^{\b^\T X}t)\r] \leq 1-\E_{X\sim\mathrm{Core}}\l[(1-\pt)^{e^{\b^\T(X-x)}}\r] \\[10pt]
\Leftrightarrow \quad &\E_{X\sim\mathrm{Core}}\l[\exp\l(\log(1-\pt) e^{-\b^\T x} e^{\b^\T X}\r)\r] \leq \E_{X\sim\mathrm{Core}}\l[\exp(-te^{\b^\T X})\r]. \label{eq: pointwise ineq in expectation}
\end{align}
As long as $t \leq -\log(1-\pt)e^{-\b^\T x}$, the integrand on the LHS of \eqref{eq: pointwise ineq in expectation} is pointwise less than or equal to the integrand on the RHS. In particular, with $t$ equal to this upper bound we have $G(t) = \P_{T\sim\mathrm{Core}}(T \leq t) \leq \uq$, which implies that 
\[ G^{-1}(\uq) \geq -\log(1-\pt)e^{-\b^\T x} \quad \Longrightarrow \quad -\log G^{-1}(\uq) \leq \b^\T x - \log^2\l(\frac{1}{1-p/2}\r). \]
Thus we have
\begin{equation}
    h(x) \geq \b^\T x + \log^2\l(\frac{1}{1-\n} \r) - \log^2\l(\frac{1}{1-p/2}\r) \geq \log^2\l(\frac{1}{1-\n} \r) - \log G^{-1}(\uq).
\end{equation}
In particular, inequality~\eqref{eq: hazard lb 1} is satisfied, which implies that $\P(G(T(x)) \leq \uq(x)) \geq \n$ as desired.

Otherwise, suppose that $h(x) < \b^\T x$. Now we have
\begin{align}
\P(G(T(x)) > \oq) \geq \n \quad \Leftrightarrow \quad &\exp(-e^{h(x)}G^{-1}(\oq)) \geq \n \\[10pt]
\Leftrightarrow \quad &e^{h(x)}G^{-1}(\oq) \leq -\log\n \\[10pt]
\Leftrightarrow \quad &h(x) \leq \log^2\l(\frac{1}{\n} \r) - \log G^{-1}(\oq). \label{eq: hazard lb 2}
\end{align}
In order to lower bound $-\log G^{-1}(\oq)$, we wish to establish conditions on $t$ such that $G(t) = \P_{T\sim\mathrm{Core}}(T \leq t) \geq \oq$. Using the expression from ..., we have
\begin{align}
&\P_{T\sim\mathrm{Core}}(T \leq t) \geq \oq \\
\Leftrightarrow \quad &\E_{X\sim\mathrm{Core}}\l[1-\exp(-e^{\b^\T X}t)\r] \geq 1-\E_{X\sim\mathrm{Core}}\l[(\pt)^{e^{\b^\T(X-x)}}\r] \\[10pt]
\Leftrightarrow \quad &\E_{X\sim\mathrm{Core}}\l[\exp\l(\log(\pt)e^{-\b^\T x} e^{\b^\T X}\r)\r] \geq \E_{X\sim\mathrm{Core}}\l[\exp(-te^{\b^\T X})\r]. \label{eq: pointwise ineq in expectation 2}
\end{align}
Again, inequality~\eqref{eq: pointwise ineq in expectation 2} can be made to hold pointwise in the integrand provided that $t \geq -\log(\pt)e^{-\b^\T x}$. It therefore follows that
\[
G^{-1}(\oq) \leq -\log(\pt)e^{-\b^\T x} \quad \Longrightarrow \quad -\log G^{-1}(\uq) \geq \b^\T x - \log^2\l(\frac{2}{p}\r)
\]
Finally, we have
\[
h(x) \leq \b^\T x - \log^2\l(\frac{2}{p}\r) + \log^2\l(\frac1\n\r) \leq \log^2\l(\frac1\n\r) - \log G^{-1}(\oq).
\]
Thus \eqref{eq: hazard lb 2} holds and therefore $\P(G(T(x)) \geq \oq(x)) \geq \n$ as desired.
\end{proof}

To prove the desired result, we can now directly apply logic from the analogous proof in \cite{izzo23subgroup}. Specifically, since we have assumed that the core group lies within $R^*$ and we have shown that no points in $R^*$ will be rejected (Step 2), the logic of \cite{izzo23subgroup} shows that $R^* \subseteq \hat{R}_N$ given correct settings for the expansion speed $s_j^\pm$ of each side of the box. Similarly, the choice of $C_\e$ implies that there will be a rejected point within an $O(\e)$ distance from each face of $R^*$ (Steps 4 \& 5), meaning that each face of $\hat{R}_N$ will stop expanding within $O(\e)$ distance of the corresponding face of $R^*$ and yielding $\mathrm{vol}(\hat{R}_N \setminus R^*) = O(\e)$. This completes the proof.

\begin{lem} \label{thm: c epsilon value}
Let $p=\pI/n$, $\n=1-\pII^{1/m}$, and $m\geq c\e n$. Then as $n\rightarrow\infty$, we have
\[
- \log^2\l(\frac{1}{1-p/2}\r) + \log^2\l(\frac{1}{1-\n}\r)  = O(\log \e^{-1}),
\]
\[
\log^2\l(\frac2p \r) - \log^2\l(\frac1\n\r) = o(1) = O(\log \e^{-1}).
\]
\end{lem}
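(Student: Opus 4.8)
The plan is to substitute the prescribed values $p=\pI/n$ and $\n=1-\pII^{1/m}$ directly into the two expressions and expand the nested logarithms asymptotically, keeping careful track of which terms contribute at order $\log\e^{-1}$ and which vanish. Recall from the derivation preceding Lemma~\ref{thm: lower bound good rej} that the notation $\log^2(\cdot)$ here denotes the twice-iterated logarithm $\log\log(\cdot)$, so each quantity is a \emph{difference of double logarithms}; this double-log structure is exactly what produces the two different asymptotic orders. Note first that $m\geq c\e n$ together with $n\to\infty$ forces $m\to\infty$, so $\n=1-\pII^{1/m}\to 0$ and $p\to 0$, which justifies all of the expansions below.

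For the first quantity I would expand the inner logarithms as $n,m\to\infty$. Since $\log\tfrac{1}{1-p/2}=\tfrac{\pI}{2n}\,(1+o(1))$ and, exactly, $\log\tfrac{1}{1-\n}=\tfrac1m\log\tfrac{1}{\pII}$, applying the outer logarithm gives
\[
\log^2\Bigl(\tfrac{1}{1-p/2}\Bigr)=\log\pI-\log(2n)+o(1),\qquad \log^2\Bigl(\tfrac{1}{1-\n}\Bigr)=\log\log\tfrac{1}{\pII}-\log m.
\]
Taking the difference, the $\pI$- and $\pII$-dependent pieces are $O(1)$ constants and the surviving $n$-dependence collapses to $\log n-\log m=\log(n/m)$. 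Since $m\geq c\e n$ gives $n/m\leq 1/(c\e)$, we obtain $\log(n/m)\leq \log\e^{-1}+\log(1/c)$, so the first expression is bounded above by $\log\e^{-1}+O(1)=O(\log\e^{-1})$, as claimed.

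For the second quantity the same substitution yields inner logarithms $\log\tfrac{2}{p}=\log(2n/\pI)=\log n+O(1)$ and $\log\tfrac1\n=\log\bigl(m/\log\tfrac1\pII\bigr)+o(1)=\log m+O(1)$. The outer logarithm now demotes $\log n$ and $\log m$ to $\log\log n$ and $\log\log m$, so the difference becomes $\log\log n-\log\log m+o(1)$. Using $m\geq c\e n$, hence $\log m\geq \log n+\log(c\e)$ and (for large $n$) $\log n + \log(c\e)>0$, monotonicity of $\log\log$ gives
\[
\log\log n-\log\log m\;\leq\;\log\log n-\log\bigl(\log n+\log(c\e)\bigr)=-\log\!\Bigl(1+\tfrac{\log(c\e)}{\log n}\Bigr)\longrightarrow 0.
\]
Thus the second expression is $o(1)$, which is in particular $O(\log\e^{-1})$.

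The main obstacle is keeping the bookkeeping honest at the two different logarithmic scales, and this is really the whole point of the lemma. The single extra application of $\log$ separates the two cases: in the first expression the $\e$-dependence enters at the single-log level as $\log(n/m)\sim\log\e^{-1}$ and therefore survives, whereas in the second expression the identical $\log(n/m)$ contribution is pushed inside an outer logarithm, where it becomes an $O(1/\log n)$ correction to $\log\log n$ and vanishes. I would take care to verify that the $o(1)$ error terms from expanding $1-\pII^{1/m}$ and $1-p/2$ do not corrupt this cancellation, and to phrase the whole argument as an upper bound, which is all that is needed since $C_\e$ is chosen as an upper bound on both quantities in the proof of Lemma~\ref{thm: lower bound good rej}.
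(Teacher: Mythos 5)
Your proposal is correct and follows essentially the same route as the paper's proof: you correctly read $\log^2$ as the iterated logarithm, substitute $p=\pI/n$ and $\n=1-\pII^{1/m}$, control the inner logarithms (via $\log\frac{1}{1-z}\approx z$ and the exact identity $\log\frac{1}{1-\n}=\frac1m\log\pII^{-1}$), and observe that the first difference collapses to $\log(n/m)+O(1)\leq\log\e^{-1}+O(1)$ while in the second the same quantity is demoted by the outer logarithm to $-\log\bigl(1+\tfrac{\log(c\e)}{\log n}\bigr)\to 0$, which is exactly the computation the paper isolates as Lemma~\ref{thm: difference of loglog}. Your closing remark that only upper bounds are needed (since $C_\e$ is an upper bound in Lemma~\ref{thm: lower bound good rej}) matches the scope of the paper's own one-sided argument.
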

\begin{proof}
We begin with the first bound. Substituting for $p$, we have that
\[
-\log \log \frac{1}{1-\pI/2n} \leq -\log \log (1+\pI/2n) \leq -\log \frac{\pI}{4n}
\]
for large enough $n$. (Here we have used $\log(1+z) \leq z/2$ for small $z>0$.)
Substituting for $\n$ and $m$, we also have
\[
\log\log\frac{1}{1-\n} \leq \log \l(\frac1m \log\pII^{-1}\r) \leq -\log(c\e n) + \log^2 \pII^{-1}.
\]
It therefore follows that
\begin{align*}
- \log^2\l(\frac{1}{1-p/2}\r) + \log^2\l(\frac{1}{1-\n}\r) &\leq \log \frac{4n}{\pI} - \log(c\e n) + \log^2 \pII^{-1} \\
&= \log\e^{-1} + \log(4c\pI^{-1}) + \log^2\pII^{-1} \\
&= O(\log \e^{-1}).
\end{align*}

We now turn to the second bound. First, observe that $\pII^{1/m} = e^{\frac1m \log\pII} \geq 1 + \frac1m \log \pII$. We therefore have that
\[
-\log^2\l(\frac1\n\r) \leq -\log^2\frac{1}{1-\pII^{1/m}} \leq -\log^2\frac{m}{\log\pII^{-1}} \leq  -\log^2\frac{c\e n}{\log\pII^{-1}}.
\]
Thus, it follows that
\[
\log^2\l(\frac2p \r) - \log^2\l(\frac1\n\r) \leq \log^2\frac{2n}{\pI} - \log^2\frac{c\e n}{\log\pII^{-1}}.
\]
We can therefore apply Lemma~\ref{thm: difference of loglog} with $c_1 = 2/\pI$ and $c_2 = c\e/\log(\pII^{-1})$ to conclude that $\log^2\l(\frac2p \r) - \log^2\l(\frac1\n\r)\to 0$. In particular, this term is also $O(\log\e^{-1})$.
\end{proof}

\begin{lem} \label{thm: difference of loglog}
For any positive constants $c_1, c_2 > 0$, we have $\lim_{n\rightarrow\infty} \log^2(c_1n) - \log^2(c_2n) = 0$.
\end{lem}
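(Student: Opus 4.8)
The plan is to first pin down the notation: consistent with its usage throughout the proof of Lemma~\ref{thm: c epsilon value} (where, for instance, $\log^2(\tfrac{1}{1-p/2})$ is manipulated as $\log\log(\tfrac{1}{1-p/2})$), the symbol $\log^2(\cdot)$ denotes the \emph{iterated} logarithm $\log\log(\cdot)$, not the square of the logarithm. This reading is in fact forced by the statement itself: were $\log^2(cn) = (\log(cn))^2$, then $\log^2(c_1 n) - \log^2(c_2 n) = (\log c_1 - \log c_2)(\log c_1 + \log c_2 + 2\log n)$ would diverge whenever $c_1 \neq c_2$, contradicting the claimed limit of $0$. So the task reduces to showing $\lim_{n\to\infty}\bigl[\log\log(c_1 n) - \log\log(c_2 n)\bigr] = 0$.

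First I would restrict to $n$ large enough that $c_1 n, c_2 n > e$, so that $\log(c_i n) > 1 > 0$ and the outer logarithm is well-defined; since only the limit as $n \to \infty$ is at issue, this costs nothing. Next I would apply the elementary identity
\[
\log\log(c_i n) = \log\bigl(\log n + \log c_i\bigr) = \log\log n + \log\Bigl(1 + \frac{\log c_i}{\log n}\Bigr),
\]
valid for $i \in \{1,2\}$. Subtracting the two instances, the common term $\log\log n$ cancels exactly, leaving
\[
\log\log(c_1 n) - \log\log(c_2 n) = \log\Bigl(1 + \frac{\log c_1}{\log n}\Bigr) - \log\Bigl(1 + \frac{\log c_2}{\log n}\Bigr).
\]

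To conclude, I would note that as $n \to \infty$ we have $\log n \to \infty$, hence $\log c_i/\log n \to 0$ for each fixed $c_i$; by continuity of $\log$ at $1$, each right-hand term tends to $\log 1 = 0$, so their difference tends to $0$. There is no genuine obstacle here---the lemma is routine---and the only points meriting care are the notational interpretation above and the well-definedness caveat for small $n$. As a one-line alternative, one could instead apply the mean value theorem to $f(x) = \log\log x$, with $f'(x) = \tfrac{1}{x\log x}$, writing the difference as $f'(\xi)(c_1 - c_2)n$ for some $\xi$ between $c_1 n$ and $c_2 n$; since $f'(\xi) = O\!\bigl(\tfrac{1}{n\log n}\bigr)$, the product vanishes in the limit.
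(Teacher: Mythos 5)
Your proof is correct and follows essentially the same route as the paper's: both expand $\log(c_i n) = \log c_i + \log n$ and reduce the difference to $\log(1 + o(1)) \to 0$ by continuity of the logarithm, the only cosmetic difference being that the paper writes the difference as a single logarithm of a ratio while you factor out the common $\log\log n$ term (your mean-value-theorem aside is also valid). You are also right that $\log^2$ here must be read as the iterated logarithm $\log\log$, which is exactly how the paper's own proof treats it.
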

\begin{proof}
This is a straightforward calculation:
\[
\log^2(c_1n) - \log^2(c_2n) = \log\l(\frac{\log(c_1n)}{\log(c_2n)} \r) =  \log \l(\frac{\log c_1 +\log n}{\log c_2 + \log n} \r) =  \log \l(1 + \frac{\log c_1 - \log c_2}{\log c_2 + \log n} \r) \rightarrow \log(1+0) = 0.
\]
Note that we have used the continuity of the logarithm.
\end{proof}

\main*
\begin{proof}
Let $p = \pI/n$. By Lemma~\ref{thm: rej quantiles}, the probability that any point $x\in R^*$ is rejected is at most $p$. Since there are at most $n$ points in $R^*$ (as the total size of the dataset is $n$), by Lemma~\ref{thm: upper bound overall fpr}, the probability that any point in $R^*$ is rejected is at most $\pI$.

Since $R^*$ is an axis-aligned box, we can write $R^* = \prod_{j=1}^d [a_j, b_j]$. For any vector $x \in \R^d$, let $x[k]$ denote the $k$-th entry of $x$. As in \cite{izzo23subgroup}, for each $j=1,\ldots, d$, we define
\[
\partial R^*_{\e, j, +} = \{ x\in \R^d \: | \: b_j \leq x[j] \leq b_j + \e, \: a_k \leq x[k] \leq b_k, k \neq j\},
\]
\[
\partial R^*_{\e, j, -} = \{ x\in \R^d \: | \: a_j - \e \leq x[j] \leq a_j, \: a_k \leq x[k] \leq b_k, k \neq j\}.
\]
These are the sets of points which are at most $\e$ ``above'' the upper dimension $j$ face of $R^*$ and ``below'' the lower dimension $j$ face of $R^*$, respectively.

Fix a dimension $j$ and let $m$ be the number of points in $\partial R^*_{\e, j, +}$. By Lemma~\ref{thm: points near boundary}, $m\geq c\e n$ for some constant $c > 0$ with probability at least $1-1/n$. Let $\n = 1-\pII^{1/m}$, and let $C_\e = O(\log\e^{-1})$ be as defined in Lemma~\ref{thm: c epsilon value}. Then by the result of Lemma~\ref{thm: c epsilon value}, we have that each of the $m$ points in $\partial R^*_{\e, j, +}$ has probability at least $\n$ of being (correctly) rejected. By Lemma~\ref{thm: lower bound overall fnr}, the probability that no points in $\partial R^*_{\e, j, +}$ are rejected is at most $\pII$. The same logic clearly holds for $R^*_{\e, j, -}$ as well. By a union bound over $j=1,\ldots, d$ and over the sign $\pm$, we have that all of the $\partial R^*_{\e, j, \pm}$ contain a rejected point with probability at least $1-2d\cdot (\pII+1/n)$.

By Assumption~\ref{assm: core group}, the core group is a subset of $R^*$, hence its average $\bar{x}$ is contained in $R^*$.

For the remainder of the proof, we will condition on the event that $R^*$ contains no rejected points and all of the $\partial R^*_{\e, j, \pm}$ contain at least one rejected point. Again by a union bound, this occurs with probability at least
\[
1 - \pI - 2d\cdot (\pII + 1/n).
\]

We can now follow the logic of Theorem~4.3 of \cite{izzo23subgroup} with almost no modification, which we replicate here for convenience. Let $s_j^\pm = d(\bar{x}, \partial R^*_{j, \pm})$ be the distance from the core group center to the appropriate face of $R^*$. Note that Algorithm~\ref{alg: growing box} with these speeds and this center is equivalent to running the algorithm from the origin and with uniform speeds, after shifting the data so that $\bar{x}$ lies at the origin and then rescaling each positive and negative axis by $1/s_j^+$ and $1/s_j^-$, respectively. In this case, $R^*$ is transformed into a $\ell_\infty$ ball of radius 1 centered at the origin. We have also conditioned on the event that $R^*$ contains no rejected points, and the transformations we performed above preserve this fact. Since the region returned by Algorithm~\ref{alg: growing box} returns a region which contains the largest centered $\ell_\infty$ ball with no rejected points in it, and $R^*$ is a centered $\ell_\infty$ ball with no rejected points, we must have $R^* \subseteq \hat{R}$ as desired.

Because each $\partial R^*_{\e, j, \pm}$ contains at least one rejected point, Algorithm~\ref{alg: growing box} will stop growing the $(j,\pm)$ side of $\hat{R}$ at some point in $\partial R^*_{\e, j, \pm}$. In particular, this means that if the final region $\hat{R}_n = \prod_{j=1}^d [\ell_j, u_j]$ then we have 
\[
\ell_j \geq a_j - \e, \hspace{.25in} u_j \leq b_j + \e
\]
for all $j$. (If any of these inequalities were violated, then since $R^* \subseteq \hat{R}_n$, by convexity $\hat{R}_n$ would also have to contain a rejected point.) 
Thus we can conclude that $\mathrm{vol}(R^*\setminus \hat{R}_n) = O(\e)$. This completes the proof.
\end{proof}

\subsection{Discussion of the Result} \label{appendix: main thm discussion}
Regarding the convergence result itself, the natural point of comparison is with the convergence result of the version of DDGroup intended for use with linear regression in \cite{izzo23subgroup}. At first glance, the result of Theorem~\ref{thm: main} may appear weaker: given a fixed effect size ($C_\e$) and sufficiently many samples, we recover $R^*$ up to a fixed $O(\e)$ error in volume. In contrast, \cite{izzo23subgroup} obtains arbitrary precision given enough samples and at a fixed effect size. The reason for this disparity is that \cite{izzo23subgroup} assumed a lower bound on the \emph{variance} of $y|x$ outside of $R^*$; our assumption on the effect size is on the equivalent of the regression function. Since detections are performed based on the deviation of a single point, and the Gaussian fluctuations in \cite{izzo23subgroup} grow as $\s \log n$, where $\s$ is the standard deviation and $n$ the number of datapoints, their convergence result would also require a effect size in the regression function which increases with the dataset size $n$ (at least $\Omega(\log n)$) in order to obtain arbitrary precision with increasing sample size. Indeed, the proof of Theorem~\ref{thm: main} shows that $C_\e = O(\log\e^{-1})$, so to obtain an error of size $O(n^{-c})$ for some constant $c>0$ (as is the case in \cite{izzo23subgroup}), DDGroup adapted to the Cox model would also need an effect size of order $O(\log n)$. Thus, these two results are comparable in their recovery guarantees.

We also remark that the bounds on the effect size resulting from Lemma~\ref{thm: lower bound good rej} are not symmetric for the upper and lower tails. The effect size is determined by the amount which the parameter of an exponential distribution must be changed in order to move its upper or lower tails to lie above or below (respectively) the upper and lower quantiles $\uq(x)$ and $\oq(x)$. Although $\uq$ and $\oq$ are defined as symmetric upper/lower quantiles of the marginal core group survival distribution, the size of change required to move the upper and lower tails of the exponential distribution past a certain point is not symmetric. Thus, we should not expect the effect size required for upper/lower rejections to be the same.

Because of the asymmetry in the upper and lower tails, it is true that we can improve (reduce) the required effect size by balancing the thresholds required for upper and lower tail rejections. It is possible it will even lead to asymptotic improvement with respect to $\e^{-1}$, but since the point of the theory is just to get a sense of the algorithm's performance, we do not explore this further here.

\section{Discussion of Assumptions} \label{appendix: assumptions}
We briefly discuss the necessity and validity of the assumptions.

Relaxing Assumptions~\ref{assm: hazard func} (to include general hazard functions outside of $R^*$ and \ref{assm: censoring} (to include censoring) should be possible. As the present paper is the first to consider even the more basic forms of these questions, we leave these more challenging extensions to future work.

The existence of a ``good'' subgroup is dataset dependent. Assumption~\ref{assm: r star} guarantees that a ``good'' subgroup of the data exists, and we should only expect to have performance guarantees in such a setting.

Finally, Theorem~\ref{thm: crs convergence} guarantees that Assumption~\ref{assm: crs} is a reasonable approximation given sufficient data.

Assumption~\ref{assm: core group} is a stronger assumption at face value, so we discuss its necessity separately in the following subsections.

\subsection{Necessity of Assumption~\ref{assm: core group}}
We provide a negative result showing the necessity of Assumption~\ref{assm: core group}, i.e., that the core group selection is contained in $R^*$. The problem arises because subsets of the best \emph{overall} region do not necessarily obtain the minimum EPE among subsets of a fixed shape. 

Consider the following counterexample. The features are supported on $\cB=[-1, 1]$ and consist of three regions.

In $R^* = [-1, 0]$, $T|X$ follows a Cox model with moderate signal: $T|X$ has hazard function $\lambda(t, x) = e^{\b_1 x}$ when $x \in R^*$.

In $R^+=[1-\e, 1]$, $T|X$ also follows a Cox model with a larger signal, but which does not compensate for the smaller region size: $\lambda(t, x) = e^{\b_2 x}$ when $x\in R^+$ with $|\b_2| > |\b_1|$.

Finally, in $R^\circ = \cB \setminus (R^* \cup R^+) =(0, 1-\e)$, we have that $T|X \sim \mathrm{Unif}(\{0, \infty\})$.

Figure~\ref{fig: list decodable} shows the EPE for all possible regions with a grid size of $0.1$. For this example, we used $\e = 0.1$, $\b_1 = -4$, and $\b_2 = 25$ with $n=4000$ points for the whole dataset with which to compute Monte Carlo estimates for the EPE of each region. As in Figure~\ref{fig: epe counterex}, the cell with bottom-left corner at $(a,b)$ corresponds to the EPE for the region $[a,b]$. The color of each cell denotes the EPE value, given by the color bar on the right of the plot.

Due to the combination of a favorable feature distribution (a larger region, making the units easier to distinguish) and a moderate signal conditional on the features, the minimum EPE is obtained by $R^*=[-1, 0]$, denoted by the cyan outlined square. The black outlined squares denote intervals of size $0.1$. If we consider core groups of this size, then the minimum among such groups will be obtained by $R^+=[0.9, 1]$, denoted by the red outlined square. As this is not a subset of $R^*$, even after the expansion step, we will not obtain the correct region.

\begin{figure}
    \centering
    \includegraphics[width=0.5\linewidth]{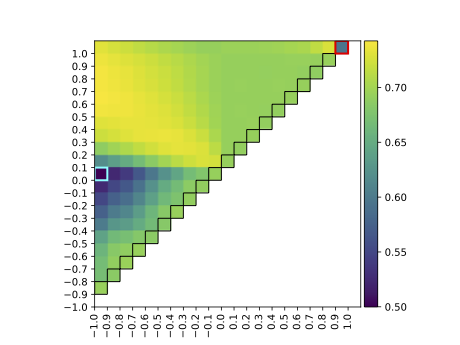}
    \caption{Counterexample showing the need for Assumption~\ref{assm: core group}. The cell with coordinates $(a, b)$ at its bottom-left corner denotes the region $[a, b]$. The subgroup $[-1, 0]$ minimizes the EPE and follows a Cox model (cyan boxed cell), but among the regions considered for the core group (black boxed cells), the interval $[0.9, 1]$ has the minimum EPE (red boxed cell).}
    \label{fig: list decodable}
\end{figure}

\subsection{Behavior of the Core Group Selection Procedure} \label{appendix: core group}
Our primary argument for the validity of Assumption~\ref{assm: core group} is via the empirical results on synthetic and real data. Nevertheless, for the sake of further understanding the behavior of the core group selection procedure, we also analyze a setting with ``lower level'' conditions on the data distribution where we can provide guarantees on the core group selection step. This is the content of Proposition~\ref{thm: sufficient for core}.

\begin{prop} \label{thm: sufficient for core}
Suppose that for all $t, x, x'$, we have 
\begin{equation} \label{eq: best pred}
    |\log \lambda(t, x) - \log \lambda(t, x')| \leq |\b^\T (x-x')|.
\end{equation}
Furthermore, suppose there exists $R^\circ \supseteq R^*$ and a constant $c < 1$ such that
\[
|\log \lambda(t, x) - \log \lambda(t, x')| \leq c|\b^\T (x-x')|
\]
whenever at least one of $x,x' \not\in R^\circ$.

Let $R_{\mathrm{cs}} \subseteq R^*$ be an axis-aligned box which is a subset of the ground truth region. (This box defines the core group shape, hence the subscript ``$\mathrm{cs}$'': core shape. For instance, $R_{\mathrm{cs}}$ may be the $\ell_\infty$ ball of radius $\e$ for some small $\e$.) Let $\cR = \{x_0 + R_{\mathrm{cs}} \: : \: x_0 \in \R^d, \; x_0 + R_{\mathrm{cs}} \subseteq B \}$ be the set of all translations of $R_{\mathrm{cs}}$ which are still contained in the support of the features, i.e., the set of possible core group regions. Then any minimizer of the EPE over $\cR$ must be a subset of $R^\circ$, i.e.,
\[ R \in \argmin_{R' \in \cR} \mathrm{EPE}(R') \quad \Longrightarrow \quad R \subseteq R^\circ. \]
\end{prop}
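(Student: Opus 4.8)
The plan is to compare the EPE of any candidate region against an \emph{idealized} EPE, namely the value the true global coefficients $\b$ would achieve, and to show that this idealized quantity is (i) a universal lower bound on the actual EPE, (ii) translation-invariant, (iii) attained exactly on regions inside $R^*$, and (iv) strictly undershot by the true EPE precisely when a region pokes outside $R^\circ$. Throughout I work under the uniform-feature assumption used elsewhere in the analysis (cf.\ Lemma~\ref{thm: points near boundary}), write $H$ for the function of Lemma~\ref{thm: peaked ce}, and set $\mathcal{E}(R) := \E[H(\b^\T(X-X')) \mid X, X' \in R]$.

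First I would establish the universal lower bound. Conditional on $X, X'$ and $T^*$, the label $Y$ is Bernoulli with parameter $\lambda(T^*,X)/(\lambda(T^*,X)+\lambda(T^*,X')) = 1/(1+e^{-(h(X)-h(X'))})$, which does not depend on $T^*$ since the baseline hazard cancels. Because the cross-entropy is a proper scoring rule (the same fact underlying Proposition~\ref{thm: proper scoring rule}), the EPE produced by \emph{any} Cox fit on $R$ is at least the entropy of this Bernoulli, so
\[ \mathrm{EPE}(R) \geq \E[H(h(X)-h(X')) \mid X, X' \in R]. \]
Applying the pointwise bound $|h(X)-h(X')| \leq |\b^\T(X-X')|$ from \eqref{eq: best pred}, together with the fact that $H$ decreases in $|z|$ (Lemma~\ref{thm: peaked ce}), then yields $\mathrm{EPE}(R) \geq \mathcal{E}(R)$ for every $R \in \cR$.

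Next I would pin down $\mathcal{E}$ and derive an equality at the minimizer. Under uniform features, if $X, X' \sim \mathrm{Unif}(x_0 + R_{\mathrm{cs}})$ then $X - X' = U - U'$ with $U, U' \sim \mathrm{Unif}(R_{\mathrm{cs}})$, whose law is independent of the offset $x_0$; hence $\mathcal{E}$ is constant across all translates of $R_{\mathrm{cs}}$, giving $\mathcal{E}(R) = \mathcal{E}(R_{\mathrm{cs}})$ for every $R \in \cR$. On the reference region $R_{\mathrm{cs}} \subseteq R^*$ the Cox model is exactly well-specified ($h(x) = \b^\T x$), so both inequalities above are equalities — the partial-likelihood fit recovers $\b$, which is also the EPE minimizer by Proposition~\ref{thm: proper scoring rule} — whence $\mathrm{EPE}(R_{\mathrm{cs}}) = \mathcal{E}(R_{\mathrm{cs}})$. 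Thus any EPE minimizer $R^{\min}$ over $\cR$ obeys $\mathrm{EPE}(R^{\min}) \leq \mathrm{EPE}(R_{\mathrm{cs}}) = \mathcal{E}(R_{\mathrm{cs}}) = \mathcal{E}(R^{\min})$, while the lower bound forces $\mathrm{EPE}(R^{\min}) \geq \mathcal{E}(R^{\min})$; so $\mathrm{EPE}(R^{\min}) = \mathcal{E}(R^{\min})$. It remains to show this equality fails when $R^{\min} \not\subseteq R^\circ$, which is the crux. If $R := R^{\min}$ contains a point outside the closed set $R^\circ$, then $R \setminus R^\circ$ has positive volume, so with positive probability the pair $(X,X')$ has a member outside $R^\circ$; there the second hypothesis upgrades the bound to $|h(X)-h(X')| \leq c\,|\b^\T(X-X')|$ with $c<1$, and strict monotonicity of $H$ gives $H(h(X)-h(X')) \geq H(c\,\b^\T(X-X')) > H(\b^\T(X-X'))$ whenever $\b^\T(X-X') \neq 0$ (a positive-probability sub-event for $\b \neq 0$ and continuous features). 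Integrating makes the lower bound strict, $\mathrm{EPE}(R^{\min}) > \mathcal{E}(R^{\min})$, contradicting the equality; hence every minimizer lies in $R^\circ$.

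The main obstacle I anticipate is making the strictness fully rigorous: verifying that $R \setminus R^\circ$ carries positive mass (requiring $R^\circ$ closed and the features to admit a density) and that $\{\b^\T(X-X') \neq 0\}$ occurs with positive probability on the relevant event, and — more substantively — justifying that the fitted Cox coefficients on $R_{\mathrm{cs}}$ genuinely \emph{realize} $\mathcal{E}(R_{\mathrm{cs}})$ rather than merely lower-bounding it, which leans on consistency of the partial-likelihood estimator combined with Proposition~\ref{thm: proper scoring rule} in the well-specified regime. The translation-invariance step also quietly relies on the uniform-feature assumption, so relaxing uniformity would require choosing the comparison region inside $R^*$ more carefully.
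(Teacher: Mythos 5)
Your proposal is correct and takes essentially the same route as the paper's proof: a cross-entropy-at-least-entropy (proper scoring) lower bound, translation invariance of $\E[H(\b^\T(X-X'))]$ across translates under uniform features, equality for regions inside $R^*$, and strictness for regions escaping $R^\circ$ via the positive-measure argument, the null set $\{\b^\T(x-x')=0\}$, $c<1$, and strict monotonicity of $H$. The only cosmetic differences are that the paper keeps the $T^*$-dependence of the true Bernoulli parameter (the proposition does not assume the hazard factors as $\lambda_0(t)e^{h(x)}$, so your ``baseline hazard cancels'' remark slightly overreaches, though it is inessential), and no partial-likelihood consistency is needed since the equality on $R_{\mathrm{cs}}$ is a population-level consequence of well-specification and proper scoring.
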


Before we give the proof, we note that the assumptions of Proposition~\ref{thm: sufficient for core} are incompatible with the assumptions for Theorem~\ref{thm: main}. In particular, it is easy to show that when $\lambda(t,x) = \lambda_0(t)e^{h(x)}$ and $|\lambda(t,x)-\lambda(t,x')|\leq|\b^\T(x-x')|$ for all $x,x'$, then $h(x)$ must be continuous in $x$. However, the assumptions of Theorem~\ref{thm: main} (specifically the combination of Assumption~\ref{assm: r star} and the requirement that $|h(x)-\b^\T x|\geq C_\e > 0$ for $x\not\in R^*$) imply a discontinuity in $h(x)$ at the boundary of $R^*$. Moreover, Proposition~\ref{thm: sufficient for core} only implies that the core group is a subset of $R^\circ$ rather than $R^*$. Thus, we emphasize that Proposition~\ref{thm: sufficient for core} is only meant to provide the reader with additional (theoretical) intuition for the behavior of the core group selection process.

\begin{proof}
First, note that the EPE for any core group $R = x_0 + R_{\mathrm{cs}} \subseteq R^*$ which is a subset of $R^*$ is equal to the following:
\begin{equation} \label{eq: min epe expression}
\mathrm{EPE}(R) = \E[H(\b^\T((X+x_0) - (X'+x_0)))] = \E[H(\b^\T(X-X'))] := \mathrm{EPE}^*.
\end{equation}
We can first show that this is indeed the minimum value of the EPE over $R \in \cR$. Let $R = x_0 + R_{\mathrm{cs}} \in \cR$. We have the following:
\begin{align} 
\mathrm{EPE}(R) &\geq \E\l[ H\l(\log\frac{\lambda(T^*, X+x_0)}{\lambda(T^*, X'+x_0)}\r) \r] \label{eq: epe lb} \\[10pt]
&\geq \E[ H(\b^\T((X+x_0) - (X'+x_0))) ] \label{eq: epe assumption used} \\[10pt]
&= \mathrm{EPE}^*. \nn
\end{align}
Inequality~\eqref{eq: epe assumption used} holds by combining \eqref{eq: best pred} with the fact that $H(z)$ is increasing as a function of $|z|$ by Lemma~\ref{thm: peaked ce}. Thus, core regions which are subsets of the ground truth minimize the expected EPE. It remains to show that if a core region is \emph{not} contained in $R^\circ$, then it must have a strictly larger EPE.

Suppose that $R \not\subseteq R^\circ$. Since $R$ and $R^\circ$ are both closed axis-aligned boxes, it is easy to see that in fact $R \setminus R^\circ$ must have positive Lebesgue measure. Choose $x+x_0 \in R \setminus R^\circ$. The set $S=\{x' \: : \: \b^\T (x-x') = 0\}$ has zero Lebesgue measure, so $(R \setminus R^\circ) \setminus S$ must also have positive Lebesgue measure and is therefore nonempty, so we can choose $x'+x_0 \in (R \setminus R^\circ) \setminus S$.

By Lemma~\ref{thm: peaked ce}, the entropy is (strictly) decreasing as a function of the absolute value of the logit. But note for $x+x_0$ and $x'+x_0$, and for any value of $t^*$, we have
\[ |\log \lambda(t^*, x+x_0) - \log \lambda(t^*, x'+x_0)| \leq c|\b^\T(x+x_0-(x'+x_0))| < |\b^\T(x-x')|, \]
where the final strict inequality holds because $\b^\T(x-x')\neq 0$ and $c<1$. Thus, in a neighborhood of $X=x$ and $X'=x'$, we have that
\[ H\l(\log\frac{\lambda(T^*, X+x_0)}{\lambda(T^*, X'+x_0)}\r) > H(\b^\T(X-X')). \]
In particular, with positive probability, the integrand in the lower bound \eqref{eq: epe lb} for the EPE of $R$ is strictly larger than the integrand in the expression \eqref{eq: min epe expression} for the minimum EPE, and it is bounded below by the integrand of $\mathrm{EPE}^*$ (though not necessarily strictly) by the logic of inequality~\eqref{eq: epe assumption used}. It follows that $\mathrm{EPE}(R) > \mathrm{EPE}^*$, completing the proof.
\end{proof}

\section{Experiment Details} \label{appendix: experiments}
In this section, we give additional details for reproducing the experiments, including complete algorithm implementations, definitions of additional metrics, dataset descriptions, and a brief discussion of compute resources.

\subsection{Algorithms} \label{appendix: algs expanded}
In this subsection we provide Python-esque pseudocode for the practical implementation of each algorithm. Each method takes the following arguments:
\begin{itemize}
    \item $X_{\textrm{adjust}}$: An $n\times d_1$ matrix containing the features to be used by the Cox model.
    \item $X_{\textrm{subgp}}$: An $n\times d_2$ matrix containing the features used to define the subgroup. Note that $X_{\textrm{adjust}}$ and $X_{\textrm{subgp}}$ can have the same features, partial overlap, or be completely disjoint. In our real data experiments, the features were generally disjoint. In the synthetic experiments, all features were used by both the Cox model and the subgroup definition.
    \item $Y$: A list of $n$ (event time, failure indicator) tuples, i.e., the $i$-th entry is $(t_i, \d_i)$.
    \item $B_{\textrm{subgp}}$: The bounding box for the subgroup features. In particular, all of the data have $x_{\mathrm{subgp}} \in B_{\textrm{subgp}}$, and the region $R$ returned by any method must have $R \subseteq B_{\textrm{subgp}}$.
\end{itemize}
Each method also accepts algorithm-specific hyperparameters, e.g., the core group size and rejection score quantile for DDGroup, or the maximum tree depth and minimum leaf size for the Cox tree. Given these inputs and valid hyperparameters, each method returns a region $R$ specifying the subgroup. The associated Cox model is always defined by fitting to all training data in $R$.

Some of the algorithms rely on subroutines for which we provide only a ``plain English'' description, rather than pseudocode. We refer readers to the associated code repository for the full implementations of all of the algorithms.

\subsubsection{Base}
\begin{python}
def base(X_adjust, X_subgp, Y, B_subgp):
    return B_subgp
\end{python}

\subsubsection{Random}
\begin{python}
def random(X_adjust, X_subgp, Y, B_subgp):
    n, d = X_subgp.shape
    
    selected_points = rng.sample(range(n), 2 * d) # Sample 2d points from X_adjust without replacement
    
    R_subgp = bounding_box(X_subgp[selected_points])
    
    return R
\end{python}

\subsubsection{Survival Tree}
\begin{python}
def survival_tree(X_adjust, X_subgp, Y, B_subgp, max_depth, min_samples_leaf):
    st = Tree(
        max_depth=max_depth,
        min_samples_leaf=min_samples_leaf,
        splitting_criterion=logrank_statistic
    )
    st.fit(X_adjust, X_subgp, Y) # Standard tree fitting procedure. Recursively choose splits of X_subgp features which minimizes the node impurity as measured by the logrank statistic.
    
    regions, betas = tree_to_bounding_boxes(st, depth, B) # Returns the bounding boxes and Cox models associated to each leaf of the tree.

    best_R, best_beta = argmin([EPE(X_adjust, X_subgp, R, beta) for R, beta in zip(regions, betas)]) # Compute the training EPE in each leaf and select the region with the lowest EPE.
    return best_R
\end{python}

\subsubsection{Cox Tree}
\begin{python}
def cox_tree(X_adjust, X_subgp, Y, B_subgp, max_depth, min_samples_leaf):
    ct = Tree(
        max_depth=max_depth,
        min_samples_leaf=min_samples_leaf,
        splitting_criterion=EPE
    )
    ct.fit(X_adjust, X_subgp, Y) # Standard tree fitting procedure. Recursively choose splits of X_subgp features which minimizes the node impurity as measured by the EPE.
    
    regions, betas = tree_to_bounding_boxes(ct, depth, B) # Returns the bounding boxes and Cox models associated to each leaf of the tree.

    best_R, best_beta = argmin([EPE(X_adjust, X_subgp, R, beta) for R, beta in zip(regions, betas)]) # Compute the training EPE in each leaf and select the region with the lowest EPE.
    return best_R
\end{python}

\subsubsection{PRIM}
Refer to \cite{friedman1999bump} for the original algorithm. PRIM relies on two subroutines: peeling and pasting. These can be understood as follows:
\paragraph{Peeling}
\begin{itemize}
\item Start with the full feature space (a large “box”).

\item Iteratively remove a small fraction $\a$ (e.g., 5\%) of the data from one face of the box. Each possible peel corresponds to shrinking the box along one variable by trimming off the lowest or highest values.

\item Choose the peel that maximally reduces the EPE within the remaining box.
\end{itemize}
This procedure is iterated until there are no peels which reduce the EPE, or until a minimum size is reached.

\paragraph{Pasting}
\begin{itemize}
\item Start from the bounding box obtained at the end of the peeling procedure.

\item Try \emph{adding} a small fraction $\a$ of the original data back to one face of the box. Each possible paste corresponds to expanding the box along one variable by including an $\a$ fraction of the data which is closest to that face of the box.

\item Choose the pasting operation which causes the greatest reduction in EPE. If no pasting operation reduces the EPE, terminate the algorithm and return the current region.
\end{itemize}

\begin{python}
def prim(X_adjust, X_subgp, Y, B_subgp, alpha, min_support_size):
    R = B_subgp.copy()
    current_metric_val = inf
    support = min_support_size * len(X_adjust) # min_support_size given as a fraction of the total dataset.

    # Peeling steps
    while len(in_region(X_subgp, R)) >= support:
        new_R, new_metric_val = peel(X_adjust, X_subgp, Y, R, alpha)

        if new_metric_val >= metric_val:
            break
        else:
            metric_val = new_metric_val
            R = new_R

    R = bounding_box(current_X_subgp)

    # Pasting steps
    while True:
        new_R, new_metric_val = paste(X_adjust, X_subgp, Y, R, alpha)
        if new_metric_val >= metric_val:
            break
        else:
            R = new_R
            metric_val = new_metric_val

    return R
\end{python}

\subsubsection{DDGroup}
Refer to \cite{izzo23subgroup} for the DDGroup algorithm in a linear regression context. We modify the algorithm by using the EPE for the core group selection step and the CRS for the rejection step (or the C-index/partial likelihood alternatives). We first give a high-level overview of the algorithm in Pythonic pseudocode.

\begin{python}
def ddgroup(X_adjust, X_subgp, Y, B_subgp, core_size, rejection_threshold, core_metric, rejection_metric):
    core_ind = core_group(X_adjust, X_subgp, Y, core_size, core_metric) # Computes core_metric on the core_size nearest neighbors of each point in the dataset. Returns the indices of the neighborhood with lowest value of core_metric.
    
    beta_hat = fit_cox(X_adjust[core_ind], Y[core_ind])
    
    scores = get_scores(X_adjust, Y, beta_hat, core_ind, rejection_metric) # Compute the rejection metric for each point in the dataset using the core group selected above.

    abs_threshold = quantile(scores, rejection_threshold)
    labels = scores < abs_threshold # Reject all points which are in the bottom rejection_threshold quantile of rejection scores.

    R = grow_region(X_subgp, labels, B_subgp, center=mean(X_subgp[core_ind], axis=0)) # Starting from the mean of the core group, expand the size of the bounding box until they collide with a rejected point.

    return R
\end{python}
DG is implemented with the EPE as the core metric and the CRS as the rejection metric. DG-CI is implemented with the C-index as the core metric and the C-index based rejection scores defined in Section~\ref{sec: algs}. DG-PL is implemented with the partial likelihood as the core metric and the partial likelihood-based rejections scores also defined in Section~\ref{sec: algs}. Finally, DG-NE just returns the bounding box of the core group.

\subsubsection{Why not include the change-plane Cox model?}
The change-plane Cox model, introduced by \cite{wei2018change}, nominally considers the same problem as our work: subgroup discovery with the Cox model. However, our problem setting differs from theirs in two critical ways.

The first is the manner in which the subgroups themselves are defined. \cite{wei2018change} defines two subgroups which are separated by a hyperplane. On the other hand, our work considers subgroups defined as axis-aligned boxes. This approach has precedent in previous works \citep{izzo23subgroup}. While both the hyperplane and box subgroups are reasonably interpretable, the box subgroups have the added advantage of being standard practice for specifying inclusion criteria for clinical trials, which is an important motivating use case for subgroup discovery. See \cite{friedman2015fundamentals} pg. 129 on defining patient strata (subgroups); this is precisely the axis-aligned box setting in our framework. Thus, from a purely algorithmic view, our setting and \cite{wei2018change} are incomparable---neither subsumes the other---but from a practical perspective, we believe the axis-aligned approach to be superior.

Another important difference between our work and \cite{wei2018change} is the modeling assumptions on the data. In particular, \cite{wei2018change} assumes a very strict relationship on the two subgroups: specifically, it is assumed that the hazards of the two discovered groups are proportional. In contrast, we make minimal assumptions outside of the region to be discovered other than that the Cox model does not fit it well. Due to the weaker assumptions, this should make our problem definition more challenging but also more practically relevant.

Regarding adapting \cite{wei2018change} method to our setting, we believe their proposed algorithm is fundamentally incompatible with our problem. There are two major issues. The first is that their algorithm critically hinges on constructing a ``sieve'' of potential change planes which define the possible subgroups. The vectors in this sieve consist of normalized eigenvectors of various conditional feature matrices; these eigenvectors will not be axis-aligned in general. Thus, a subgroup defined by the two sides of a plane defined by these vectors will not be an axis-aligned box, making it incompatible with our goal. Second, even if we ignore this difficulty, the way that the final subgroup selection is performed also relies critically on the strong modeling assumption that the two subgroups have hazards which are proportional to each other. As this assumption does not hold in our settings, the selection procedure becomes invalid. Given these many significant obstacles to employing the algorithm for our problem, we excluded it from the suite of algorithms tested.

\subsection{Additional Metric: Rejection Fraction} \label{sec: goodness of fit metrics}
Let $\{(x_i, y_i, \d_i)\}_{i=1}^m$ be a subgroup of points with corresponding Cox model $\b$. For each point $x_i$, we compute the CRS $\tau^*_i$ for $x_i$ with respect to the rest of the points in the subgroup, as defined in Section~\ref{sec: crs}. The \emph{rejection fraction at level $\a$} is then
\[
\frac1m\sum_{i=1}^m \I\{\tau^*_i < \a\},
\]
i.e., the fraction of points in the group whose CRS value is below $\a$. In the case of no censoring and a well-specified model (in which case the CRS value is actually equal to a tail probability), we would expect the rejection fraction to be roughly equal to $\a$. A significantly higher rejection fraction indicates that many of the survival time ranks are taking a more extreme value than what is predicted by the model, indicating a poor fit.

\subsection{Datasets} \label{appendix: datasets}

\paragraph{Table~\ref{tab: synth-counter}}
The data were generated according to the counterexample discussed in Section~\ref{sec: epe counterex}. Namely, the features are 1D and drawn uniformly from $[0, 1]$. The ground truth hazard function is
\[
\lambda(t, x) = \begin{cases}
e^{m x} & 0 \leq x < c \\
e^{m x - b} & c \leq x \leq 1
\end{cases}
\]
with $m=10$, $b=2$, and $c=0.4$. We generated $n=4000$ datapoints for the training data.

\paragraph{Table~\ref{tab: synth-nonlinear}}
This dataset used $n=4000$ datapoints, 2000 used for training and 2000 used for testing,
also in dimension $d=2$.
The ground truth region was $R^*=[-1/6^d, 1/6^d]^d$
and the ground truth Cox coefficients were $\b^*=10\cdot \mathbf{1}$, where $\mathbf{1}\in \R^d$ is the all-1s vector.
Given a feature vector $x\not\in R^*$, the nonlinear transformation we used to generate the data outside of $R^*$ was 
\[
\tilde{x}[1] = 10 \cdot \mathrm{sin}(100 \cdot x[1]^2), \quad \tilde{x}[2] = x[2],
\]
where $\tilde{x}[i], \, x[i]$ denote the $i$-th entries of the transformed features $\tilde{x}$ and the original features $x$, respectively. The survival times outside of $R^*$ were then generated from a Cox model with baseline hazard $\lambda_0(t) \equiv 1$ and Cox coefficients $\b^{\textrm{out}}=0.5\cdot \mathbf{1}$, but applied to the transformed features $\tilde{x}$.

\paragraph{METABRIC Study \citep{curtis2012metabric} (MBRIC)}
We used the version of this dataset from \cite{katzman2018deepsurv}, available at the associated \href{https://github.com/jaredleekatzman/DeepSurv/tree/master/experiments/data/metabric}{GitHub repository}.
The dataset has 1904 samples of which 1103 (57.9\%) are uncensored. We studied the MKI67 variable (MKI), which measures the expression of a gene which encodes the Ki-67 protein, a common cancer marker \citep{uxa2021metabricexplanation}.

The remaining real datasets are all provided by the sksurv Python package \citep{sksurv}.

\paragraph{AIDS Clinical Trial (AIDS)}
This dataset has 1151 samples and extremely high censoring; only 96 (8.3\%) of patients are uncensored. The endpoint is the onset of AIDS rather than death for this dataset. We studied three features:
\begin{itemize}
    \item CD4, a measurement of certain types of white blood cells;
    \item Karnofsky performance scale (Karnof), a measure introduced by \cite{karnofsky1948use} which measures a patient's ability to perform daily activities; and
    \item Prior ZDV use (prior), the number of months a patient has used the antiretroviral drug zidovudine.
\end{itemize}

\paragraph{German Breast Cancer Study Group 2 (GBSG2)}
This dataset has 686 samples of which 299 (43.6\%) are uncensored. We studied the tumor size feature (tsize).

\paragraph{Veterans' Administration Lung Cancer Trial (VLC)}
This dataset consists of 137 samples, of which 128 (93.4\%) are uncensored. We studied one feature, the Karnofsky score, which is a scale that measures a patient's ability to function during the progression of a disease \citep{karnofsky1948use}.

\paragraph{Worchester Heart Attack Study (WHAS)}
This dataset consists of 500 samples, of which 215 (43\%) are uncensored. We studied 3 distinct features: length of stay (los), systolic blood pressure (sysbp), and diastolic blood pressure (diasbp).

\subsection{Expanded Results} \label{appendix: extra results}
Here we report the results contained in Table~\ref{tab: real} including uncertainty as measured by standard error of the mean.

\begin{table}[h!]
\centering
\caption{Expanded real results.}
\resizebox{\textwidth}{!}{
\begin{tabular}{ll|ccccccccc}
\toprule
& & Base & Rand & PRIM & ST & CT & DG-PL & DG-CI & DG-NE & DG \\
\midrule
\rowcolor{gray!10}AIDS-CD4 & EPE & 0.54 (0.02) & 0.59 (0.13) & 0.53 (0.02) & 0.58 (0.10) & 0.50 (0.08) & 0.54 (0.09) & 0.59 (0.06) & 0.46 (0.09) & 0.43 (0.04) \\
& Rej@10\% & 0.07 (0.01) & 0.11 (0.05) & 0.06 (0.01) & 0.08 (0.03) & 0.12 (0.05) & 0.08 (0.03) & 0.10 (0.02) & 0.05 (0.02) & 0.07 (0.05) \\
\rowcolor{gray!10}&C-Index & 0.71 (0.01) & 0.72 (0.06) & 0.73 (0.01) & 0.71 (0.04) & 0.74 (0.04) & 0.72 (0.05) & 0.68 (0.03) & 0.75 (0.04) & 0.76 (0.03) \\
& Size & 1.00 (0.00) & 0.17 (0.03) & 0.87 (0.04) & 0.32 (0.11) & 0.17 (0.02) & 0.73 (0.14) & 0.55 (0.15) & 0.14 (0.01) & 0.20 (0.03) \\
\midrule
\rowcolor{gray!10}AIDS-Karnof & EPE & 0.62 (0.03) & 0.58 (0.11) & 0.62 (0.03) & 0.65 (0.07) & 0.72 (0.22) & 0.62 (0.04) & 0.72 (0.22) & 0.77 (0.22) & 0.38 (0.07) \\
& Rej@10\% & 0.09 (0.01) & 0.01 (0.01) & 0.15 (0.01) & 0.11 (0.03) & 0.19 (0.02) & 0.07 (0.01) & 0.06 (0.01) & 0.07 (0.02) & 0.01 (0.01) \\
\rowcolor{gray!10}&C-Index & 0.66 (0.03) & 0.68 (0.07) & 0.66 (0.03) & 0.64 (0.05) & 0.69 (0.07) & 0.66 (0.03) & 0.69 (0.07) & 0.67 (0.07) & 0.84 (0.05) \\
& Size & 1.00 (0.00) & 0.13 (0.01) & 0.96 (0.02) & 0.21 (0.09) & 0.44 (0.10) & 0.82 (0.12) & 0.65 (0.14) & 0.13 (0.01) & 0.15 (0.01) \\
\midrule
\rowcolor{gray!10}AIDS-prior & EPE & 0.70 (0.00) & 0.67 (0.04) & 0.70 (0.00) & 0.69 (0.05) & 0.68 (0.04) & 0.66 (0.03) & 0.70 (0.00) & 0.67 (0.05) & 0.65 (0.04) \\
& Rej@10\% & 0.14 (0.00) & 0.21 (0.05) & 0.06 (0.00) & 0.06 (0.00) & 0.06 (0.00) & 0.11 (0.03) & 0.14 (0.00) & 0.05 (0.00) & 0.13 (0.03) \\
\rowcolor{gray!10}&C-Index & 0.46 (0.02) & 0.55 (0.04) & 0.46 (0.02) & 0.56 (0.04) & 0.57 (0.04) & 0.52 (0.06) & 0.46 (0.02) & 0.57 (0.04) & 0.58 (0.06) \\
& Size & 1.00 (0.00) & 0.16 (0.02) & 0.94 (0.03) & 0.10 (0.01) & 0.17 (0.01) & 0.64 (0.15) & 1.00 (0.00) & 0.12 (0.01) & 0.16 (0.01) \\
\midrule
\rowcolor{gray!10}GBSG2-tsize & EPE & 0.68 (0.00) & 0.62 (0.04) & 0.68 (0.00) & 0.68 (0.02) & 0.63 (0.04) & 0.68 (0.00) & 0.65 (0.03) & 0.61 (0.04) & 0.61 (0.04) \\
& Rej@10\% & 0.14 (0.00) & 0.13 (0.03) & 0.08 (0.01) & 0.04 (0.01) & 0.05 (0.02) & 0.13 (0.02) & 0.11 (0.02) & 0.05 (0.02) & 0.07 (0.02) \\
\rowcolor{gray!10}&C-Index & 0.57 (0.01) & 0.62 (0.04) & 0.55 (0.01) & 0.57 (0.03) & 0.62 (0.04) & 0.58 (0.01) & 0.64 (0.05) & 0.64 (0.04) & 0.64 (0.04) \\
& Size & 1.00 (0.00) & 0.15 (0.02) & 0.94 (0.01) & 0.26 (0.07) & 0.11 (0.01) & 0.90 (0.10) & 0.39 (0.14) & 0.12 (0.01) & 0.12 (0.01) \\
\midrule
\rowcolor{gray!10}MBRIC-MKI & EPE & 0.69 (0.00) & 0.70 (0.01) & 0.69 (0.00) & 0.70 (0.01) & 0.72 (0.01) & 0.69 (0.00) & 0.70 (0.01) & 0.72 (0.01) & 0.69 (0.01) \\
& Rej@10\% & 0.08 (0.00) & 0.12 (0.01) & 0.08 (0.00) & 0.14 (0.01) & 0.13 (0.01) & 0.08 (0.00) & 0.09 (0.01) & 0.14 (0.01) & 0.11 (0.01) \\
\rowcolor{gray!10}&C-Index & 0.49 (0.01) & 0.50 (0.02) & 0.49 (0.01) & 0.52 (0.02) & 0.49 (0.02) & 0.49 (0.01) & 0.49 (0.02) & 0.49 (0.03) & 0.54 (0.01) \\
& Size & 1.00 (0.00) & 0.17 (0.02) & 0.84 (0.05) & 0.26 (0.04) & 0.25 (0.03) & 1.00 (0.00) & 0.64 (0.15) & 0.11 (0.01) & 0.38 (0.11) \\
\midrule
\rowcolor{gray!10}VLC-Karnof & EPE & 0.57 (0.02) & 0.42 (0.14) & 0.58 (0.02) & 0.45 (0.10) & 0.22 (0.05) & 0.32 (0.02) & 0.34 (0.06) & 0.19 (0.05) & 0.33 (0.06) \\
& Rej@10\% & 0.04 (0.01) & 0.12 (0.04) & 0.03 (0.01) & 0.07 (0.03) & 0.21 (0.04) & 0.11 (0.05) & 0.10 (0.03) & 0.20 (0.04) & 0.09 (0.03) \\
\rowcolor{gray!10}&C-Index & 0.69 (0.02) & 0.84 (0.06) & 0.68 (0.02) & 0.77 (0.07) & 0.93 (0.02) & 0.92 (0.03) & 0.87 (0.03) & 0.93 (0.02) & 0.87 (0.04) \\
& Size & 1.00 (0.00) & 0.17 (0.02) & 0.88 (0.05) & 0.23 (0.05) & 0.20 (0.02) & 0.23 (0.03) & 0.22 (0.02) & 0.18 (0.02) & 0.28 (0.08) \\
\midrule
\rowcolor{gray!10}WHAS-DBP & EPE & 0.67 (0.01) & 0.83 (0.07) & 0.67 (0.01) & 0.65 (0.06) & 0.64 (0.07) & 0.67 (0.03) & 0.55 (0.08) & 0.74 (0.09) & 0.62 (0.07) \\
& Rej@10\% & 0.07 (0.01) & 0.01 (0.01) & 0.13 (0.00) & 0.21 (0.03) & 0.16 (0.03) & 0.04 (0.01) & 0.02 (0.01) & 0.20 (0.03) & 0.01 (0.01) \\
\rowcolor{gray!10}&C-Index & 0.61 (0.01) & 0.51 (0.06) & 0.61 (0.01) & 0.68 (0.04) & 0.66 (0.06) & 0.64 (0.03) & 0.75 (0.07) & 0.63 (0.05) & 0.70 (0.06) \\
& Size & 1.00 (0.00) & 0.14 (0.01) & 0.98 (0.01) & 0.15 (0.02) & 0.16 (0.01) & 0.62 (0.15) & 0.20 (0.09) & 0.15 (0.02) & 0.12 (0.01) \\
\bottomrule
\end{tabular}
}
\end{table}

\subsection{Compute Resources}
Our experiments consisted of a large number of lightweight individual jobs--one per combination of (dataset, train/test split, subgroup discovery method, hyperparameter setting). We used an internal SLURM-managed compute cluster and only CPU hardware for these jobs, specifically, AMD EPYC 7542 and 7543 32-Core Processors, Intel(R) Xeon(R) Silver 4108 CPU @ 1.80GHz, and similar. No GPUs were required for these experiments.


\end{document}